\begin{document}


\author{\name Boris Muzellec \email boris.muzellec@inria.fr \\
       \name Francis Bach \email francis.bach@inria.fr \\
       \name Alessandro Rudi \email alessandro.rudi@inria.fr \\
       \addr INRIA Paris, 2 rue Simone Iff, 75012, Paris, France \\
			 ENS - Département d’Informatique de l’École Normale Supérieure, \\
		      PSL Research University, 2 rue Simone Iff, 75012, Paris, France}

\editor{\BM{editor}}

\title{Learning PSD-Valued Functions Using Kernel Sums-of-Squares}

\maketitle

\begin{abstract}
Shape constraints such as positive semi-definiteness (PSD) for matrices or convexity for functions play a central role in many applications in machine learning and sciences, including metric learning, optimal transport, and economics. Yet, very few function models exist that enforce PSD-ness or convexity with good empirical performance and theoretical guarantees. In this paper, we introduce a kernel sum-of-squares model for functions that take values in the PSD cone, which extends kernel sums-of-squares models that were recently proposed to encode non-negative scalar functions. We provide a representer theorem for this class of PSD functions, show that it constitutes a universal approximator of PSD functions, and derive eigenvalue bounds in the case of subsampled equality constraints. We then apply our results to modeling convex functions, by enforcing a kernel sum-of-squares representation of their Hessian, and show that any smooth and strongly convex function may be thus represented. Finally, we illustrate our methods on a PSD matrix-valued regression task, and on scalar-valued convex regression. 

\end{abstract}
\begin{keywords}
 Positive-definite matrices, kernels, sums of squares, convex functions
\end{keywords}

\section{Introduction}

Linear models, and kernel methods in particular, were proved over the past few decades to be of great effectiveness in machine learning, both in supervised and unsupervised learning~\citep[see, e.g.,][]{scholkopf2002learning}. Indeed, they offer an attractive trade-off between representation power, algorithmic simplicity, and theoretical guarantees. Moreover, they constitute a flexible toolbox that is not restricted to vector inputs or scalar outputs:  years of kernel design have resulted in kernels that may be used to learn on a vast diversity of data types, or that may go beyond scalar functions and have vector outputs~\citep{carmeli2010vector}.

Yet, linear models have had limited applications to tasks in which functions must satisfy a given shape constraint, such as monotonicity or convexity, on their whole domain, and not only on the training points. A notable exception is non-negativity: recently, \cite{marteau2020non} proposed a kernel sum-of-squares (SoS) model for smooth non-negative scalar functions with universal approximation properties, which was then applied to non-convex optimization by \cite{rudi2020global}, to the estimation of optimal transport distances by \cite{vacher2021dimension}, and to optimal control~\citep{berthier2021infinite}. Further, \cite{marteau2020non} propose extensions of their model to polyhedral cone constraints. However, some important classes of shape constraints may not be encoded as polyhedral cones: in particular, neither the cone of positive semi-definite (PSD) matrices nor the cone of convex functions are polyhedral. Yet, they are at the heart of numerous applications in machine learning and sciences: PSD and convexity constraints play a central role in optimal transport~\citep{brenier1991polar}, monopolistic games in economics~\citep{rochet1998ironing,mirebeau2016adaptive} and Newton's least resistance problem in physics~\citep{lachand2005minimizing}.

Recently, approaches to handle convexity constraints have been proposed within the deep learning community~\citep{amos2017input}, and were applied to optimal control~\citep{chen2018optimal} and optimal transport~\citep{makkuva2020optimal}, among others. However, due to lack of results regarding their approximation properties, those model allow very limited -- if any -- theoretical guarantees when applied to those problems. In the kernel world, \cite{aubin2021handling} proposed a framework to enforce shape constraints in RKHSs, which comes with performance guarantees. Their method relies on a tightening of cone constraints through compact coverings in vector-valued RKHSs (vRKHSs), i.e.\ they apply a finite number of SDP inequalities involving the Hessian $\bH_f$ of the learnt function $f$, of the type
$$
\eta_m \|f\|_\hh \eye_P \preceq \diag(\bb) + \bH_f(\tilde{x}_m),~~~ m \in [M],
$$
with suitable $\eta_m > 0$ and $\tilde{x}_m \in \Xcal$. This allows applying various shape constraints for scalar or vector-valued functions, such as convexity or monotonicity. Alternatively, \cite{curmei2020shape} proposed a method to fit polynomial functions under a class of shape constraints that subsumes convexity and monotonicity over box domains. Their approach relies on representing derivatives and Hessians as polynomial sum-of-squares matrices, and allows the authors to obtain statistical consistency results.

In comparison, we propose in this paper a linear model for matrix-valued functions that is guaranteed to take PSD values. This model may be seen as an extension of the sums-of-squares model of \cite{marteau2020non} to a ``sums-of-outer-products'' model that allows representing smooth functions taking values in the (non-polyhedral) PSD cone, or as an adaptation of \cite{curmei2020shape} to kernel (instead of polynomial) sums-of-squares. Further, our model inherits from the interpolation inequalities on constraints violations which where shown in the scalar case by \cite{rudi2020global}. Finally, while modeling PSD-valued functions may be of interest {\em per se}, our model may also be used to apply convexity (or concavity) constraints on scalar-valued functions by imposing {\em equality} constraints on the Hessian (compared to the inequality constraints proposed by \cite{aubin2021handling}), which again allows leveraging interpolation inequalities.

\paragraph{Contributions.}

 In the first part of this work, we propose a linear model for smooth PSD-valued functions that extends the SoS model of \cite{marteau2020non}, further studied by \cite{rudi2020global}. More precisely:
 
\begin{enumerate}
    \item We introduce a linear sum-of-squares (SoS) model for functions whose values are PSD matrices (\Cref{prop:lin_and_pos}). This model is linearly parameterized by a PSD operator in a Hilbert space, and is defined in terms of an arbitrary vector-valued feature map. In the main body of this work, this feature map is based on the canonical feature map of a scalar-valued RKHS ; hence our results do not require prior knowledge of vector-valued RKHSs. Then, for the sake of completeness, we introduce vector-valued RKHSs (vRKHSs) in \Cref{sec:vector_rkhs}, and provide proofs and statements of our results in the more general setting of vRKHSs in \Cref{sec:PSD_proofs,sec:cvx_proofs}.
    \item We provide a representer theorem (\Cref{thm:rep_thm}), that generalizes that of \cite{marteau2020non}  along with a finite-dimensional formulation for our model that only depends on training points, and derive the dual form of generic convex PSD-valued function learning problems (\Cref{thm:dual_formulation}). 
    \item We show in \Cref{thm:univ_approx} that the class of function we introduce is a universal approximator for PSD-valued functions, which justifies its use in a wide range of problems involving PSD constraints.
    \item Extending the results of \cite{rudi2020global} to the PSD setting, we provide scattered data inequalities for the eigenvalues of the PSD matrices output by PSD-valued kernel SoS functions in \Cref{thm:scattered_inequalities}. These bounds may be used to quantify constraint violations in the case where our model is used to subsample PSD constraints.
\end{enumerate}

In the second part of this paper, we apply our model to the problem of fitting a function under convexity constraints:

    \begin{enumerate}\setcounter{enumi}{4}
        \item In \Cref{thm:cvx_hess_function}, we show that the Hessian of any strongly convex function may be represented as a kernel SoS, which allows handling learning problems with convexity constraints using SoS equality constraints. 
        \item Leveraging \Cref{thm:scattered_inequalities}, we bound in \Cref{cor:scattered_convexity} the strong convexity constant of functions learned with subsampled SoS constraints, and bound the error induced by subsampling PSD constraints when optimizing a Lipschitz functional over smooth convex functions.
        \item Finally, we illustrate the SoS model on a convex regression task.
    \end{enumerate}
    
An implementation of our models is available at \url{https://github.com/BorisMuzellec/ConvexSoS}.

\section{Background and Setting}\label{sec:background}
We start by stating notation and recalling background material on reproducing kernel Hilbert spaces and positive-semidefinite operators.

\subsection{Notation}

For $n \in \NN$, $[n]$ denotes the set $\{1, 2, \dots, n\}$. For two Hilbert spaces $\hh$ and $\Gcal$  $\Lcal(\hh, \Gcal)$ denotes the set of bounded linear maps from $\hh$ to $\Gcal$, and we write $\Lcal(\hh) = \Lcal(\hh, \hh)$ for short. $\Ccal(\Xcal, \Ycal)$ denotes the set of continuous maps from $\Xcal$ to $\Ycal$, and we write $\Ccal(\Xcal) = \Ccal(\Xcal, \RR)$ for short. Likewise, for $m \in \NN_+$, $\Ccal^m(\Xcal, \Ycal)$ (resp.\ $\Ccal^m(\Xcal)$) denotes the set of $m$ times continuously differentiable functions from $\Xcal$ to $\Ycal$ (resp.\ $\Xcal$ to $\RR$). If $\bA$ is a symmetric matrix, $\la_{\max}(\bA)$ (resp.\ $\la_{\min}(\bA)$) denotes its largest (resp.\ smallest) eigenvalue.
For a function $f$ of $\dimone$ variables and a multi-index $\alpha = (\alpha_1, ..., \alpha_\dimone) \in \NN^\dimone$, we denote the corresponding partial derivative of $f$ (when it exists)
$$
\partial^\alpha f(x) = \frac{\partial^{|\alpha|}}{\partial x_1^{\alpha_1}\cdots\partial x_\dimone^{\alpha_\dimone}} f(x),
$$
where $| \alpha| \defeq \sum_{i=1}^\dimone \alpha_i$. Following \cite{rudi2020global}, for a function $u$ that is $m$ times differentiable on $\Xcal$, we define the following semi-norm: 
\begin{align}\label{eq:def_semi_norm}
|u|_{\Xcal, m} \defeq \underset{|\alpha| = m}{\max}~\underset{x \in \Xcal}{\sup}|\partial^\alpha u(x)|.
\end{align}

\subsection{Positive-Semidefinite Matrices and Operators}
Let $\hh$ be a Hilbert space with inner product $\dotp{\cdot}{\cdot}_\hh$. For a linear operator $A :\hh \rightarrow \hh$, we denote $A^*$ its adjoint, $\tr(A)$ its trace and $\|A\|_\HS = \sqrt{\tr (A^*A)}$ its Hilbert-Schmidt norm. When $\hh$ is finite-dimensional, $A$ can be identified with its matrix representation, in which case the Hilbert-Schmidt norm is equal to the usual Frobenius norm $\|\cdot\|_F$. We denote the set of bounded self-adjoint operators on $\hh$ by $\sym{\hh}$, in finite dimension this corresponds to the set of symmetric matrices. We say that $A$ positive semi-definite (PSD) and write $A \succeq 0$ if $A$ is bounded, self-adjoint and if for all $x \in \hh$, it holds $\dotp{x}{Ax} \geq 0$. If further it holds $\dotp{x}{Ax} = 0 \implies x = 0$, we say that $A$ is positive-definite (PD) and write $A \succ 0$. We respectively denote $\pdm{\hh}$ and $\pd{\hh}$ the set of PSD and PD operators on $\hh$. For a symmetric matrix with eigenvalue decomposition $\bM = \bU \Sigma \bU^T$, we define its positive and negative parts as $[\bM]_+ = \bU \max(0, \Sigma) \bU^T$ and $[\bM]_- = \max(0, -\Sigma)\bU^T$ (where the $\max$ is applied elementwise).  

\subsection{Kernels and Reproducing Kernel Hilbert Spaces (RKHS).}
We refer to \citet{steinwart2008support} and \citet{paulsen2016introduction} for a more complete covering of the subject. Let $\Xcal$ be a set and $k : \Xcal \times \Xcal \mapsto \RR$. $k$ is a positive-definite kernel if and only if for any set of points $x_1, ..., x_n \in \Xcal$, the matrix of pairwise evaluations $K_{ij} = k(x_i, x_j), i, j \in [n]$ is positive semi-definite.
Given a kernel $k$, there exists a unique associated reproducing kernel Hilbert space (RKHS) $\hh$, that is, a Hilbert space of functions from $\Xcal$ to $\RR$ satisfying the two following properties:
\begin{itemize}
    \item For all $x\in \Xcal$, $k_x \defeq k(x, \cdot) \in \hh$;
    \item For all $f \in \hh$ and $x\in \Xcal$, $f(x) = \dotp{f}{k_x}_\hh$. In particular, for all $x, x' \in \Xcal$ it holds $\dotp{k_x}{k_{x'}}_\hh = k(x, x')$.
\end{itemize}
A feature map is a bounded map $\phi : \Xcal \mapsto \hh$ such that $\forall x, x' \in \Xcal, \dotp{\phi(x)}{\phi(x')}_\hh = k(x, x')$. A particular instance is $x \mapsto k_x$, which is referred to as the canonical feature map.

A key advantage of RKHSs is that they may be used to translate function-fitting problems into finite-dimensional problems. Indeed, consider a minimization problem of the form
\begin{align}\label{eq:generic_rkhs_pb}
    \underset{f \in \Hcal}{\min}~~L(f(x_1), ..., f(x_n)) + \Omega(\|f\|_\hh),
\end{align}
where $\Omega : \RR_+ \mapsto \RR$ is a strictly increasing function. In that case, the representer theorem \citep[see, e.g.,][and references therein]{steinwart2008support,paulsen2016introduction} shows that solutions of $\eqref{eq:generic_rkhs_pb}$ admit the following finite representation: $f = \sum_{i=1}^n \alpha_i k_{x_i}$ for some $\alpha \in \RR^n$.

\section{Learning Positive-Operator-Valued Functions with Kernel Sums-of-Squares}\label{sec:pos_op}

In this section, we introduce a non-parametric model for functions that take values in $\pdm{\RR^\dimtwo}$, the space of $\dimtwo\times \dimtwo$ positive semi-definite matrices.\footnote{In particular, the dimension $\dimone$ of the inputs need not match that of the output matrices, $\dimtwo$.} This model has a simple expression in terms of the feature map $\phi$ of a RKHS of scalar-valued functions $\hh$. Let $\hh^\dimtwo = \{(f_1, \dots, f_\dimtwo) : f_i \in \hh, i \in [\dimtwo] \}$, and let $\pdm{\hh^\dimtwo}$ denote the space of positive semi-definite self-adjoint operators from $\hh^\dimtwo$ to $\hh^\dimtwo$.
Given a feature map $\phi:\Xcal \to \hh$, we define our model as 
\begin{equation}\label{eq:model_def_formal}
F_A(x) = \Phi(x)^*A \Phi(x),~~\text{with}~~A \in \pdm{\hh^\dimtwo},
\end{equation}
where $\Phi(x) \in \Lcal(\RR^\dimtwo, \hh^\dimtwo)$ and the positive semidefinite operator $A \in \pdm{\hh^\dimtwo}$ correspond to
\begin{equation}\label{eq:def_phi}
\Phi(x) = \underbrace{\begin{pmatrix} 
\phi(x) & 0_\hh & \cdots & 0_\hh\\
0_\hh & \phi(x) & \cdots & 0_\hh\\
\vdots &  \vdots & \ddots & \vdots\\
0_\hh & 0_\hh & \cdots & \phi(x)
\end{pmatrix}}_{\displaystyle \dimtwo}, \qquad A = \begin{pmatrix} 
A_{11} & \dots & A_{1\dimtwo} \\
\vdots & \ddots & \vdots\\
A_{\dimtwo 1} & \dots & A_{\dimtwo\dimtwo}
\end{pmatrix}, \quad A \succeq 0,
\end{equation}
with $A_{ij} \in \Lcal(\hh),$ $i, j \in [\dimtwo]$. Formally, for $x\in \Xcal$ and $v \in \RR^d$, $\Phi$ is defined by $\Phi(x)v= (v_1\phi(x), \dots v_d \phi(x))$. Note that \cref{eq:model_def_formal} can be written equivalently as
\begin{align}\label{eq:model_def_informal}
F_A(x) = \begin{pmatrix} 
\dotp{\phi(x)}{ A_{11}\phi(x)} & \dots & \dotp{\phi(x)}{ A_{1\dimtwo}\phi(x)} \\
\vdots & \ddots & \vdots\\
\dotp{\phi(x)}{ A_{\dimtwo 1}\phi(x)} & \dots & \dotp{\phi(x)}{ A_{\dimtwo\dimtwo}\phi(x)}
\end{pmatrix} \in \RR^{\dimtwo\times \dimtwo}, ~~\text{with}~~A  \succeq 0.
\end{align}
for any $x \in \Xcal$. One can easily check that that the model in \cref{eq:model_def_formal} is a  generalization of \citeauthor{marteau2020non}'s model for non-negative scalar functions to PSD-valued functions. As such, we will show that they enjoy many similar properties, starting with linearity w.r.t. the parameter $A$, and the fact that, by construction, the model $F_A(x)$ outputs a positive semidefinite matrix for any $x \in \Xcal$, since $A$ is positive semidefinite. The proof can be found in \Cref{proof:lin_and_pos}, \cpageref{proof:lin_and_pos}.
\begin{proposition}[Linearity and pointwise positivity]\label{prop:lin_and_pos}
Given $A, B \in \Lcal(\hh^\dimtwo)$ and $\alpha, \beta \in \RR$ it holds $F_{\alpha A + \beta B}(x) = \alpha F_A(x) + \beta F_B(x)$. Moreover, if $A\succeq 0,$ $F_A(x) \in \pdm{\RR^\dimtwo}, \forall x \in \Xcal.$
\end{proposition}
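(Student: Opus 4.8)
The plan is to unpack the definition $F_A(x) = \Phi(x)^* A \Phi(x)$ and verify the two claims by direct computation, both of which follow almost immediately from the bilinear structure of the map $A \mapsto \Phi(x)^* A \Phi(x)$.

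For linearity, I would fix $x \in \Xcal$ and note that $\Phi(x) \in \Lcal(\RR^\dimtwo, \hh^\dimtwo)$ does not depend on the operator parameter. Hence for any $A, B \in \Lcal(\hh^\dimtwo)$ and $\alpha, \beta \in \RR$, the associativity and distributivity of operator composition give
\[
F_{\alpha A + \beta B}(x) = \Phi(x)^*(\alpha A + \beta B)\Phi(x) = \alpha \,\Phi(x)^* A \Phi(x) + \beta\, \Phi(x)^* B \Phi(x) = \alpha F_A(x) + \beta F_B(x).
\]
This is entirely routine: the only thing worth spelling out is that $F_A(x)$ is identified with a $\dimtwo \times \dimtwo$ real matrix via the expression in \cref{eq:model_def_informal}, where entry $(i,j)$ is $\dotp{\phi(x)}{A_{ij}\phi(x)}_\hh$, and linearity in $A$ of each such entry is immediate from linearity of the inner product and of the block decomposition $A \mapsto A_{ij}$.

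For pointwise positivity, suppose $A \succeq 0$, i.e. $A$ is self-adjoint on $\hh^\dimtwo$ with $\dotp{u}{Au}_{\hh^\dimtwo} \geq 0$ for all $u \in \hh^\dimtwo$. I want to show $F_A(x) \in \pdm{\RR^\dimtwo}$. Symmetry: since $(\Phi(x)^* A \Phi(x))^* = \Phi(x)^* A^* \Phi(x)^{**} = \Phi(x)^* A \Phi(x)$ using $A^* = A$ and $\Phi(x)^{**} = \Phi(x)$, the matrix $F_A(x)$ is symmetric. Positivity: for any $v \in \RR^\dimtwo$, setting $u = \Phi(x)v \in \hh^\dimtwo$ (explicitly $u = (v_1\phi(x), \dots, v_\dimtwo \phi(x))$), we get
\[
v^\top F_A(x) v = \dotp{v}{\Phi(x)^* A \Phi(x) v}_{\RR^\dimtwo} = \dotp{\Phi(x)v}{A\,\Phi(x)v}_{\hh^\dimtwo} = \dotp{u}{Au}_{\hh^\dimtwo} \geq 0,
\]
where the middle equality is the defining adjoint relation for $\Phi(x)^*$. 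Hence $F_A(x) \succeq 0$.

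There is essentially no obstacle here; the statement is a structural consequence of the ``outer-product'' form of the model, mirroring \citeauthor{marteau2020non}'s scalar argument. The only point requiring a line of care is confirming that the abstract expression $\Phi(x)^* A \Phi(x) \in \Lcal(\RR^\dimtwo)$ coincides with the matrix in \cref{eq:model_def_informal}, so that ``$F_A(x) \in \pdm{\RR^\dimtwo}$'' is meaningful; this follows by computing the $(i,j)$ entry as $\dotp{e_i}{\Phi(x)^* A \Phi(x) e_j}_{\RR^\dimtwo} = \dotp{\Phi(x)e_i}{A\,\Phi(x)e_j}_{\hh^\dimtwo} = \dotp{\phi(x)}{A_{ij}\phi(x)}_\hh$, using that $\Phi(x)e_j$ places $\phi(x)$ in the $j$-th block.
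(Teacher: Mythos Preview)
Your proof is correct and follows essentially the same route as the paper's: linearity by direct distributivity of $\Phi(x)^*(\cdot)\Phi(x)$, self-adjointness from $A^* = A$, and positivity via $v^\top F_A(x)v = \dotp{\Phi(x)v}{A\Phi(x)v}_{\hh^\dimtwo} \geq 0$. The only cosmetic difference is that the paper writes the argument in the slightly more general matrix-valued kernel notation $K_x$ in place of $\Phi(x)$.
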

\begin{remark}[Definition as a sum-of-squares]
Following the nomenclature of \cite{rudi2020global}, we refer to the model in \cref{eq:model_def_formal} as a kernel sum-of-squares (SoS). This denomination is motivated by the following observation: let $A = \sum_{k \geq 0} \sigma_k u_k \otimes u_k$ denote the eigenvector decomposition of $A \in \pdm{\hh^d}$, where $\sigma_k \geq 0$ and $u_k \in \hh^d, k \in \NN$. Then, by the reproducing property, we have $F_A(x) = \sum_{k \geq 0} \sigma_k u_k(x)u_k(x)^T, x \in \Xcal$. Hence, $F_A(x)$ is an infinite sum of ``squares'' of the form $h_k(x) h_k(x)^T$, with functions $h_k$ belonging to the product of RKHS $\hh^d$.
\end{remark}
\begin{remark}[Matrix-valued kernels]
The construction of the semi-definite model \eqref{eq:model_def_formal} holds for more general choices of output spaces and of feature maps $\Phi$. In particular, we can model a function whose values are operators on a separable Hilbert space $\cal{Y}$ beyond $\mathbb{R}^d$, by using any operator-valued feature map $\Phi: \Xcal \to \Lcal(\cal{Y}, \hh \otimes {\cal Y})$ and an operator $A \in \pdm{\hh \otimes {\cal Y}}$, where $\otimes$ is the tensor product. Examples of such feature maps include feature maps associated to vector-valued reproducing kernel Hilbert spaces (vRKHS), which generalize RKHSs to vector-valued functions \citep[][see]{carmeli2010vector,micchelli2006universal}. Like RKHSs, vRKHSs also have an associated kernel function $K$, which takes values in the space of operators $\Lcal({\cal Y}, {\cal Y})$, and satisfies $K(x, x') = \Phi(x)^*\Phi(x')$. As an example, the feature map introduced in \cref{eq:def_phi} leads to the simple matrix-valued kernel $K(x,x') = k(x,x')\eye_d$. In the remainder of the paper, we will only consider this particular setting, but we extend our results to the general matrix-valued kernel setting in the appendix. We introduce vRKHSs in more detail in \Cref{sec:vector_rkhs}.
\end{remark}

\subsection{Representer Theorem, Finite-Dimensional Representation, and Duality}\label{subsec:rep_results}
In the previous section, we introduced a model for matrix-valued functions \eqref{eq:model_def_formal} that takes values on the PSD cone (\Cref{prop:lin_and_pos}) and is linear with respect to its parameter. However, this parameter is described as an infinite-dimensional positive operator, which is difficult to handle in practice. In this section, we show how to obtain finite-dimensional equivalents of \eqref{eq:model_def_formal} when working on problems for problems of the form \eqref{eq:gen_sampled_pb} that involve a finite number of evaluations of $F_A$ at some given points $\{x_1, ..., x_n\} \subset \Xcal$. More precisely, we start by proving a representer theorem that allows expressing $A$ using a finite-dimensional matrix (\Cref{thm:rep_thm}), which generalizes Theorem 1 of \citet{marteau2020non}. Then, as an alternative to the infinite-dimensional features $\Phi(x_1), ..., \Phi(x_n)$, we derive finite-dimensional features $\Psi_1, ..., \Psi_n$ (\Cref{subsubsec:finite_rep}). Finally, we provide a dual formulation for problems of the form \eqref{eq:gen_sampled_pb} that reduces the number of parameters to optimize by a factor $n$ (\Cref{thm:dual_formulation}).

\subsubsection{Representer theorem.} Following \cite{marteau2020non}, for regularized problems of the form 
\begin{align}\label{eq:gen_sampled_pb}
    \min_{A \in \pdm{\hh^d}} \ L(F_A(x_1), \dots, F_A(x_n)) + \Omega(A),
\end{align}
with
\begin{equation}\label{eq:reg_term}
\Omega(A) = \lambda_1 \tr A + \frac{\lambda_2}{2} \|A\|^2_{HS}, ~~\lambda_1, \la_2 \geq 0, 
\end{equation}
we expect that optimal solutions admit a finite-dimensional representation. This is shown in the following theorem, proven in \Cref{proof:rep_thm}, \cpageref{proof:rep_thm}. Note that \Cref{thm:rep_thm} may be extended to a more general class of regularizers based on spectral functions.
\begin{theorem}[Representer theorem]\label{thm:rep_thm}
Let $L:\pdm{\RR^d}^n \rightarrow \RR\cup\{+\infty\}$ be lower semi-continuous and bounded below. Let further $\Omega$ be as in \cref{eq:reg_term} with $\la_1 > 0$ or $\la_2 > 0$. Then \cref{eq:gen_sampled_pb} has a solution $A_\star$ which may be written
\begin{equation}
A_\star = \sum_{i,j = 1}^n \Phi(x_i)\bC_{ij}\Phi(x_j)^*,
\end{equation}
with $\bC_{ij} \in \RR^{d\times d}, i, j \in [n]$, and $\bC = \left(\begin{smallmatrix} \bC_{11} & \dots & \bC_{1n}\\ \vdots & & \vdots \\  \bC_{n 1} & \dots & \bC_{nn} \end{smallmatrix}\right) \in \pdm{\RR^{nd}}$.  
Then, $F_{A_\star}$ can be equivalently written in a finite form in terms of $\bC_{ij}$, as follows,
\begin{equation}\label{eq:discrete_form}
    F_C(x) = \sum_{i,j = 1}^n k(x_i, x)k(x_j, x)\bC_{ij}, ~~x \in \Xcal.
\end{equation} 
If further $L$ is convex and $\lambda_2 > 0$, $A_\star$ is unique.
\end{theorem}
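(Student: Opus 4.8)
The plan is to adapt the classical representer-theorem argument to the operator-valued setting, using the key structural fact that $F_A(x)$ depends on $A$ only through its ``sandwiching'' by the finite collection $\Phi(x_1),\dots,\Phi(x_n)$. Concretely, let $\Vcal \subseteq \hh^d$ be the (finite-dimensional) subspace spanned by the columns of $\Phi(x_1),\dots,\Phi(x_n)$, i.e.\ $\Vcal = \operatorname{span}\{\, e_j \otimes \phi(x_i) : i\in[n], j\in[d]\,\}$, and let $P$ be the orthogonal projection onto $\Vcal$. Since $\Phi(x_i) = P\,\Phi(x_i)$ for every $i$, we have $F_A(x_i) = \Phi(x_i)^* A \Phi(x_i) = \Phi(x_i)^* (P A P) \Phi(x_i) = F_{PAP}(x_i)$ for all $i\in[n]$, so replacing $A$ by $PAP$ leaves the loss term $L$ unchanged.

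Next I would argue that this replacement does not increase the regularizer, and is feasible. For feasibility, $A\succeq 0$ implies $PAP\succeq 0$ (a compression of a PSD operator is PSD). For the regularizer, write $A$ in block form relative to the decomposition $\hh^d = \Vcal \oplus \Vcal^\perp$; then $\tr(PAP) \le \tr(A)$ and $\|PAP\|_{HS} \le \|A\|_{HS}$, since dropping the off-diagonal and $\Vcal^\perp$-blocks of a PSD operator can only decrease the trace and the Hilbert–Schmidt norm (the diagonal blocks of a PSD operator are themselves PSD, so their traces are nonnegative, and the Frobenius/HS norm is monotone under zeroing out blocks). Hence $\Omega(PAP)\le\Omega(A)$, so if a minimizer $A$ of \cref{eq:gen_sampled_pb} exists, then $A_\star \defeq PAP$ is also a minimizer and lives in $\Lcal(\Vcal)$. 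Existence of a minimizer itself follows from the lower semicontinuity and lower-boundedness of $L$ together with the coercivity of $\Omega$ (when $\la_1>0$ or $\la_2>0$), restricting attention from the start to operators supported on $\Vcal$, which is finite-dimensional, so sublevel sets are compact; this is a standard Weierstrass argument. Writing $A_\star \in \Lcal(\Vcal)$ in the (non-orthogonal) spanning frame $\{e_j\otimes\phi(x_i)\}$ gives exactly $A_\star = \sum_{i,j}\Phi(x_i)\bC_{ij}\Phi(x_j)^*$ for some coefficient matrix $\bC$, and the PSD-ness of $A_\star$ translates into $\bC\in\pdm{\RR^{nd}}$ after passing through the Gram matrix $K\otimes\eye_d$ with $K_{ij}=k(x_i,x_j)$ (one may need to take $\bC$ acting on the range of $K\otimes\eye_d$, or invoke that $\bC$ can be chosen PSD because $A_\star\succeq0$). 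Plugging this form into \cref{eq:model_def_formal} and using $\Phi(x_j)^*\Phi(x) = k(x_j,x)\eye_d$ yields the finite formula $F_C(x) = \sum_{i,j} k(x_i,x)k(x_j,x)\,\bC_{ij}$ of \cref{eq:discrete_form}.

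For the uniqueness claim, assume $L$ convex and $\la_2>0$. Then the objective in \cref{eq:gen_sampled_pb} is convex (composition of convex $L$ with the linear maps $A\mapsto F_A(x_i)$, plus convex $\Omega$) and, crucially, \emph{strictly} convex in $A$ because $\frac{\la_2}{2}\|A\|_{HS}^2$ is strictly convex on all of $\Lcal(\hh^d)$; strict convexity of the objective on the convex feasible set $\pdm{\hh^d}$ forces the minimizer to be unique.

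The main obstacle I anticipate is not the projection/compression argument — that is the standard representer-theorem mechanism — but rather two more delicate points: first, carefully justifying existence of a minimizer in the infinite-dimensional cone $\pdm{\hh^d}$ (handled cleanly by restricting to the finite-dimensional $\Vcal$ before optimizing, using coercivity of $\Omega$), and second, the bookkeeping that converts ``$A_\star$ supported on $\Vcal$ and PSD'' into ``$\bC\in\pdm{\RR^{nd}}$'' when the spanning frame $\{e_j\otimes\phi(x_i)\}$ is linearly dependent (i.e.\ when $K$ is singular). The clean way around the latter is to factor through an orthonormal basis of $\Vcal$, express $A_\star$ as $V D V^*$ with $D\succeq0$ in that basis, and then absorb the change of basis into $\bC$; since any PSD $D$ can be written with a PSD coordinate matrix relative to the (possibly overcomplete) frame, one recovers $\bC\succeq0$. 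I would also remark, as the theorem statement hints, that replacing $\tr A$ and $\|A\|_{HS}^2$ by a general spectral function $\sum_k g(\sigma_k(A))$ with $g$ increasing leaves the argument intact, since such $\Omega$ is also unitarily invariant and monotone under PSD compression.
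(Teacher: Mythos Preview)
Your proposal is correct and follows essentially the same approach as the paper: project $A$ onto the finite-dimensional span $\Vcal=\hh_K^n$ (the paper's $\Pi_n$), observe that the loss is unchanged and $\Omega$ does not increase, obtain existence by restricting to this finite-dimensional space and using coercivity of $\Omega$, and finally convert to the matrix form with a PSD coefficient block matrix. The only place the paper is more explicit is the last step, where it constructs an operator $O_n = S_n^* V(VV^\top)^{-1}$ (with $V^\top V = \bK$) satisfying $O_nO_n^* = \Pi_n$ and $O_n^*O_n = \eye_r$, so that $\Pi_n A\Pi_n = S_n^* (T_n A T_n^*) S_n$ with $T_n A T_n^* \succeq 0$ directly---this is exactly the ``factor through an orthonormal basis of $\Vcal$'' maneuver you describe, made concrete.
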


\subsubsection{Finite-dimensional representation.}\label{subsubsec:finite_rep}

The representation of \Cref{thm:rep_thm} can be further simplified. Indeed, let $\bK \in \RR^{n\times n}, K_{ij} = k(x_i, x_j), i, j \in [n]$
and
$\tilde{\bK} = \bK \otimes \eye_d \in \RR^{nd \times nd}$, where $\otimes$ denotes the Kronecker product.
By cyclical invariance of the trace and using the fact that $\Phi(x_j)^\star\Phi(x_i) = k(x_i, x_j) \eye_d$, we have
\begin{align}
\tr(A_\star)  = \sum_{i,j=1}^n \tr(\bC_{ij} \Phi(x_j)^* \Phi(x_i))
= \sum_{i,j=1}^n k(x_i, x_j) \tr\bC_{ij}
= \tr (\tilde{\bK}\bC),
\end{align}
and for $\ell \in [n]$,
\begin{align}
\begin{split}
F_{A_\star}(x_\ell) &= \sum_{i,j=1}^n k(x_i, x_\ell) k(x_j, x_\ell) \bC_{ij} 
=  (\tilde{\bK}\bC\tilde{\bK})_{\ell, \ell},
\end{split}
\end{align}
where $(\tilde{\bK}\bC\tilde{\bK})_{\ell, \ell} \in \RR^{d \times d}$ denotes the $\ell$-th $d \times d$ block matrix on the diagonal of $\tilde{\bK}\bC\tilde{\bK}$. In particular, if $\bR \in \RR^{n \times n}$ denotes the upper-triangular Cholesky factor of $\bK$ (i.e.\ $\bR$ is the upper triangular matrix satisfying $\bR^T\bR = \bK$), then $\tilde{\bR} = \bR \otimes \eye_d$ is the upper-triangular factor of $\tilde{\bK}$. Letting $\bB = \tilde{\bR} \bC \tilde{\bR}^T$, we then have
$$\tr A_\star = \tr \bB, \qquad F_{A_\star}(x_\ell) = \Psi_\ell^T \bB \Psi_\ell, ~~ \ell \in [n],$$
where $\Psi_\ell = \bR_{:, \ell} \otimes \eye_d \in \RR^{nd \times d}$ denotes the $\ell$-th $nd \times d$ block column of $\tilde{\bK}$ and $\bR_{:, \ell}$ is the $\ell$-th column of $\bR$. Further, the model $F_{A_\star}$ has the following finite dimensional representation
\begin{equation}\label{eq:finite_dimensional_form}
F_B(x) = \Psi(x)^T\bB\Psi(x),
\end{equation}
with 
$\Psi(x) = (\bR^{-T} v(x)) \otimes \eye_d \in \RR^{nd \times d}$, and $v(x) = (k(x, x_i))_{i=1}^n \in \RR^n$. 
To conclude, using \Cref{thm:rep_thm} and the change of variables above, problems of the form of \cref{eq:gen_sampled_pb} admit a fully finite-dimensional characterization in terms of the following optimization problem over a positive semi-definite matrix of size $nd \times nd$: 
\begin{align}\label{eq:B_problem}
\min_{\bB \in \pdm{\RR^{nd}}} L(F_B(x_1), ..., F_B(x_n)) + \Omega(\bB),
\end{align}
with $F_B(x_i) = \Psi_i^T\bB\Psi_i, ~i \in [n]$. 

\subsubsection{Dual formulation.}
When the loss function $L$ in \cref{eq:gen_sampled_pb} is convex, we may obtain an equivalent dual formulation that involves $n$ matrices of size $d\times d$, involving the Fenchel conjugate of $L$, defined as $L^*(Y) \defeq \underset{X \in \sym{\RR^d}^n}{\sup}\{ \sum_{i=1}^n \dotp{\bY^{(i)}}{\bX^{(i)}} - L(X)\},~~ Y \in \sym{\RR^d}^n $. 
\begin{theorem}\label{thm:dual_formulation}
Let $L : \sym{\RR^d}^n \rightarrow \RR$ be convex, l.s.c.\ and bounded from below. Let $\Omega$ be as in \cref{eq:reg_term}. Assume that \cref{eq:B_problem} admits a feasible point, and denote $L^*$ the Fenchel conjugate of $L$. Then,
\begin{enumerate}[(i)]
    \item If $\lambda_2 > 0$ and $\lambda_1 \geq 0$,  \cref{eq:B_problem} has the following dual formulation:
    \begin{align}\label{eq:dual_neg_part}
\sup_{\Gamma\in \sym{\RR^d}^n} -L^\star(\Gamma) - \frac{1}{2\la_2} \bigg\| \Big[\sum_{i=1}^n \Psi_i \Gamma^{(i)} \Psi_i^T + \lambda_1 \eye_{nd} \Big]_{-} \bigg\|_F^2,
\end{align}
where $\Gamma^{(i)} \in \sym{\RR^d}, i \in [n]$ denotes the $i$-th matrix element of $\Gamma \in \sym{\RR^d}^n$, and $[\bM]_{-}$ denotes the negative part of $\bM$. This supremum is attained. Further, if $\Gamma_\star$ is a optimal solution of \cref{eq:dual_neg_part}, an optimal $\bB^\star$ for \cref{eq:B_problem} is $$\bB^\star = \frac{1}{\la_2}\Psi\inv\Big[\sum_{i=1}^n \Psi_i \Gamma_\star^{(i)} \Psi_i^T + \lambda_1 \eye_{nd} \Big]_{-} \Psi^{-T}.$$
    \item If $\lambda_2 = 0$ and $\lambda_1 > 0$, \cref{eq:B_problem} has the following dual formulation:
    \begin{align}
\sup_{ \Gamma\in \sym{\RR^d}^n} -L^\star(\Gamma) ~~\st~~ \sum_{i=1}^n \Psi_i \Gamma^{(i)} \Psi_i^T + \lambda \eye_{nd} \succeq 0.
\end{align}
\end{enumerate}
\end{theorem}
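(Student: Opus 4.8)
The plan is to derive the dual of \cref{eq:B_problem} via Fenchel–Rockafellar duality, treating the PSD constraint $\bB \succeq 0$ as an additional indicator term in the objective. First I would rewrite \cref{eq:B_problem} as an unconstrained minimization over $\bB \in \sym{\RR^{nd}}$ of the form $\min_{\bB} \ L(\mathcal{A}\bB) + \iota_{\pdm{\RR^{nd}}}(\bB) + \Omega(\bB)$, where $\mathcal{A}: \sym{\RR^{nd}} \to \sym{\RR^d}^n$ is the linear map $\mathcal{A}\bB = (\Psi_1^T\bB\Psi_1, \dots, \Psi_n^T\bB\Psi_n)$ (note $\Psi_i = \bR_{:,i}\otimes \eye_d$), and $\Omega(\bB) = \lambda_1 \tr\bB + \tfrac{\lambda_2}{2}\|\bB\|_F^2$. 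The adjoint is $\mathcal{A}^*\Gamma = \sum_{i=1}^n \Psi_i \Gamma^{(i)}\Psi_i^T$. Introducing the dual variable $\Gamma \in \sym{\RR^d}^n$ conjugate to the slot $L$, Fenchel duality gives
\begin{equation*}
\sup_{\Gamma} \ -L^\star(\Gamma) - \big(\iota_{\pdm{\RR^{nd}}} + \Omega\big)^\star(-\mathcal{A}^*\Gamma).
\end{equation*}
So the heart of the computation is the conjugate $g^\star$ of $g(\bB) = \iota_{\pdm{\RR^{nd}}}(\bB) + \lambda_1 \tr\bB + \tfrac{\lambda_2}{2}\|\bB\|_F^2$, evaluated at $M = -\mathcal{A}^*\Gamma = -\sum_i \Psi_i\Gamma^{(i)}\Psi_i^T$.

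Next I would compute $g^\star$ explicitly. We have $g^\star(M) = \sup_{\bB \succeq 0}\ \langle M, \bB\rangle - \lambda_1 \tr\bB - \tfrac{\lambda_2}{2}\|\bB\|_F^2 = \sup_{\bB\succeq 0}\ \langle M - \lambda_1\eye_{nd}, \bB\rangle - \tfrac{\lambda_2}{2}\|\bB\|_F^2$. For $\lambda_2 > 0$ this is a projection-type problem: writing $N = M - \lambda_1\eye_{nd}$, the unconstrained optimum over $\bB$ of $\langle N,\bB\rangle - \tfrac{\lambda_2}{2}\|\bB\|_F^2$ is $\bB = N/\lambda_2$, and projecting onto the PSD cone (which commutes with the objective since both are orthogonally invariant, so one can diagonalize $N$ and optimize eigenvalue-by-eigenvalue) yields optimizer $\bB^\star = [N]_+/\lambda_2$ and value $g^\star(M) = \tfrac{1}{2\lambda_2}\|[N]_+\|_F^2$. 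Substituting $N = -(\sum_i \Psi_i\Gamma^{(i)}\Psi_i^T + \lambda_1\eye_{nd})$ and using $[-A]_+ = [A]_-$ gives the $-\tfrac{1}{2\lambda_2}\|[\sum_i \Psi_i\Gamma^{(i)}\Psi_i^T + \lambda_1\eye_{nd}]_-\|_F^2$ term of \cref{eq:dual_neg_part}. For the case $\lambda_2 = 0$, $\lambda_1 > 0$, the same supremum is finite (and equal to $0$) if and only if $N = M - \lambda_1\eye_{nd} \preceq 0$, i.e.\ $\sum_i \Psi_i\Gamma^{(i)}\Psi_i^T + \lambda_1\eye_{nd} \succeq 0$, which gives the constraint form in part (ii); when finite, $g^\star(M) = 0$, leaving just $-L^\star(\Gamma)$.

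Then I would recover the primal optimizer: differentiating inside the dual (or using the stationarity/extremality conditions of Fenchel duality, $\bB^\star \in \partial g^\star(-\mathcal{A}^*\Gamma_\star)$), the relation $\bB^\star = [N]_+/\lambda_2$ above with the appropriate $N$ gives $\bB^\star = \tfrac{1}{\lambda_2}[\sum_i \Psi_i\Gamma_\star^{(i)}\Psi_i^T + \lambda_1\eye_{nd}]_-$; the paper's statement additionally conjugates by $\Psi\inv$ and $\Psi^{-T}$ because it expresses the answer in terms of the model $F_{A_\star}$ rather than the Cholesky-transformed $\bB$ — I would track that bookkeeping through the change of variables $\bB = \tilde\bR\bC\tilde\bR^T$ introduced before \cref{eq:finite_dimensional_form}. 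Finally I would justify strong duality and attainment: the hypothesis that \cref{eq:B_problem} admits a feasible point, together with $L$ being finite-valued, convex, l.s.c.\ and bounded below, and $\Omega$ providing coercivity (when $\lambda_2 > 0$) or the constraint qualification being satisfiable, lets me invoke the Fenchel–Rockafellar theorem to get zero duality gap and primal attainment; dual attainment in case (i) follows from coercivity of the dual objective in $\Gamma$ on the effective domain of $L^\star$ (or from strong convexity of the primal giving a Lipschitz-gradient dual).

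The main obstacle I anticipate is the careful handling of the constraint qualification and the interface between the Fenchel conjugate in the loss slot and the composite term $g = \iota_{\pdm{\RR^{nd}}} + \Omega$: one must be sure that $\mathrm{ri}(\mathrm{dom}\,L) \cap \mathcal{A}(\mathrm{dom}\,g) \neq \emptyset$ (here $\mathrm{dom}\,L$ may be all of $\sym{\RR^d}^n$ if $L$ is finite-valued, which simplifies matters, but the PSD constraint on $\bB$ restricts $\mathcal{A}(\mathrm{dom}\,g)$), so the "feasible point" hypothesis is exactly what makes the argument go through — and the subtlety is that in case (ii) the conjugate of $g$ has a bounded (polyhedral-like, but actually spectrahedral) domain, so one should either argue via the dual of the dual or appeal to the version of Fenchel–Rockafellar that only needs primal feasibility plus lower semicontinuity. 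The eigenvalue-by-eigenvalue reduction in computing $g^\star$ is routine but deserves a line noting orthogonal invariance of $\|\cdot\|_F$ and of $\tr$, so that the PSD projection decouples.
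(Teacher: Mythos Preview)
Your approach is essentially the same as the paper's: both apply Fenchel--Rockafellar duality (the paper cites Theorem~3.3.5 of Borwein--Zhu) with the linear map $\mathcal{A}\bB = (\Psi_i^T\bB\Psi_i)_{i\in[n]}$, both fold the PSD constraint into the regularizer and compute the conjugate of $g = \iota_{\pdm{\RR^{nd}}} + \Omega$ (the paper isolates this as a separate lemma), and both recover $\bB^\star$ from $\nabla g^\star(-\mathcal{A}^*\Gamma_\star)$. Your eigenvalue-by-eigenvalue argument for $g^\star$ and your identification of the feasible-point hypothesis with the Fenchel qualification condition match the paper's reasoning; the only cosmetic difference is that the paper invokes Borwein--Zhu directly for attainment rather than arguing coercivity.
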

Proof in \Cref{proof:dual_formulation}, \cpageref{proof:dual_formulation}.
\subsection{Approximation Properties}

In the previous sections, we have introduced a model for smooth PSD-valued functions that relies on infinite-dimensional operators and features, and then showed that for finite optimization problems of the form \eqref{eq:gen_sampled_pb} such problems admit a finite representation. We now study the approximation properties of \cref{eq:model_def_formal}. We start by showing that, under mild assumptions on the RKHS $\hh$, our model \eqref{eq:model_def_formal} is a universal approximator for PSD-valued functions (\Cref{thm:univ_approx}). 

We adapt the notion of \emph{universality}~\citep{micchelli2006universal} to PSD-valued functions: we say that a set of PSD-valued functions $\Fcal$ is universal if for any compact set $\Zcal \subset \Xcal$, any $\varepsilon > 0$ and any function $g \in \Ccal(\Zcal, \pdm{\RR^d})$, there exists $f \in \Fcal$ such that $\|f_{|\Zcal} - g\|_{\Zcal} \leq \varepsilon$, where $\|g\|_{\Zcal} \defeq \sup_{x \in \Zcal} \|g(x)\|_\infty$ denotes the max norm over $\Zcal$. In the following theorem, proved in \Cref{proof:univ_approx}, \cpageref{proof:univ_approx}, we show that our model for PSD-valued functions is universal.
\begin{theorem}[Universal approximation]\label{thm:univ_approx}
Let $d\in \NN, d\geq 1$, $\hh$ be a separable Hilbert space and $\phi: \Xcal \rightarrow \hh$ a universal map~\citep[see][]{micchelli2006universal}. Then 
\begin{equation}
    \Fcal \defeq \{F_A : A \in \pdm{\hh^d}, \tr A < \infty\}
\end{equation} 
is a universal approximator of $d\times d$ PSD-valued functions over $\Xcal$.
\end{theorem}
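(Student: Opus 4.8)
The plan is to reduce the matrix-valued universal approximation statement to the scalar-valued case by exploiting the structure of the model $F_A(x)=\Phi(x)^*A\Phi(x)$, and then to build an explicit approximant from a (uniformly close) approximation of a suitably regularized target.

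First I would reduce to approximating a \emph{strictly} positive-definite target. Given $g\in\Ccal(\Zcal,\pdm{\RR^d})$ and $\varepsilon>0$, replace $g$ by $g_\delta \defeq g+\delta\eye_d$ for small $\delta>0$; since $\|g_\delta-g\|_\Zcal=\delta$, it suffices to approximate $g_\delta$ to accuracy $\varepsilon/2$, and $g_\delta$ satisfies $g_\delta(x)\succeq\delta\eye_d$ on the compact set $\Zcal$. The point of this step is that a uniformly positive-definite continuous matrix field admits a continuous (indeed, as smooth as $g$) square-root factorization $g_\delta(x)=h(x)h(x)^*$ — e.g.\ $h(x)=g_\delta(x)^{1/2}$, which is continuous because the matrix square root is continuous on $\pd{\RR^d}$ — with $h\in\Ccal(\Zcal,\RR^{d\times d})$ bounded on $\Zcal$. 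Writing $h(x)=(h_1(x),\dots,h_d(x))$ in terms of its columns $h_\ell\in\Ccal(\Zcal,\RR^d)$, we get $g_\delta(x)=\sum_{\ell=1}^d h_\ell(x)h_\ell(x)^\top$, i.e.\ a finite sum of rank-one "squares" of vector-valued continuous functions.

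Next I would approximate each scalar component of each $h_\ell$. Write $h_\ell(x)=(h_{\ell 1}(x),\dots,h_{\ell d}(x))^\top$ with $h_{\ell m}\in\Ccal(\Zcal,\RR)$. By the universality of $\phi$ (equivalently, density in $\Ccal(\Zcal)$ of the RKHS $\hh$ associated with $\phi$), there exist $w_{\ell m}\in\hh$ with $\sup_{x\in\Zcal}|h_{\ell m}(x)-\dotp{w_{\ell m}}{\phi(x)}_\hh|\le\rho$, where $\rho>0$ will be chosen small depending on $\varepsilon$, $d$, and $\sup_\Zcal\|h\|$. Collecting the $w_{\ell m}$ into vectors $u_\ell\defeq(w_{\ell 1},\dots,w_{\ell d})\in\hh^d$, the definition of $\Phi$ in \cref{eq:def_phi} gives $\Phi(x)^*u_\ell = (\dotp{w_{\ell 1}}{\phi(x)},\dots,\dotp{w_{\ell d}}{\phi(x)})^\top\in\RR^d$, which is $\rho$-close (componentwise, hence in any fixed norm up to a $\sqrt d$ factor) to $h_\ell(x)$ uniformly on $\Zcal$. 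Now set $A\defeq\sum_{\ell=1}^d u_\ell\otimes u_\ell\in\pdm{\hh^d}$, which has finite trace $\tr A=\sum_\ell\|u_\ell\|_{\hh^d}^2<\infty$, so $F_A\in\Fcal$. By the sum-of-squares remark in the excerpt, $F_A(x)=\sum_{\ell=1}^d(\Phi(x)^*u_\ell)(\Phi(x)^*u_\ell)^\top$.

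Finally I would estimate $\|F_A-g_\delta\|_\Zcal$. Writing $a_\ell(x)\defeq\Phi(x)^*u_\ell$ and $b_\ell(x)\defeq h_\ell(x)$, we have $F_A(x)-g_\delta(x)=\sum_{\ell=1}^d\big(a_\ell a_\ell^\top - b_\ell b_\ell^\top\big) = \sum_\ell\big((a_\ell-b_\ell)a_\ell^\top + b_\ell(a_\ell-b_\ell)^\top\big)$, so a triangle inequality in, say, the operator norm gives $\|F_A(x)-g_\delta(x)\|\le\sum_\ell\|a_\ell(x)-b_\ell(x)\|\,(\|a_\ell(x)\|+\|b_\ell(x)\|)$, uniformly controlled on $\Zcal$ by $O\!\big(d\cdot\sqrt d\,\rho\cdot(M+\sqrt d\,\rho)\big)$ where $M\defeq\sup_{x\in\Zcal,\ell}\|b_\ell(x)\|<\infty$. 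Choosing $\rho$ small enough makes this $\le\varepsilon/2$, and combined with the $\delta$-reduction we obtain $\|F_A-g\|_\Zcal\le\varepsilon$, proving universality. The main obstacle — and really the only nontrivial input beyond bookkeeping — is the first step: producing a \emph{continuous} (globally on $\Zcal$) factorization $g_\delta=hh^*$ of the regularized target, which is why the $\delta$-regularization to keep eigenvalues bounded away from zero is essential (the square root is not Lipschitz, let alone smooth, up to the boundary of $\pdm{\RR^d}$). If one wants $h$ to additionally be continuous on $\Xcal$ (not just $\Zcal$) one can invoke a Tietze-type extension, but for the max-norm-on-compacts statement it is enough to work on $\Zcal$.
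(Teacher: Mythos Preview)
Your proof is correct and follows essentially the same route as the paper: take a square root of the target, approximate its columns componentwise using scalar universality, assemble the approximants into a finite-rank $A=\sum_\ell u_\ell\otimes u_\ell$, and bound the rank-one differences $a_\ell a_\ell^\top-b_\ell b_\ell^\top$ via the factorization $(a_\ell-b_\ell)a_\ell^\top+b_\ell(a_\ell-b_\ell)^\top$. The only notable difference is that your $\delta$-regularization step is unnecessary: the matrix square root is in fact continuous on the \emph{closed} cone $\pdm{\RR^d}$ (not merely on its interior), so the paper applies it directly to $g$ without first perturbing to $g_\delta$. Your caution about non-Lipschitzness at the boundary is well-placed but irrelevant here, since only continuity of $h$ is needed to invoke universality and to ensure $\sup_\Zcal\|h\|<\infty$ on the compact $\Zcal$.
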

As examples, universal kernels include the Gaussian kernel $\exp(-\|x-x'\|^2/\sigma^2)$ or the exponential kernel $\exp(-\|x-x'\|/\sigma)$. Informally, \Cref{thm:univ_approx} shows that the SoS model \eqref{eq:model_def_formal} may be used to approximate any PSD-valued function arbitrarily well. As a consequence, this model is well-suited to PSD function learning problems, or to model problems with PSD constraints.

\subsection{Numerical Illustration: PSD Least-Squares Regression}
Let $x_1, ..., x_n \in \RR^{\dimone}$ and $\bM_1, ..., \bM_n \in \pdm{\RR^{\dimtwo}}$.
We wish to solve
\begin{align}\label{eq:PSD_regressions_primal}
    \Pcal \defeq \min_{\bB \in \pdm{\RR^{n\dimtwo}}} \frac{1}{2n}\sum_{i=1^n}\|F_\bB(x_i) - \bM_i\|_F^2 + \lambda_1 \tr \bB + \frac{\lambda_2}{2} \|\bB\|^2_F,
\end{align}
with $\la_2 > 0$ and $\la_1 \geq 0$.
\begin{figure}
    \centering
    \includegraphics[width=\textwidth]{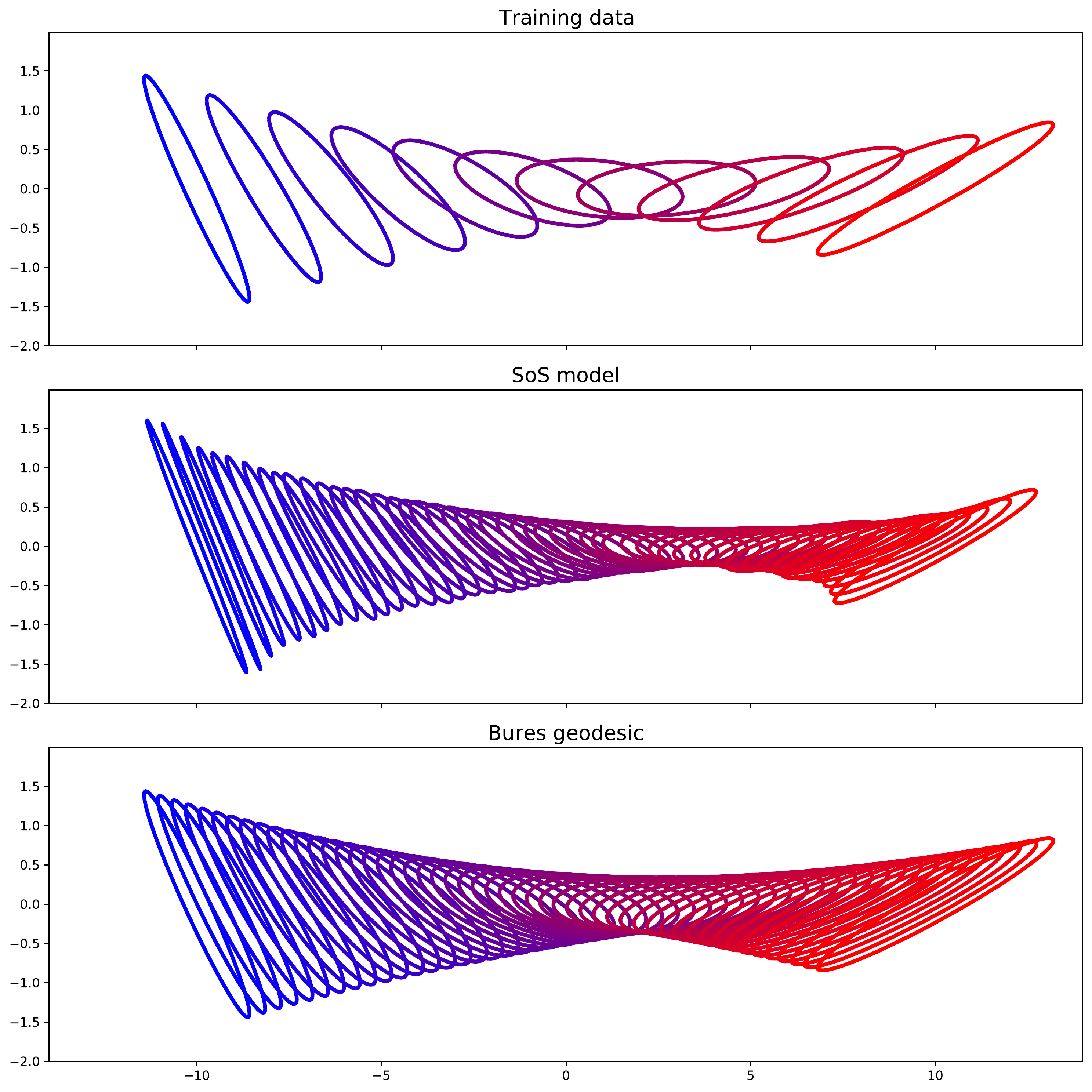}
    \caption{Interpolation of a Bures geodesic from $12$ data points (top). The matrices are represented as the level sets of Gaussian distributions: $\{x : \Ncal(x; 0, \Sigma) \leq r\}$ for $r=0.1$. We use the exponential kernel and select all hyperparameters using cross-validation. The learned model is represented in the middle figure, and the full geodesic is plotted in the bottom figure for comparison.}
    \label{fig:bures_geodesic}
\end{figure}
Following \Cref{thm:dual_formulation}, we have the following dual formulation:
\begin{align}\label{eq:PSD_regressions_dual}
    \Pcal = - \frac{1}{2} \min_{\Gamma \in \sym{\RR^\dimtwo}^n} n\sum_{i=1}^n\|\Gamma^{(i)}\|_F^2 + \sum_{i=1}^n \dotp{\Gamma^{(i)}}{\bM_i}_F + \frac{1}{\la_2} \bigg\| \Big[\sum_{i=1}^n \Psi_i \Gamma^{(i)} \Psi_i^T + \lambda_1 \eye_{n\dimtwo}\Big]_{-} \bigg\|_F^2,
\end{align}
with the primal-dual optimality relation $\bB^\star = \frac{1}{\la_2}\Psi\inv[\sum_{i=1}^n \Psi_i \Gamma_\star^{(i)} \Psi_i^T + \lambda_1 \eye_{nd}]_{-} \Psi^{-T}.$ As can be seen in this example, the dual formulation \eqref{eq:PSD_regressions_dual} has several advantages over the primal \eqref{eq:PSD_regressions_primal}. First, the dual problem has only $O(\dimtwo^2 n)$ parameters (and thus $O(\dimtwo^2 n)$ memory footprint), compared to the primal which has $O(\dimtwo^2 n^2 )$ parameters. Second, and more importantly, the dual problem no longer has PSD constraints, but only symmetry constraints, which makes it more amenable to (accelerated) first-order methods — at the price of a negative part term that must be computed using SVD. Since \eqref{eq:PSD_regressions_dual} is a smooth and strongly convex minimization problem (with convexity parameter $\mu = n$ and smoothness $L = n + \frac{1}{\la_2} \lambda_{\max}(\bK \circ \bK)$ with $\bK \in \RR^{n\times n}, K_{ij} = k(x_i, x_j), i, j \in [n]$), we may compute a solution using accelerated gradient descent~\citep{nesterov2003introductory}. In particular, since the gradient of the objective function in \eqref{eq:PSD_regressions_dual} is a vector of symmetric matrices whenever evaluated at symmetric arguments, we may solve \eqref{eq:PSD_regressions_dual} without need for projections if the $\Gamma^{(i)} \in \RR^{\dimtwo\times \dimtwo}, i \in [n]$ are initialized as symmetric matrices.

We illustrate our model on a geodesic interpolation task. Here, the matrices $\bM_1, ..., \bM_n \in \pdm{\RR^2}$ are sampled from a Bures geodesic~\citep{bhatia2019bures} joining $\bM_1$ to $\bM_n$, at evenly sampled times $x_1=0 \leq x_2 \leq ... \leq x_n = 1$. We fit a PSD kernel SoS model by solving \eqref{eq:PSD_regressions_dual} using the exponential kernel, and selecting the kernel bandwidth and regularization parameters $\la_1$ and $\la_2$ using leave-one-out cross-validation. 
The results are illustrated in \Cref{fig:bures_geodesic}, and in \Cref{fig:bures_geodesic_rank_1} in \Cref{sec:add_exp} on a rank one geodesic.
As can be seen in \Cref{fig:bures_geodesic}, the PSD model is able to faithfully learn the geodesic. Further, \Cref{fig:bures_geodesic_rank_1} shows that our model is able to handle singular inputs, in which case it yields matrices which have a larger conditioning, although we can observe a smoothing effect which yields rank 2 matrices instead of rank 1 in the true geodesic.

As an alternative to our model, one could consider learning PSD matrices under the form $L(x) L(x)^T$ where $L$ is an unconstrained matrix-valued function, or under the form $e^{L(x)}$. However, in the case of the least-squares objective \eqref{eq:PSD_regressions_primal}, both would result in non-convex problems, for which obtaining a global minimum is generally NP-hard.

Finally, further applications involving PSD-valued functions could be treated in a similar way, such as estimating the conditional covariance of an heteroscedastic Gaussian process, or learning the metric tensor of a Riemannian manifold.

\section{Using the Model to Approximate PSD Contraints in Optimization}\label{sec:approximating-constraints}
\cite{rudi2020global}  propose a general technique to deal with constrained optimization problems of the form $\min_{\theta} L(\theta)$ subject to $g(\theta, x) \geq 0, ~ \forall x \in \Xcal$, for an objective function $L$ and constraint function $g$. It consists in two main steps:
\begin{enumerate}
    \item First, express the dense set of inequality constraints as an equality constraint in terms of the PSD model for non-negative functions \citep{marteau2020non};
    \item Then, apply the equality constraints only on a suitably chosen set of points (e.g., if $\Xcal$ is compact, sampled uniformly at random).
\end{enumerate}
Given $\hat{X} = \{x_1, \dots, x_n\} \in \Xcal$, \citeauthor{rudi2020global} study the effect of approximating the problem above with the problem
\begin{align}\label{eq:discretized-problem}
\min_{\theta, \bB \in \pdm{\mathbb{R}^n} } L(\theta) + \Omega(\bB) ~~\text{subject to}~~ g(\theta, x_i) = \psi_i^\top \bB \psi_i, ~~i=1,\dots,n,
\end{align}
for suitable vectors $\psi_i \in \RR^n$, and $\Omega$ a regularization for $B$. They show that, this way, (a) it is possible to use results from approximation theory and leverage the degree of differentiability of the constraint function $g$, dramatically improving the convergence rate of the approximation for highly differentiable constraints; (b) if $L$ is convex and $g$ is linear in $\theta$, the resulting problem is a finite-dimensional convex problem.

Here we propose to use a similar technique for optimization problems subject to positive semidefinite constraints, using the model introduced in \cref{eq:model_def_formal}, and in particular its finite-dimensional characterization \cref{eq:finite_dimensional_form}. In particular, given a problem of the form 
\begin{equation}\label{eq:g_theta_continuous}
\min_{\theta \in \Theta} L(\theta) \quad \textrm{subject to} \quad g(\theta, x) \succeq 0, ~~ \forall x \in \Xcal,
\end{equation}
for $g: \Theta \times \Xcal \to \sym{\mathbb{R}^d}$, we suggest to apply the same steps above on the positive semidefinite constraints using the model in \cref{eq:model_def_formal}. The first step corresponds to
$$\min_{\theta \in \Theta, A \succeq 0} L(\theta) + \Omega(A) \quad \textrm{subject to} \quad g(\theta, x) = F_A(x), ~~ \forall x \in \Xcal,$$
where the regularizer $\Omega$ is defined in \cref{eq:reg_term} and $F_A$ is the model in \cref{eq:model_def_formal}. The second step leads to the following problem:
\begin{equation}\label{eq:g_theta_sampled}
\min_{\theta \in \Theta, {\bf B} \in \pdm{\mathbb{R}^{nd}}} L(\theta) + \Omega(\bB) \quad \textrm{subject to} \quad g(\theta, x_i) = \Psi_i^\top {\bf B} \Psi_i, ~~  i=1,\dots,n,
\end{equation}
where $\Psi_i \in \mathbb{R}^{nd\times d}$ (defined in \cref{subsubsec:finite_rep}) are efficiently computable. Note that the latter problem has a finite number of constraints, is finite dimensional and, when $L$ is convex and $g$ is linear in $\theta$, can be solved with standard techniques of convex optimization. In the next theorem we quantify the effect of approximating the constraint set and show that we can leverage the degree of differentiability of the constraint function. Given $\theta$ and defining $F:\Xcal \to \sym{\mathbb{R}^d}$ as $F(x) = g(\theta, x)$, we study the error of approximating the constraint $F(x) \succeq 0, \forall x \in \Xcal$, with the constraint $F(x_i) = \Psi_i^T \bB \Psi_i, i \in [n]$ for some $\bB \in \pdm{\RR^{nd}}$. We quantify the violation of the constraint $F$ outside of $\hat{X}$, i.e., $F(x) \succeq - \varepsilon I$ for any $x \in \Xcal$ and for a $\varepsilon$ that depends on the smoothness of the kernel, the smoothness of $F$ and the {\em fill distance} of $\hat{X}$ w.r.t. $\Xcal$:
\begin{equation}\label{eq:hXX}
    h_{\hat{X}, \Xcal} \defeq \underset{x \in \Xcal}{\sup}~\underset{i \in [n]}{\min}~\|x-x_i\|.
\end{equation}
In particular, when $F$ is $m$-times differentiable, \Cref{thm:scattered_inequalities} shows that if $F(x_i) = \Psi_i^T \bB \Psi_i, i \in [n]$ then $F(x) \succeq - \varepsilon I$ over the whole $\Xcal$, where $\varepsilon = O(h_{\hat{X}, \Xcal}^m)$, leveraging the degree of differentiability of the function $F$. This result is will be used later in \Cref{sec:convex_functions} to study the optimization problem associated to approximating a convex function. 
Like \cite{rudi2020global}, we require the following assumptions on the domain $\Xcal$ and the RKHS $\hh$.

\begin{assumption}[Geometric properties of $\Xcal$]\label{ass:geom_domain}
There exists $r > 0$ and a bounded set $S \subset \RR^\dimone$ such that $\Xcal = \underset{x \in S}{\cup}B_r(x)$, where $B_r(x)$ is the open ball centered in $x$ of radius $r$.
\end{assumption}

\begin{assumption}[Properties of the RKHS $\hh$]\label{ass:rkhs}
Given a bounded open set $\Xcal \subset \RR^\dimone$, let $\hh$ be a RKHS of scalar functions on $\Xcal$ with norm $\|\cdot\|_\hh$ kernel $k$ satisfying the following conditions:
\begin{enumerate}[a)]
    \item\label{item:product_bounded} There exists $M \geq 1$ such that 
    $\|u \cdot v\|_\hh \leq M \|u\|_\hh \|v\|_\hh, u, v \in \hh;$
    \item\label{item:composition} $a\circ v \in \hh$ for any $a \in \Ccal^\infty(\RR^q),~~ v = (v_1, ..., v_q),~~ v_j \in ~~\hh, j \in [q]$;
    \item\label{item:kernel_derivatives_bounded} For some $m \in \NN_+$ and $D_m \geq 1$, the kernel $k$ satisfies
    $$ \underset{|\alpha| = m}{\max}~\underset{x,y\in \Xcal}{\sup}~|\partial_x^\alpha \partial_y^\alpha k(x, y)| \leq D_m^2 < \infty;$$
\end{enumerate}

\end{assumption}

Like \cite{rudi2020global}, let us remark that that \Cref{ass:rkhs}\ref{item:kernel_derivatives_bounded} requires that $\hh$ is a RKHS with a $m$ times differentiable kernel, and implies that $\hh \subset \Ccal^m(\Xcal)$ and $|u|_{\Xcal, m} \leq D_m \|u\|_\hh$. 
While the SoS model, introduced later in \eqref{eq:model_def_formal}, defines a valid PSD-valued function for any feature map $\Phi$, \Cref{ass:geom_domain} and \Cref{ass:rkhs} are required to obtain the sampled inequality bounds in \Cref{thm:scattered_inequalities} and \Cref{cor:scattered_convexity} and the PSD representation result for Hessians of smooth and strongly convex functions in \Cref{thm:cvx_hess_function}.

\begin{example}[Sobolev kernel~\citep{wendland2004scattered}]
Let $\dimone \in \NN_+$ and $\Xcal \subset \RR^\dimone$ be a bounded open set. Let $s > \dimone / 2$, and define
\begin{equation}\label{eq:sobolev_kernel}
    k_s(x, x') = c_s \|x-x'\|^{s - \dimone/2} \Kcal_{s-\dimone / 2}(\|x-x'\|), ~~\forall x, x' \in \Xcal,
\end{equation}
where $\Kcal_{s-\dimone / 2} : \RR \rightarrow \RR$ is the Bessel function of the second kind with parameter $s - \dimone / 2$ \cite[see e.g.][Definition 5.10]{wendland2004scattered}, and $c_s = \frac{2^{1 + \dimone/2 - s}}{\Gamma(s - \dimone / 2)}$ is chosen so that $k_s(x, x) = 1$ for all $x\in \Xcal$. Then, the function $k_s$ is a kernel. When $\Xcal$ has locally Lipschitz boundary (and in particular, when \Cref{ass:geom_domain} is satisfied), its corresponding RKHS $\hh$ is $W_2^S(\Omega)$, the Sobolev space of functions whose weak-derivatives are square-integrable up to order $s$~\citep{adams2003sobolev}. Then \cref{eq:sobolev_kernel} satisfies \Cref{ass:rkhs} for any $m\in \NN_+$ s.t.\ $m < s - \dimone/2$ \citep[see][Proposition 1]{rudi2020global} with the following constants:
\begin{equation}
    M = (2\pi)^{\dimone/2}2^{s + 1/2}, \quad D_m = (2\pi)^{\dimone/2}\sqrt{\frac{\Gamma(m + \dimone/2)\Gamma(s- \dimone/2 - m)}{\Gamma(s-\dimone/2)\Gamma(\dimone/2)}}.
\end{equation}
Finally, note that in the particular case $s = \dimone / 2 + 1/2$, we have $k_s(x, x') = \exp(-\|x-x'\|)$ and that a scale factor can be added as $k_s(x, x') = \exp(-\|x-x'\|/\sigma)$, and similarly in \cref{eq:sobolev_kernel} by replacing $\|x-x'\|$ with $\|x-x'\|/\sigma$.
\end{example}
\begin{theorem}\label{thm:scattered_inequalities}
Let $\Xcal \subset \RR^\dimone$ satisfy \Cref{ass:geom_domain} for some $r>0$. Let $k$ satisfy \Cref{ass:rkhs}\ref{item:product_bounded} and \ref{item:kernel_derivatives_bounded} for some $m \in \NN_+$. Let $\hat{X} = \{x_1, ..., x_n\} \subset \Xcal$ such that $h_{\hat{X}, \Xcal} \leq r \min(1, \frac{1}{18(m-1)^2})$,  $h_{\hat{X}, \Xcal} \defeq \underset{x \in \Xcal}{\sup}~\underset{i \in [n]}{\min}~\|x-x_i\|$.
Let $F \in C^m(\Xcal, \sym{\RR^\dimtwo})$ and assume there exists $\bB \in \pdm{\RR^{nd}}$ such that 
$$F(x_i) = \Psi_i^T \bB \Psi_i, ~~i \in [n],$$
where the $\Psi_i$'s are defined in \Cref{subsec:rep_results}. 
Then, it holds
\begin{equation}
    \forall x \in  \Xcal,~~ \la_{\min}(F(x)) \geq - \varepsilon, ~~\text{where}~~ \varepsilon = C h_{\hat{X},  \Xcal}^m,
\end{equation}
and $C = C_0(\sum_{i=1}^\dimtwo |F_{ii}|_{\Xcal, m} + M D_m \tr\bB)$ with $C_0 = 3\frac{\dimone^m}{m!}\max(1, 18(m-1)^2)^m$, and where $F_{ij}$ is the the scalar-valued function corresponding to the element of $F$ with index $i, j \in [\dimtwo]$.
\end{theorem}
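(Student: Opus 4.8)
\emph{Proof idea.} Since each $F(x)$ is symmetric, $\la_{\min}(F(x)) \ge -\varepsilon$ is equivalent to $v^\top F(x)\, v \ge -\varepsilon$ for every unit vector $v \in \RR^\dimtwo$, so it suffices to control the scalar functions $g_v : y \mapsto v^\top F(y)\, v$, uniformly over $\|v\| = 1$. The plan is to show that each $g_v$ falls within the scope of the scalar scattered-data inequality of \cite{rudi2020global} (the $\dimtwo = 1$ case of the present statement), and then to assemble the resulting bounds, applying them for each $x$ to the minimal eigenvector of $F(x)$.

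First I would record the structure of $g_v$ on the sample points. By hypothesis $F(x_i) = \Psi_i^\top \bB \Psi_i$, and with $\Psi_i = \bR_{:,i} \otimes \eye_\dimtwo$ one has $\Psi_i v = \bR_{:,i}\otimes v = (\eye_n \otimes v)\,\bR_{:,i}$, whence
\begin{equation*}
g_v(x_i) = v^\top F(x_i)\, v = (\Psi_i v)^\top \bB\, (\Psi_i v) = \psi_i^\top \bB_v\, \psi_i, \qquad \psi_i := \bR_{:,i}\in\RR^n, \quad \bB_v := (\eye_n\otimes v)^\top \bB\, (\eye_n\otimes v).
\end{equation*}
Here $\bB_v \in \pdm{\RR^n}$ as a compression of $\bB \succeq 0$, and the $\psi_i$ are exactly the vectors of the scalar finite-dimensional representation of \Cref{subsec:rep_results} (columns of the Cholesky factor of $\bK$, so that $\psi(x_i)=\psi_i$ for $\psi(\cdot)=\bR^{-\top}v(\cdot)$). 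Moreover $\tr \bB_v = \tr\big(\bB\,(\eye_n\otimes vv^\top)\big) \le \tr\bB$ since $vv^\top \preceq \eye_\dimtwo$ for $\|v\|=1$ and $\bB \succeq 0$, and $F(x_i)\succeq 0$ gives $g_v(x_i)\ge 0$. Thus $g_v \in C^m(\Xcal)$ is a scalar function represented on $\hat X$ as a kernel SoS with trace at most $\tr\bB$: since $h_{\hat X,\Xcal} \le r\min(1, \tfrac{1}{18(m-1)^2})$ and $k$ satisfies \Cref{ass:rkhs}\ref{item:product_bounded} and \ref{item:kernel_derivatives_bounded}, the scalar inequality of \cite{rudi2020global} applies and gives, for all $x\in\Xcal$,
\begin{equation*}
g_v(x) \ge -\,C_0\, h_{\hat X,\Xcal}^m\,\big(\,|g_v|_{\Xcal,m} + M D_m \tr \bB_v\,\big) \ge -\,C_0\, h_{\hat X,\Xcal}^m\,\big(\,|g_v|_{\Xcal,m} + M D_m \tr \bB\,\big), \qquad C_0 = 3\frac{\dimone^m}{m!}\max(1,18(m-1)^2)^m.
\end{equation*}
(Recall this scalar estimate is itself obtained by splitting off the genuinely pointwise-PSD SoS term $\psi(\cdot)^\top\bB_v\psi(\cdot)$ and applying a $C^m$ zero-interpolation inequality to the remainder, which vanishes on $\hat X$.)

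It then remains to bound $|g_v|_{\Xcal,m} = \max_{|\alpha|=m}\sup_{y\in\Xcal} |v^\top \partial^\alpha F(y)\, v|$ uniformly over $\|v\|=1$ by $\sum_{i=1}^\dimtwo |F_{ii}|_{\Xcal,m}$; combining with the displayed bound applied, for each $x$, to the minimal eigenvector $v_x$ of $F(x)$ then yields $\la_{\min}(F(x)) \ge -C h_{\hat X,\Xcal}^m$ with $C = C_0\big(\sum_{i=1}^\dimtwo|F_{ii}|_{\Xcal,m} + MD_m\tr\bB\big)$, as claimed. I expect this last seminorm reduction to be the main obstacle: a priori $\sup_{\|v\|=1}|v^\top \partial^\alpha F(y) v| = \|\partial^\alpha F(y)\|_{\mathrm{op}}$ is an operator-norm quantity and is \emph{not} in general dominated by $\sum_i |\partial^\alpha F_{ii}(y)|$, so one must argue more carefully — e.g.\ exploiting that, for the relevant direction $v_x$, the scalar estimate is applied to a function which is already non-negative on the samples, or else settling for a slightly coarser constant featuring $\big(\sum_{i,j}|F_{ij}|_{\Xcal,m}^2\big)^{1/2}$ via the Frobenius bound $\|\partial^\alpha F(y)\|_{\mathrm{op}}\le\|\partial^\alpha F(y)\|_F$. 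Everything else is routine: that $g_v\in C^m(\Xcal)$ (immediate from $F\in C^m$), the compression inequality $\tr\bB_v\le\tr\bB$, and the bookkeeping of constants.
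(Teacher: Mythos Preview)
Your approach is essentially the paper's: both fix a unit direction and reduce to a scalar scattered-data inequality. The paper works with the \emph{difference} $g_u(x) = u^\top(F(x) - F_A(x))u$, which vanishes on $\hat X$, applies the zero-interpolation estimate directly, and bounds the seminorms of $u^\top F(\cdot)\, u$ and of the SoS part $r_{A,u}(\cdot)=u^\top F_A(\cdot)\, u$ separately (the latter via $|r_{A,u}|_{\Xcal,m} \le D_m\|r_{A,u}\|_\hh \le M D_m \tr A$, using \Cref{ass:rkhs}\ref{item:product_bounded}). Your compression $\bB_v = (\eye_n\otimes v)^\top \bB\, (\eye_n\otimes v)$ is a clean alternative that calls the scalar theorem as a black box and recovers the same constant via $\tr \bB_v \le \tr \bB$; both routes land on $|F_u|_{\Xcal,m} + M D_m \tr \bB$.

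On the obstacle you flag: the paper's argument is simpler than the pointwise operator-norm route you worry about. It takes the entrywise triangle inequality
\[
|F_u|_{\Xcal,m} \;=\; \max_{|\alpha|=m}\sup_{x}\Big|\sum_{i,j} u_i u_j\,\partial^\alpha F_{ij}(x)\Big|\;\le\;\sum_{i,j}|u_i|\,|u_j|\,|F_{ij}|_{\Xcal,m}\;=\;|u|^\top \bN_F\,|u|\;\le\;\lambda_{\max}(\bN_F),
\]
with $(\bN_F)_{ij}=|F_{ij}|_{\Xcal,m}$; the sup over $(\alpha,x)$ is taken \emph{entrywise} before contracting with $u$, so one never confronts $\|\partial^\alpha F(x)\|_{\mathrm{op}}$ at a fixed point. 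The paper then replaces $\lambda_{\max}(\bN_F)$ by $\tr \bN_F = \sum_i |F_{ii}|_{\Xcal,m}$ ``for simplicity'' --- and this last step is precisely the one you are right to be suspicious of, since it fails in general for an entrywise non-negative symmetric matrix (e.g.\ zero diagonal, positive off-diagonal). The honest constant is $\lambda_{\max}(\bN_F)$, which the paper's more general vector-valued statement (\Cref{thm:scattered_inequalities_vector}) retains; your Frobenius alternative also works since $\big(\sum_{i,j}|F_{ij}|_{\Xcal,m}^2\big)^{1/2}\ge\lambda_{\max}(\bN_F)$.
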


As a remark, let us notice that the bounds in \Cref{thm:scattered_inequalities} involve both the dimension $\dimone$ of $\Xcal$ and the size $d$ of the matrices output by $F$, which need not be equal. 
\Cref{thm:scattered_inequalities} is proved in \Cref{proof:scattered_inequalities}, \cpageref{proof:scattered_inequalities}. We sketch here the main arguments of the proof. The followed strategy consists in using scattered data inequality for scalar functions given by Theorem 13 of \cite{rudi2020global} (itself adapted from \cite{wendland2004scattered}). To do so, we start by considering a fixed direction in the unit sphere $u \in S^{\dimtwo-1}$, and apply the theorem to the smooth scalar function $g_u(.) \defeq u^T (F(.) - F_\bB(.)) u$, to obtain lower bounds on $u^T F(.) u$. When then derive global bounds (independent of $u$) over $S^{\dimtwo-1}$. Finally, since the lowest eigenvalue of a symmetric matrix $\bM \in \sym{\RR^d}$ is $\underset{u\in S^{d-1}}{\min} u^T \bM u$, we obtain a global lower bound on $\lambda_{\min}(F(x))$ for $x \in \Xcal$. 
Using \Cref{thm:scattered_inequalities}, we may turn $F$ into a function that is PSD everywhere by adding $\varepsilon \eye_d$. Whenever the constraint function $g(\theta, x)$ allows relating the addition of the constant term $\varepsilon \eye_d$ to a change in $\theta$ (e.g.\ if $g(\theta, x) = A(x)[\theta] + B(x)$ and $A[x]$ linear), we may use \Cref{thm:scattered_inequalities} to bound $L(\theta_\star) - L(\hat{\theta})$, where $\theta_\star$ is the solution of \cref{eq:g_theta_continuous} and $\hat{\theta}$ the solution of \cref{eq:g_theta_sampled}, as in \Cref{thm:subsampling_error} in \Cref{sec:convex_functions} for optimization under convexity constraints.

\section{Modeling Convex Functions with Sum-of-Squares Hessians}\label{sec:convex_functions}
In \Cref{sec:pos_op}, we introduced a model for functions that take PSD matrices as values. While some problems such as those mentioned in the same section involve learning PSD-valued functions directly, positive semidefiniteness most commonly appears as a constraint in optimization problems. In particular, when the constraint is smooth enough, the prevalent problem of optimizing a scalar-valued function under convexity constraints can be reformulated using PSD constraints on the Hessian, which allows leveraging the smoothness of the constraint, as shown in \Cref{thm:scattered_inequalities}. That is, problems of the form
\begin{align}\label{eq:gen_cvx_pb}
\min J(f)~~\st~~ f \in \Ccal^2(\Xcal) ~\text{is convex},
\end{align}
may be written as follows
\begin{align}\label{eq:gen_cvx_pb_hessian}
\min_{f \in \Ccal^2(\Xcal)} J(f)~~\st~~ H_f(x) \succeq 0, ~~\forall x \in \Xcal.
\end{align}
Problems of the form \eqref{eq:gen_cvx_pb} are notoriously difficult to handle and as a consequence most works on this topic have focused on piecewise-affine functions~\citep{seijo2011nonparametric}, or functions defined on low-dimensional (usually 2D) domains~\citep{merigot2014handling}.  
In \Cref{sec:approximating-constraints}, we extended the approach introduced by \cite{rudi2020global} for non-negative constraints, to approximate optimization problems subject to a dense set of positive semidefinite constraints, as the problem in \cref{eq:gen_cvx_pb_hessian}. In the present section, we use the proposed extension to approximate efficiently \cref{eq:gen_cvx_pb_hessian}. In particular, in the next two subsections we will transform \cref{eq:gen_cvx_pb_hessian} as described in \Cref{sec:approximating-constraints}, and we will derive quantitative bounds on the approximation error and computational complexity of the finite dimensional problem in \cref{eq:cvx_pb_subsampled}. 

\subsection{From Positive Semidefinite to Equality Constraints}
The first step of the approach described in \cref{sec:approximating-constraints} consists in transforming the positive semidefinite constraints into equality constraints with respect to the PSD sum-of-squares model. In our case, this corresponds to leveraging the PSD sum-of-squares model introduced in \Cref{sec:pos_op} to enforce positiveness constraints on the Hessian of the variable $f$. After adding a regularizer, which is required to obtain the finite-dimensional representation of \Cref{thm:rep_thm} and to control the eigenvalues of $H_f$ (\Cref{thm:scattered_inequalities}), the resulting problem has the following form:
\begin{align}\label{eq:gen_sos_cvx_pb}
\min_{f \in \hh, A \in \pdm{\hh^d}} J(f) + \Omega(A) ~~\st~~ H_f(x) = F_A(x), ~~\forall x \in \Xcal,
\end{align}
where $\Omega$ is defined in \cref{eq:reg_term}. In the following theorem, we show that encoding the Hessian as a PSD SoS allows recovering any sufficiently smooth strongly convex function.
\begin{theorem}[PSD sum-of-squares representation for convex functions]\label{thm:cvx_hess_function}
Let $\Xcal \subset \RR^d$ and $\hh$ be a RKHS of functions on $\Xcal$ satisfying \Cref{ass:rkhs}\ref{item:product_bounded}-\ref{item:kernel_derivatives_bounded}, and such that $\Ccal^s(\Xcal) \subset \hh$, $s \in \NN$. Let $f \in \Ccal^{s + 2}(\Xcal)$ be strongly convex.  Then, there exists $A \in \pdm{\hh^d}$ such that $H_f(x) = F_A(x), \forall x \in \Xcal$.  
\end{theorem}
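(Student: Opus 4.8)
The plan is to build $A$ explicitly as a finite sum of rank-one operators obtained from the \emph{matrix} square root of the Hessian. The whole point is that strong convexity forces $H_f$ to be \emph{uniformly} positive definite, so that $x\mapsto H_f(x)^{1/2}$ is exactly as smooth as $H_f$ itself, and this smoothness is precisely what places its scalar entries inside $\hh$. First I would write down what $\mu$-strong convexity means: $H_f(x)\succeq \mu\eye_d$ for all $x\in\Xcal$ with $\mu>0$, so that each $H_f(x)$ lies in the open cone $\pd{\RR^d}$ of positive-definite symmetric $d\times d$ matrices.

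Next I would invoke the standard fact that $\Sigma\mapsto \Sigma^{1/2}$ is a $\Ccal^\infty$ (indeed real-analytic) map of $\pd{\RR^d}$ onto itself: the squaring map $S\mapsto S^2$ is a polynomial bijection of $\pd{\RR^d}$ whose differential at $S$ is the Lyapunov operator $E\mapsto SE+ES$ on $\sym{\RR^d}$, which is invertible whenever $S\succ 0$ since its eigenvalues are the sums $\la_i(S)+\la_j(S)>0$; the inverse function theorem then gives the claim. Since $f\in\Ccal^{s+2}(\Xcal)$, every entry $\partial_i\partial_j f$ of $H_f$ is in $\Ccal^s(\Xcal)$, i.e.\ $H_f\in\Ccal^s(\Xcal,\sym{\RR^d})$ with values in the open set $\pd{\RR^d}$; composing with the square root and applying the chain rule yields $L\defeq H_f^{1/2}\in\Ccal^s(\Xcal,\sym{\RR^d})$ as well. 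In particular each scalar entry $L_{ik}$ belongs to $\Ccal^s(\Xcal)\subset\hh$ by hypothesis.

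Then I would assemble $A$. For $k\in[d]$ let $\ell_k\defeq(L_{1k},\dots,L_{dk})\in\hh^d$ be the $k$-th ``column'' of $L$, so that by the reproducing property $\ell_k(x)=L(x)e_k\in\RR^d$ (with $e_k$ the $k$-th canonical basis vector of $\RR^d$). Set $A\defeq\sum_{k=1}^d \ell_k\otimes\ell_k$, a finite sum of rank-one operators on $\hh^d$; then $A\succeq 0$ and $\tr A=\sum_{k=1}^d\|\ell_k\|_{\hh^d}^2=\sum_{i,k=1}^d\|L_{ik}\|_\hh^2<\infty$, so $A\in\pdm{\hh^d}$ with finite trace. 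Using that $\Phi(x)^\ast u=u(x)$ for $u\in\hh^d$ (immediate from the definition of $\Phi$ in \cref{eq:def_phi} and the reproducing property), hence $\Phi(x)^\ast(u\otimes u)\Phi(x)=u(x)u(x)^\top$, I get by linearity over the finite sum
\[
F_A(x)=\sum_{k=1}^d\ell_k(x)\,\ell_k(x)^\top=\sum_{k=1}^d L(x)e_k e_k^\top L(x)^\top=L(x)L(x)^\top=L(x)^2=H_f(x),
\]
where the last two equalities use that $L(x)$ is symmetric and $L(x)^2=(H_f(x)^{1/2})^2=H_f(x)$. This gives the desired identity $H_f=F_A$.

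The main obstacle is the regularity bookkeeping rather than any single hard estimate: passing from $f$ to $H_f$ costs two derivatives, so the hypotheses $f\in\Ccal^{s+2}(\Xcal)$ and $\Ccal^s(\Xcal)\subset\hh$ are matched exactly, and the construction genuinely requires the \emph{strict} bound $H_f\succeq\mu\eye_d$ coming from strong convexity — for a merely convex $f$ the Hessian may be singular and its square root need not even be $\Ccal^1$, which is why the statement is restricted to strongly convex $f$. Everything else (the trace-class property of the $\ell_k\otimes\ell_k$, the validity of the termwise identity for $F_A$, and the interchange with $\Phi(x)^\ast(\cdot)\Phi(x)$) is immediate because $A$ is a sum of only $d$ terms.
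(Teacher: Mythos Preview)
Your proof is correct and follows essentially the same approach as the paper: take the positive square root $L(x)=H_f(x)^{1/2}$, use strong convexity plus the smoothness of the square root on $\pd{\RR^d}$ to place each entry $L_{ik}$ in $\Ccal^s(\Xcal)\subset\hh$, then set $A=\sum_k \ell_k\otimes\ell_k$ with $\ell_k$ the $k$-th column of $L$, which is exactly the paper's construction (the paper writes it in terms of rows $w_i$, but since $L(x)$ is symmetric these coincide). Your inverse-function-theorem argument for the smoothness of the square root is slightly more self-contained than the paper's, which simply cites the analogous step in \cite{rudi2020global}.
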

Proof in \Cref{proof:cvx_hess_function}, \cpageref{proof:cvx_hess_function}. Let us mention that the smoothness constraint $\Ccal^s \subset \hh$ implies $s > d/2$, but that the strong convexity is merely a sufficient condition. As an example, the function $f:(x, y) \in \RR^2 \rightarrow \frac{1}{2}(x - y)^2$ admits the constant Hessian $H_f(x, y) = \begin{psmallmatrix} 1 & -1 \\ -1 & 1 \end{psmallmatrix}$, which is positive semi-definite, but not positive definite. However, its Hessian may be encoded as a PSD SoS for any RKHS containing constant functions. Hence, we conjecture that finer existence conditions could be proven.

\subsection{Finite-Dimensional Representation}

In this section, we show how enforcing convexity via a SoS model for the Hessian, $H_f(x) = F_A(x),  ~\forall x \in \Xcal$ with $A \succeq 0$, leads to tractable optimization. 

\subsubsection{Subsampling the Constraints.}

The generic problem \eqref{eq:gen_sos_cvx_pb} has an infinite number of constraints. Hence, it is not amenable to computation. As described in \Cref{sec:approximating-constraints}, we subsample its constraints on a set $\{x_1, ..., x_n\} \subset \Xcal$. Following \Cref{thm:rep_thm}, we then obtain a finite-dimensional version of problem \eqref{eq:gen_sos_cvx_pb}:
\begin{align}\label{eq:cvx_pb_subsampled}
\min_{f \in \hh, \bB \in \pdm{\RR^{nd}}} J(f) + \Omega(\bB) ~~\st~~ H_f(x_j) = \Psi_j^T \bB \Psi_j,~~ j \in [n].
\end{align}
Subsampling the constraints guarantees that solutions of \cref{eq:cvx_pb_subsampled} are convex in the neighborhood of $\{x_1, ..., x_n\}$, but does not guarantee global convexity. However, in the next result we  leverage \Cref{thm:scattered_inequalities} to obtain global bounds on the convexity deficit of $f$, i.e., the magnitude of the quadratic function that we have to add to $f$ to make it convex, as a function of the fill distance $h_{\hat{X}, \Xcal} \defeq \underset{x \in \Xcal}{\sup}~\underset{i \in [n]}{\min}~\|x-x_i\|.$
\begin{corollary}[\Cref{thm:scattered_inequalities}]\label{cor:scattered_convexity}
Let $\Xcal$ satisfy \Cref{ass:geom_domain} for some $r>0$. Let $k$ satisfy \Cref{ass:rkhs}\ref{item:product_bounded} and \Cref{ass:rkhs}\ref{item:kernel_derivatives_bounded} for some $m > d/2$. Let $\hat{X} = \{x_1, ..., x_n\} \subset \Xcal$ such that $h_{\hat{X}, \Xcal} \leq r \min(1, \frac{1}{18(m-1)^2})$.
Let $f \in \Ccal^{m + 2}(\Xcal)$ and $\bB \in \pdm{\RR^{dn}}$ be such that $$H_f(x_j) = \Psi_j^T \bB \Psi_j,~~ j \in [n].$$ Then, 
$$f(x) + \tfrac{\eta}{2}\|x\|^2 ~~\textrm{is convex} \quad \textrm{for} \quad \eta \geq C h_{\hat{X},  \Xcal}^m,$$
where $C = C_0(\sum_{i=1}^\dimtwo |(H_{f})_{ii}|_{\Xcal, m} + M D_m \tr\bB)$ and $C_0$ is as in \Cref{thm:scattered_inequalities}, and where $(H_{f})_{ij}$ is the the scalar-valued function corresponding to the element of the Hessian $H_f$ with index $i, j \in [\dimtwo]$. 
\end{corollary}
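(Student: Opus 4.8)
The plan is to deduce \Cref{cor:scattered_convexity} directly from \Cref{thm:scattered_inequalities} by choosing the right function $F$ and then reinterpreting the eigenvalue bound in terms of convexity. First I would set $F(x) \defeq H_f(x)$, the Hessian of $f$, which indeed lies in $\Ccal^m(\Xcal, \sym{\RR^d})$ since $f \in \Ccal^{m+2}(\Xcal)$, so that $F_{ij} = (H_f)_{ij} = \partial^2_{ij} f$. The hypotheses on $\Xcal$, on $k$, and on the fill distance $h_{\hat X, \Xcal}$ are exactly those of \Cref{thm:scattered_inequalities} (noting that $m > d/2$ in particular implies $m \in \NN_+$), and the assumption $H_f(x_j) = \Psi_j^T \bB \Psi_j$, $j \in [n]$, is precisely the interpolation hypothesis of that theorem. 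Hence \Cref{thm:scattered_inequalities} applies verbatim and yields
\begin{equation}
\forall x \in \Xcal, \quad \lambda_{\min}(H_f(x)) \geq -\varepsilon, \qquad \varepsilon = C\, h_{\hat X, \Xcal}^m,
\end{equation}
with $C = C_0\big(\sum_{i=1}^{d} |(H_f)_{ii}|_{\Xcal, m} + M D_m \tr \bB\big)$ and $C_0$ as stated.

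The second step is to translate this lower bound on the smallest eigenvalue of the Hessian into the claimed convexity statement. Let $g(x) \defeq f(x) + \tfrac{\eta}{2}\|x\|^2$. Since $x \mapsto \tfrac{\eta}{2}\|x\|^2$ is $\Ccal^\infty$ with Hessian $\eta\, \eye_d$, we have $H_g(x) = H_f(x) + \eta\, \eye_d$ for all $x \in \Xcal$. A $\Ccal^2$ function on a (convex, or at least suitably connected) domain is convex if and only if its Hessian is positive semidefinite everywhere; equivalently $\lambda_{\min}(H_g(x)) \geq 0$ for all $x$. Now $\lambda_{\min}(H_g(x)) = \lambda_{\min}(H_f(x)) + \eta \geq -\varepsilon + \eta$, which is nonnegative as soon as $\eta \geq \varepsilon = C\, h_{\hat X, \Xcal}^m$. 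This gives exactly the corollary.

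The one genuine subtlety — and the main (minor) obstacle — is the domain issue in the "Hessian PSD $\Rightarrow$ convex" equivalence: \Cref{ass:geom_domain} only guarantees that $\Xcal$ is a union of balls of a fixed radius $r$, not that it is convex, so convexity of $g$ on $\Xcal$ must be understood in the appropriate local/path sense, or one restricts attention to the convex components; in any case the pointwise statement $H_g \succeq 0$ on $\Xcal$ is the operative conclusion and is what downstream results (e.g.\ \Cref{thm:subsampling_error}) actually use. Everything else is bookkeeping: checking that the constants $C$ and $C_0$ carry over unchanged (they do, since $F = H_f$ and $F_{ij} = (H_f)_{ij}$), and that the condition $m > d/2$ is only needed for compatibility with \Cref{thm:cvx_hess_function}-type existence statements and is stronger than what \Cref{thm:scattered_inequalities} itself requires. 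No new estimate is needed; the corollary is essentially an application of \Cref{thm:scattered_inequalities} plus the elementary identity $H_{f + \tfrac{\eta}{2}\|\cdot\|^2} = H_f + \eta \eye_d$.
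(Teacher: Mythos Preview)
Your proposal is correct and follows essentially the same approach as the paper: apply \Cref{thm:scattered_inequalities} with $F = H_f$ to obtain $\lambda_{\min}(H_f(x)) \geq -C h_{\hat X,\Xcal}^m$ on $\Xcal$, then observe that adding $\tfrac{\eta}{2}\|x\|^2$ shifts the Hessian by $\eta\,\eye_d$ and hence yields a PSD Hessian whenever $\eta \geq C h_{\hat X,\Xcal}^m$. The paper's proof is in fact terser than yours and does not discuss the domain-convexity subtlety you raise.
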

Proof in \Cref{proof:scattered_convexity}, \cpageref{proof:scattered_convexity}. Corollary \ref{cor:scattered_convexity} guarantees that the strong convexity deficit of $f$ goes to zero as the the number of subsampled inequalities increases, provided they cover the domain $\Xcal$. It is important to note that the strong convexity deficit goes to zero with a rate that is exponentially decreasing in the degree of differentiability of $f $.
Let us remark that another option to obtain convex function beyond adding $\tfrac{\eta}{2}\|x\|^2$ correspond to enforcing the constraints $H_f(x_j) = \Psi_j^T \bB \Psi_j + \eta \eye_\dimtwo,~~ j\in [n]$ in \cref{eq:cvx_pb_subsampled}. However, the constant $\eta$ given by Corollary \ref{cor:scattered_convexity} depends on $f$ and $\bB$ and is therefore not known in advance. Under further assumptions on the functional $J$, $\eta$ can be used to quantify the error induced by solving \eqref{eq:cvx_pb_subsampled} instead of \eqref{eq:gen_cvx_pb_hessian}, as shown in the following theorem.

\begin{theorem}\label{thm:subsampling_error}
Let $\Xcal$ satisfy \Cref{ass:geom_domain} for some $r>0$. Let $\hat{X} = \{x_1, ..., x_n\} \subset \Xcal$ such that $h_{\hat{X}, \Xcal} \leq r \min(1, \frac{1}{18(m-1)^2})$. Let $\hh, k$ satisfy \Cref{ass:rkhs}\ref{item:product_bounded} and \Cref{ass:rkhs}\ref{item:kernel_derivatives_bounded} for some $m > d/2$, and let $\Fcal$ satisfy \Cref{ass:rkhs}\ref{item:kernel_derivatives_bounded} for $m' = m + 2$. Let $(f_\star, A_\star)$ be a solution of
\begin{align}\label{eq:continuous_pb_psd_constraints}
    \underset{\substack{f \in \Fcal,~A \in \pdm{\hh^d}}}{\min} J(f) ~~\st~~~ H_f(x) = F_A(x), ~~x \in \Xcal,
\end{align}
and $(\hat{f}, \hat{\bB})$ be the solution of
\begin{align}\label{eq:cvx_pb_subsampled_reg}
\min_{f \in \Fcal, ~\bB \in \pdm{\RR^{nd}}} J(f) + \rho \|f\|^2_\Fcal + \la \tr \bB ~~\st~~ H_f(x_j) = \Psi_j^T \bB \Psi_j,~~ j \in [n],
\end{align}
 and let $\eta$ be as in \Cref{cor:scattered_convexity}. Assume that $J$ is $L$-Lipschitz for some seminorm $N(\cdot)$. Then, $\tilde{f} = x \mapsto \hat{f}(x) + \tfrac{\eta}{2}\|x\|^2$ is a convex function on $\Xcal$ and it holds
\begin{align}\label{eq:subsampling_error}
     J(\tilde{f}) - J(f_\star) \leq R^2 (1 + \tfrac{M C_1}{\la} h^m_{\hat{X}, \Xcal}) \,\rho  ~+~ C_1 d R ~ h^m_{\hat{X}, \Xcal}.
\end{align}
where $R^2 = \frac{\la}{\rho}\tr A_\star +  \|f_\star\|_\Fcal^2$ and  $C_1 = \tfrac{1}{2} L C_N C_0 \max(D_m, D_{m+2})$, $C_N = N(s)$ and $s$ is the function $s(x) = \|x\|^2$ for $x \in \RR^d$. In particular, when $\rho = \la = M C_1 h^m_{\hat{X}, \Xcal}$, then
\begin{align}
     J(\tilde{f}) - J(f_\star) ~\leq~ Q ~ h^m_{\hat{X}, \Xcal}, 
\end{align}
with $Q = 4MC_1 (\tr A_\star +  \|f_\star\|_\Fcal^2 + \frac{d^2}{M^2})$.
\end{theorem}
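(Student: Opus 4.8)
The plan is to split $J(\tilde f) - J(f_\star) = (J(\tilde f) - J(\hat f)) + (J(\hat f) - J(f_\star))$, bound the first term (the bias of the quadratic correction $\tfrac\eta2\|\cdot\|^2$) via the Lipschitz hypothesis, bound the second (the suboptimality of $\hat f$) by comparing $(\hat f, \hat{\bB})$ against a feasible point of \cref{eq:cvx_pb_subsampled_reg} built from $(f_\star, A_\star)$, and close the loop by feeding back control on $\|\hat f\|_\Fcal$ and $\tr\hat{\bB}$ — which govern both the correction size $\eta$ and the bias. Convexity of $\tilde f$ needs no work: the hypotheses are exactly those of \Cref{cor:scattered_convexity} applied to $f=\hat f$, $\bB=\hat{\bB}$ (in particular $\hat f\in\Ccal^{m+2}(\Xcal)$ because $\Fcal$ satisfies \Cref{ass:rkhs}\ref{item:kernel_derivatives_bounded} for $m'=m+2$), so $\tilde f = \hat f + \tfrac\eta2\|\cdot\|^2$ is convex for $\eta = C\,h^m_{\hat{X},\Xcal}$ with $C=C_0(\sum_{i=1}^d |(H_{\hat f})_{ii}|_{\Xcal,m} + MD_m\tr\hat{\bB})$.

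For the suboptimality term, transfer feasibility from \cref{eq:continuous_pb_psd_constraints} to \cref{eq:cvx_pb_subsampled_reg}: since $H_{f_\star}(x_j)=F_{A_\star}(x_j)=\Phi(x_j)^*A_\star\Phi(x_j)$ depends on $A_\star$ only through its compression $P_V A_\star P_V \succeq 0$ onto the span $V$ of $\{\Phi(x_i)w: i\in[n],w\in\RR^d\}$, and $\tr(P_V A_\star P_V)\le\tr A_\star$, the argument proving \Cref{thm:rep_thm} together with the change of variables of \Cref{subsubsec:finite_rep} gives $\bB_\star\in\pdm{\RR^{nd}}$ with $H_{f_\star}(x_j)=\Psi_j^\top\bB_\star\Psi_j$ and $\tr\bB_\star\le\tr A_\star$; thus $(f_\star,\bB_\star)$ is feasible for \cref{eq:cvx_pb_subsampled_reg}. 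Optimality of $(\hat f,\hat{\bB})$ then gives $J(\hat f)+\rho\|\hat f\|_\Fcal^2+\la\tr\hat{\bB} \le J(f_\star)+\rho\|f_\star\|_\Fcal^2+\la\tr\bB_\star \le J(f_\star)+\rho R^2$, hence $J(\hat f)-J(f_\star)\le\rho R^2-\rho\|\hat f\|_\Fcal^2-\la\tr\hat{\bB}$, and — using that $\hat f$ cannot undercut the continuous optimum, $J(\hat f)\ge J(f_\star)$ (see below) — also $\|\hat f\|_\Fcal\le R$ and $\la\tr\hat{\bB}\le\rho R^2$.

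For the bias term, $\tilde f-\hat f=\tfrac\eta2 s$ with $s(x)=\|x\|^2$, so the $L$-Lipschitz hypothesis gives $J(\tilde f)-J(\hat f)\le L\,N(\tfrac\eta2 s)=\tfrac\eta2 LC_N$; substituting $\eta=C\,h^m_{\hat{X},\Xcal}$ and bounding $|(H_{\hat f})_{ii}|_{\Xcal,m}\le|\hat f|_{\Xcal,m+2}\le D_{m+2}\|\hat f\|_\Fcal$ (the last since $|u|_{\Xcal,m'}\le D_{m'}\|u\|_\Fcal$, which follows from \Cref{ass:rkhs}\ref{item:kernel_derivatives_bounded} for $\Fcal$) yields $J(\tilde f)-J(\hat f)\le C_1 h^m_{\hat{X},\Xcal}(d\|\hat f\|_\Fcal+M\tr\hat{\bB})$ with $C_1=\tfrac12 LC_N C_0\max(D_m,D_{m+2})$. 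Summing the two bounds, discarding the nonpositive $-\rho\|\hat f\|_\Fcal^2$ and $-\la\tr\hat{\bB}$, and inserting $\|\hat f\|_\Fcal\le R$, $\tr\hat{\bB}\le\rho R^2/\la$ yields \cref{eq:subsampling_error}. For the last claim, taking $\rho=\la=MC_1 h^m_{\hat{X},\Xcal}$ makes $R^2=\tr A_\star+\|f_\star\|_\Fcal^2$, and $C_1 dR\,h^m_{\hat{X},\Xcal}\le\tfrac12 C_1 h^m_{\hat{X},\Xcal}(MR^2+d^2/M)$ by AM-GM collapses everything into $Q\,h^m_{\hat{X},\Xcal}$ with the stated (loosened) $Q$.

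The main obstacle is exactly the bound $J(\hat f)\ge J(f_\star)$ used to pass from the optimality inequality to $\|\hat f\|_\Fcal\le R$ and $\la\tr\hat{\bB}\le\rho R^2$: it is not automatic, since $\hat f$ need only be convex near $\hat{X}$ and so need not be feasible for \cref{eq:continuous_pb_psd_constraints}. One resolves it by working with the convexified iterate: $\tilde f$ is convex, so $\tilde f+\varepsilon\|\cdot\|^2$ is strongly convex for every $\varepsilon>0$; hence, under the smoothness hypotheses ensuring \Cref{thm:cvx_hess_function} applies, $H_{\tilde f+\varepsilon\|\cdot\|^2}$ is a PSD SoS and $(\tilde f+\varepsilon\|\cdot\|^2, A_\varepsilon)$ is feasible for \cref{eq:continuous_pb_psd_constraints}, so $J(\tilde f+\varepsilon\|\cdot\|^2)\ge J(f_\star)$; letting $\varepsilon\to0$ and using continuity of $J$ gives $J(\tilde f)\ge J(f_\star)$, from which the bias bound lets one absorb $J(f_\star)-J(\hat f)$ into the $h^m_{\hat{X},\Xcal}$ terms. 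The feasibility transfer and the derivative estimates are otherwise routine bookkeeping.
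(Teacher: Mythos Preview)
Your proposal mirrors the paper's proof almost step for step: the same splitting $J(\tilde f)-J(f_\star)=(J(\tilde f)-J(\hat f))+(J(\hat f)-J(f_\star))$, the same use of $(f_\star,A_\star)$ (via the representer-type compression) as a feasible point for \cref{eq:cvx_pb_subsampled_reg} to get $J(\hat f)+\rho\|\hat f\|_\Fcal^2+\la\tr\hat{\bB}\le J(f_\star)+\rho R^2$, the same Lipschitz bound $|J(\tilde f)-J(\hat f)|\le\tfrac12 LC_N\eta$, and the same derivative estimate $|(H_{\hat f})_{ii}|_{\Xcal,m}\le D_{m+2}\|\hat f\|_\Fcal$ leading to the final inequality and the specialization $\rho=\la=MC_1 h^m_{\hat{X},\Xcal}$.

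You are in fact more careful than the paper at one step. The paper simply writes ``since $J(f_\star)\le J(\hat f)$'' to deduce $\rho\|\hat f\|_\Fcal^2+\la\tr\hat{\bB}\le\rho R^2$, without justification; your observation that $\hat f$ need not be feasible for \cref{eq:continuous_pb_psd_constraints} is correct, and the paper's own proof glosses over exactly this point. Your proposed workaround via $J(\tilde f)\ge J(f_\star)$ is the natural idea, but as written it is not closed: recovering $J(\hat f)\ge J(f_\star)-\text{(bias)}$ from it reuses the bias bound, which already depends on $\|\hat f\|_\Fcal$ and $\tr\hat{\bB}$, so you end up with a self-referential inequality of the form $\rho\|\hat f\|_\Fcal^2+\la\tr\hat{\bB}\le\rho R^2+C_1h^m(d\|\hat f\|_\Fcal+M\tr\hat{\bB})$. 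This \emph{can} be solved (e.g.\ when $\la>MC_1h^m_{\hat{X},\Xcal}$, which is implied by the choice $\la=MC_1h^m_{\hat{X},\Xcal}$ only marginally), but you do not spell out the bootstrap. A second wrinkle: feasibility of $\tilde f+\varepsilon\|\cdot\|^2$ for \cref{eq:continuous_pb_psd_constraints} requires $\tilde f\in\Fcal$, hence $\|\cdot\|^2\in\Fcal$, which is not among the stated hypotheses.
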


Proof in \Cref{proof:subsampling_error}, \cpageref{proof:subsampling_error}. Note that if we choose the discretization points uniformly at random in $\Xcal$ and $\Xcal$ is a convex set, then $h_{\hat{X}, \Xcal} \leq (C\frac{1}{n}\log(1/\delta))^{1/d}$, with probability $1-\delta$. Then, the algorithm above leads to an error in the order of 
$$ J(\tilde{f}) - J(f_\star) \leq ~Q'~(\tr A_\star +  \|f_\star\|_\Fcal^2 + \frac{d^2}{M^2})~n^{-m/d},$$
where $Q' = Q (\log \frac{1}{\delta})^{m/d}$, while $Q$ depends only on $m, d$ and is exponential in $\max(m, d)$.

\subsubsection{Encoding the constraints.}
To enforce the constraints $H_f(x_j) = \Psi_j^T \bB \Psi_j,~ j \in [n]$  we must choose a representation of the Hessian of $f$ at points $\{x_1, ..., x_n\}$. In this section, we propose two such representations. The first (i) leverages the reproducing property on derivatives \citep{zhou2008derivative}, to obtain an exact representation, whereas the second (ii) is an approximate representation, that is only asymptotically accurate but is more amenable to computation.\\

\textit{(i) Reproducing property for derivatives.}
If the kernel $k$ is regular enough, reproducing properties hold for function derivatives~\citep{zhou2008derivative}. More precisely, if $k \in \Ccal^{2s}(\Xcal \times \Xcal)$ for some $s \in \NN_+$, then it holds:
\begin{itemize}
    \item $\forall x\in \Xcal$, $\forall \alpha \in \NN^{\dimone}~~\st~~|\alpha| \leq s$, $\partial^\alpha k_x \defeq \frac{\partial^{|\alpha|} k(s, \cdot)}{\partial s_1^{\alpha_1}, ..., \partial s_\dimone^{\alpha_\dimone}}\Bigr|_{s = x} \in \hh$;
    \item $\forall x \in \Xcal, \forall f \in \hh, \forall \alpha \in \NN^{\dimone}~~\st~~|\alpha| \leq s$, $\partial^\alpha f(x) = \dotp{f}{\partial^\alpha k_x}_\hh$. In particular, for all $x, x' \in \Xcal$ it holds $\dotp{\partial^\alpha k_x}{\partial^\alpha k_{x'}}_\hh = \frac{\partial^{2|\alpha|} k(s, t)}{\partial s_1^{\alpha_1}, ..., \partial s_\dimone^{\alpha_\dimone}\partial t_1^{\alpha_1}, ..., \partial t_\dimone^{\alpha_\dimone}} \Bigr|_{\substack{s=x\\t=x'}}$.
\end{itemize}
In this work, we will mostly consider derivatives up to order $2$. Hence, we will use the following simplified notations:
\begin{align}
\partial_i k_x \defeq \frac{\partial k(s, \cdot)}{\partial s_i}\Bigr|_{s = x}~~~\text{and}~~~\partial_{ij} k_x \defeq \frac{\partial^2 k(s, \cdot)}{\partial s_i \partial s_j}\Bigr|_{s = x},~~ x\in\Xcal, i,j \in [\dimone].
\end{align}
Let $\{e_p, p \in [\dimtwo]\}$ denote the canonical basis of $\RR^\dimtwo$. Then, for $v \in \Xcal$ we have $H_f(v) = \sum_{p, q = 1}^\dimtwo \frac{\partial^2f(v)}{\partial v_p v_q} e_p e_q^T$. Hence, we may rewrite the constraints as
\begin{equation}\label{eq:exact_rep}
\sum_{p, q=1}^d \dotp{f}{\partial_{pq}k_{x_j}} e_p e_q^T = \Psi_j^T \bB \Psi_j, ~~j \in [n].
\end{equation}
This yields the following problem:
\begin{equation}\label{eq:cvx_pb_subsampled_reproducing}
\min_{f \in \hh, \bB \in \pdm{\RR^{nd}}} J(f) + \Omega(\bB) ~~\st~~ \sum_{p, q=1}^d \dotp{f}{\partial_{pq}k_{x_j}} e_p e_q^T = \Psi_j^T \bB \Psi_j,~~ j \in [n].
\end{equation}
We may then turn \cref{eq:cvx_pb_subsampled_reproducing} into a finite-dimensional optimization problem by considering its dual formulation. This is illustrated on a convex regression problem in \Cref{sec:cvx_reg_exact}.\\

\textit{(ii) Approximate representation.}
While the representation in \cref{eq:exact_rep} allows us to model the Hessian exactly, it may be cumbersome to use in practice. As an example, if the objective functional $J$ contains quadratic terms, solving \cref{eq:cvx_pb_subsampled_reproducing} requires evaluating $\dotp{\partial_{pq}k_{x_i}}{{\partial_{rs}k_{x_j}}}$ for  $p, q, r, s \in [\dimtwo], i, j \in [n]$, which quickly becomes prohibitive as the dimension $\dimtwo$ increases. As an alternative, one may expand $f$ along some features $\{k_{z_i}, i \in [\ell]\}$ (which may be given by the problem, e.g.\ in regression problems), and optimize the weights $\alpha \in \RR^\ell$ such that $f(x) = \sum_{i = 1}^\ell \alpha_i k(x, z_i)$. The constraints are then applied on $\alpha$ by deriving the Hessian of $f$:
\begin{equation}\label{eq:approx_rep}
    \sum_{p, q=1}^d \sum_{i=1}^\ell \alpha_i \frac{\partial^2 k(x_j, z_i)}{\partial x_j^p x_j^q}e_pe_q^T  = \Psi_j^T \bB \Psi_j, ~~j \in [n].
\end{equation}
Compared to the exact representation \cref{eq:exact_rep}, this representation only involves second order derivatives of the kernel, compared to fourth order. Further, it allows controlling the size of the expansion more easily: compared to (ii), using (i) implies using an additional coefficient for each $\partial_{p q} k_{x_j}$, of which there are $n d(d + 1) /2$ (taking symmetry into account).
However, while the pair $(0_\ell, 0_{\ell d \times \ell d})$ always satisfies \cref{eq:approx_rep}, compared to \cref{eq:exact_rep}, some hypothesis is required to ensure that set of pairs $(\alpha, \bB) \in \RR^\ell \times \pdm{\RR^{\ell d}}$ satisfying \cref{eq:approx_rep} has non-empty interior, so that it can be used in practice. Hence, we require the following hypothesis: 
\begin{itemize}
\item[$(H_1)$]\label{hyp:local_strict_cvx} There exists $\alpha \in \RR^\ell$ such that $f(\cdot) \defeq \sum_{i=1}^\ell \alpha_i k(\cdot, z_i)$ is strictly convex in $x_j, j \in [n]$.
\end{itemize}
As examples, for fixed points $(x_1, ..., x_n) \in \RR^d$ and with the Gaussian or the exponential kernel, 
there exists an $\alpha$ satisfying $(H_1)$ if $z_i = x_i, i \in [n]$ and the bandwidth is small enough, or if the kernel is universal and the number of distinct sample points $\ell$ is larger than $n d^2$. 
We leave to finding finer conditions satisfying $(H_1)$ for future work.

\subsubsection{Choosing a representation for smooth convex functions.}
We end this section with a discussion on the choice of PSD SoS to enforce convexity for smooth functions. Compared to \cref{eq:gen_sos_cvx_pb}, several alternative approaches based on scalar kernel SoS that enforce convexity may be considered, which we now briefly examine. Let us motivate our choice based on a few criteria that should be taken into account:
\begin{enumerate}
    \item {\em Representation power}: the class of functions that may be encoded;
    \item {\em Sampling}: the amount of data that must be sampled to learn such a representation;
    \item {\em Scattered data inequalities}: the convexity guarantees one can obtain from subsampled constraints;
    \item {\em Computational cost}: the cost of enforcing those constraints in convex variational problems.
\end{enumerate}
In light of those criteria, let us consider two alternative approaches. The first consists in enforcing that the function $f$ lies above its tangents:
\begin{equation}\label{eq:tangent_constraint}
    \forall x, y \in \Xcal, f(x) - f(y) - \dotp{\nabla f(y)}{x-y} = \dotp{\phi(x, u)}{A\phi(x, u)}_\hh, \quad A \in \pdm{\hh(\Xcal \times\Xcal)},
\end{equation}
where $\hh(\Xcal \times \Xcal)$ denotes a RKHS of functions over $\Xcal \times \Xcal$. The second consists in enforcing that the Hessian of $f$ should have non-negative eigenvalues by representing bilinear products as kernel SoS:
\begin{equation}\label{eq:bilinear_constraint}
    \forall x \in \Xcal, \forall u \in S^{d-1}, u^T H[f](x) u = \dotp{\phi(x, u)}{A\phi(x, u)}_\hh,\quad  A \in \pdm{\hh(\Xcal \times S^{d-1})},
\end{equation}
where $S^{d-1} \subset \RR^d$ is the unit sphere. Following the same proof techniques as in \Cref{thm:cvx_hess_function} and Theorem 5 of \cite{vacher2021dimension}, one may show that \eqref{eq:bilinear_constraint} and \cref{eq:gen_sos_cvx_pb} have the same representation power, whereas \eqref{eq:tangent_constraint} may represent function that are $1$ order less regular (i.e.\ $m > 1 + d/2$ instead of $m > 2 + d/2$). In terms of sampling, \eqref{eq:tangent_constraint} requires covering $\Xcal^2$, \eqref{eq:bilinear_constraint} covering $\Xcal \times S^{d-1}$, while \cref{eq:gen_sos_cvx_pb} only $\Xcal$, which is more efficient and in particular allows obtaining smaller discrete problems. Next, extending the scattered data inequalities from \cite{rudi2020global} (Theorem 13), one may show that \eqref{eq:tangent_constraint} yields $ \forall x, y \in \Xcal,f(x) - f(y) - \dotp{\nabla f(y)}{x-y}\geq - \eta$  with $\eta > 0$, which does not translate to (strong) convexity guarantees, while \eqref{eq:bilinear_constraint} and \cref{eq:gen_sos_cvx_pb} (from Corollary \ref{cor:scattered_convexity}) imply that $f$ is $-\eta$-strongly convex for some $\eta > 0$. Finally, while \eqref{eq:tangent_constraint} and \eqref{eq:bilinear_constraint} rely on scalar-valued kernels compared to matrix-valued kernels, the fact that they require sampling two variables leads to a number of dual variables of the order $O(n^2)$ (if we sample $n$ variables from each domain), compared to $O(n d^2)$ for matrix models. Since in most applications, $d$ is negligible compared to $n$, the PSD matrix model is also more interesting computationally.

\section{Application to Convex Regression}\label{sec:applications}

In this section, we illustrate the model introduced in \Cref{sec:convex_functions} on a convex regression task. In convex regression, we are given a training set $(x_1, y_1), ..., (x_n, y_n) \in \RR^d \times \RR$, over which we aim to fit a function $f: \RR^d \rightarrow \RR$ according to a least squares loss, under the constraint that $f$ is convex:
\begin{align}\label{eq:cvx_reg}
\begin{split}
&\min_{f : \RR^d \rightarrow \RR}~ \frac{1}{n} \sum_{i=1}^n(y_i - f(x_i))^2 ~~\st ~~f ~~\text{convex}.
\end{split}
\end{align}
This problem was first considered in the 1950's \citep{hildreth1954point}, motivated by applications to economics. The prevalent approach is to solve \eqref{eq:cvx_reg} on the set of piecewise-affine functions~\citep{seijo2011nonparametric}, which amounts to solving the following linearly constrained quadratic program:
\begin{align}\label{eq:cvx_reg_lcqp}
\begin{split}
&\min_{\substack{\theta \in \RR^n\\ \zeta_1, ... \zeta_n \in \RR^d}}~ \frac{1}{n} \sum_{i=1}^n(y_i - \theta_i)^2 ~~\st~~ \theta_i + \zeta_i^T(x_i - x_j) \leq \theta_j,~~i, j \in [n].
\end{split}
\end{align}
In \cref{eq:cvx_reg_lcqp}, $\theta_i$ represents the value of $f$ at $x_i$, and $\zeta_i$ a subgradient of $f$ at $x_i$. Hence, the constraints in \cref{eq:cvx_reg_lcqp} correspond to \cref{eq:tangent_constraint} for piecewise-linear functions. From there, the estimator may be computed at a new point $x$ as $\hat{f}(x) = \max_{i=1, .., n}\{\hat{\theta}_i + \hat{\zeta_i}^T(x - x_i) \}$.
While this method yields a convex function, it may not leverage the smoothness of the data, e.g.\ when the samples are distributed as $Y = f_\star(X) + \varepsilon$ for some smooth function $f_\star$ and noise $\varepsilon$. 

Using the kernel SoS model introduced in \Cref{sec:convex_functions}, we may perform linear regression with smooth functions, under approximate convexity guarantees given by Corollary \ref{cor:scattered_convexity}. We focus here on the approximate representation (ii) of \Cref{sec:convex_functions}, and refer to \Cref{sec:cvx_reg_exact} for the exact representation (i) using reproducing properties of derivatives.
%
That is, the sum-of-square Hessian constraints are applied at given points of $\{v_1, ..., v_\ell\}$, which may e.g.\ be distributed along a grid, or sampled uniformly at random. 
In the experiments in \Cref{subsec:cvx_reg_exp}, we consider the particular case where $\{v_1, ..., v_\ell\} = \{x_1, ..., x_n\}$.
Plugging $f(x) = \sum_{i=1}^n \alpha_i k(x, x_i)$ in \cref{eq:cvx_reg} with the constraints \cref{eq:approx_rep} and with an additional ridge regularization term $\|f\|_\hh^2 = \alpha^T \bK \alpha$, we obtain the following problem.
\begin{align}\label{eq:approx_sos_cvx_reg}
\begin{split}
&\min_{\substack{\alpha \in \RR^n \\ \bB \in \pdm{\RR^{\ell d}}}}~ \frac{1}{n} \sum_{i=1}^n(y_i - \sum_{j=1}^n \alpha_j k(x_i, x_j))^2  +  \lambda_1 \tr \bB + \frac{\la_2}{2}\|\bB\|_F^2 +  \rho \alpha^T \bK \alpha
\\&~~\st~~\forall j \in [\ell], \forall p, q\in [d],~~  \sum_{i=1}^n \alpha_i \frac{\partial^2 k(x_i, v_j)}{\partial v_j^p\partial v_j^q}  = e_p^T(\Psi_j^T \bB \Psi_j)e_q.
\end{split}
\end{align}

\begin{proposition}\label{prop:cvx_reg_approx_dual}
Assuming $(H_1)$, problem \eqref{eq:approx_sos_cvx_reg} admits the following dual formulation:
\begin{align}\label{eq:cvx_reg_approx_dual}
    \min_{\Gamma \in \sym{\RR^d}^\ell} Z(\Gamma)^T(\frac{1}{n} K^2 + \rho K)\inv Z(\Gamma)  + \frac{1}{2\la_2}\|[\sum_{i=1}^\ell \Psi_i\Gamma^{(i)}\Psi_i^T + \lambda_1\eye_{\ell d}]_-\|_F^2,
\end{align}
with $Z(\Gamma) \defeq \frac{1}{2}\sum_{j=1}^\ell\sum_{p, q=1}^d\Gamma^{(j)}_{p, q} \frac{\partial^2K(X, v_j)}{\partial v_j^p\partial v_j^q}  + \frac{1}{n}\bK y \in \RR^n$, $\bK \defeq [k(x_i, x_j)]_{i,j \in [n]} \in \RR^{n\times n}$  and  $$\forall p, q \in [d], j\in [\ell], \frac{\partial^2K(X, v_j)}{\partial v_j^p\partial x_v^q} \defeq \left[\frac{\partial^2K(x_i, v_j)}{\partial v_j^p\partial v_j^q}\right]_{i \in [n]} \in \RR^{n}.$$
\end{proposition}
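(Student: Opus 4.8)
The plan is to derive the dual by introducing Lagrange multipliers for the equality constraints in \eqref{eq:approx_sos_cvx_reg} and then minimizing the Lagrangian over the primal variables $\alpha$ and $\bB$ separately. First I would attach a matrix multiplier $\Gamma^{(j)} \in \sym{\RR^d}$ to each block of constraints $\sum_{i=1}^n \alpha_i \frac{\partial^2 k(x_i, v_j)}{\partial v_j^p \partial v_j^q} = e_p^\top(\Psi_j^\top \bB \Psi_j) e_q$, $p,q \in [d]$, so that the Lagrangian reads
\begin{align*}
\Lcal(\alpha, \bB, \Gamma) = \tfrac{1}{n}\|y - \bK\alpha\|^2 + \rho\,\alpha^\top\bK\alpha + \lambda_1\tr\bB + \tfrac{\la_2}{2}\|\bB\|_F^2 - \sum_{j=1}^\ell \sum_{p,q=1}^d \Gamma^{(j)}_{p,q}\Big(\textstyle\sum_{i=1}^n \alpha_i \tfrac{\partial^2 k(x_i,v_j)}{\partial v_j^p\partial v_j^q} - e_p^\top\Psi_j^\top\bB\Psi_j e_q\Big).
\end{align*}
Invoking $(H_1)$ guarantees Slater's condition (the constraint set has nonempty interior in $\RR^\ell\times\pdm{\RR^{\ell d}}$), so strong duality holds and the inner minimization is exact.

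Next I would carry out the two inner minimizations. The minimization over $\alpha$ is an unconstrained convex quadratic: collecting the linear-in-$\alpha$ terms, the gradient condition gives $(\tfrac{2}{n}\bK^2 + 2\rho\bK)\alpha = \tfrac{2}{n}\bK y - \sum_{j,p,q}\Gamma^{(j)}_{p,q}\,\partial^2_{pq}K(X,v_j)$, i.e.\ $(\tfrac1n\bK^2+\rho\bK)\alpha = Z(\Gamma)$ with $Z(\Gamma)$ exactly as defined in the statement (up to the factor $\tfrac12$ absorbed in front of the derivative sum). Substituting back produces the term $-Z(\Gamma)^\top(\tfrac1n\bK^2+\rho\bK)^{-1}Z(\Gamma)$ together with constants; here one uses that $\tfrac1n\bK^2+\rho\bK$ is invertible (positive definite on the relevant subspace, or one works with pseudoinverses as in \Cref{thm:dual_formulation}). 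The minimization over $\bB \in \pdm{\RR^{\ell d}}$ reproduces exactly the computation behind \Cref{thm:dual_formulation}(i): the $\bB$-dependent part is $\tfrac{\la_2}{2}\|\bB\|_F^2 + \langle \lambda_1\eye_{\ell d} + \sum_i \Psi_i\Gamma^{(i)}\Psi_i^\top, \bB\rangle$ (rewriting $\sum_{p,q}\Gamma^{(j)}_{p,q}e_p^\top\Psi_j^\top\bB\Psi_j e_q = \langle \Psi_j\Gamma^{(j)}\Psi_j^\top, \bB\rangle$), whose minimum over the PSD cone is $-\tfrac{1}{2\la_2}\big\|[\sum_i\Psi_i\Gamma^{(i)}\Psi_i^\top + \lambda_1\eye_{\ell d}]_-\big\|_F^2$, attained at $\bB^\star = \tfrac{1}{\la_2}[\,\cdot\,]_-$. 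Flipping the overall sign to turn the $\sup$ into a $\min$ yields \eqref{eq:cvx_reg_approx_dual}.

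The main obstacle I anticipate is bookkeeping rather than conceptual: correctly tracking the factor $\tfrac12$ and the placement of the derivative matrices $\partial^2_{pq}K(X,v_j)$ so that the multiplier $\Gamma$ appears symmetrized as in the definition of $Z(\Gamma)$, and justifying the use of $(\tfrac1n\bK^2+\rho\bK)^{-1}$ when $\rho = 0$ or $\bK$ is rank-deficient (one restricts to $\mathrm{range}(\bK)$, exactly as the term $\tfrac1n\bK y$ already lies there, or keeps $\rho>0$). The PSD-cone minimization over $\bB$ is not new — it is the same Fenchel/projection argument as in \Cref{thm:dual_formulation} — so I would simply cite that computation. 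A secondary point worth a line is verifying that $(H_1)$ indeed implies Slater: given a strictly convex $f=\sum\alpha_ik(\cdot,z_i)$ at the points $v_j$, the Hessians $H_f(v_j)\succ 0$, and since the $\Psi_j$ have full column rank, one can solve $\Psi_j^\top\bB\Psi_j = H_f(v_j)$ with $\bB\succ 0$ on the relevant subspace, giving an interior feasible pair.
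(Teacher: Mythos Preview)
Your approach is essentially identical to the paper's: Lagrangian duality with multipliers $\Gamma^{(j)}\in\sym{\RR^d}$, strong duality via $(H_1)$ and Slater (the paper gives the same explicit construction of a strictly feasible $\bB\succ 0$ from the positive-definite Hessians), unconstrained quadratic minimization in $\alpha$ yielding $\alpha=(\tfrac1n\bK^2+\rho\bK)^{-1}Z(\Gamma)$, and the Fenchel computation from \Cref{thm:dual_formulation}/Lemma~\ref{lemma:omega_dual} for the $\bB$-block. The only slip is a sign in your displayed first-order condition (the derivative term should appear with a $+$, not a $-$, on the right-hand side), but you land on the correct $Z(\Gamma)$ regardless, so the plan is sound.
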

Proof in \Cref{proof:cvx_reg_approx_dual}, \cpageref{proof:cvx_reg_approx_dual}. 
Problem \eqref{eq:cvx_reg_approx_dual} is smooth and convex, hence it is amenable to accelerated gradient descent. However, its Hessian may have arbitrarily small eigenvalues, therefore it is generally not strongly convex. The computational bottleneck in solving \cref{eq:cvx_reg_approx_dual} is the computation of the negative part, which takes $O(\ell^2 d^3)$ time to form the matrix $\sum_{i=1}^\ell \Psi_i\Gamma^{(i)}\Psi_i^T + \lambda_1\eye_{\ell d}$ and $O(\ell^3 d^3)$ time to compute its SVD. Following~\cite{rudi2021psd} and \cite{muzellec2021note}, we may reduce the computational load by using a Nystrom approximation of $\bK$~\citep[see e.g.][]{williams2001using} to lower the size of the features $\Psi_i, i \in [\ell]$ from  $ \ell d \times d$ to $rd\times d$ with $r<\ell $. This brings down the cost of forming the matrix to $O(\ell rd^3)$ and the cost of the SVD to $O(r^3d^3)$, hence to a total cost of $O(\ell rd^3 + r^3d^3)$ operations per iteration.

\subsection{Numerical Experiments}\label{subsec:cvx_reg_exp}

We illustrate our method on a convex regression task where the data is sampled from a convex function, whose Hessian is PSD but not positive definite, with the addition of noise. More precisely, let $a> 0$ and define $f_a(x) = (\cos(a x) - 1) / a^2 + x^2 / 2$. We have $f_a''(x) = 1 - \cos(ax) \geq 0$. Hence, $f$ is convex, but not strictly so: its second derivative has countably many zeroes. In our experiments, we sample features $X_1, ..., X_n$ uniformly in a hyper cube $[-b, b]^d$, and generate outputs 
\begin{figure}
\centering
\vskip-.5cm
\includegraphics[width=0.39\textwidth]{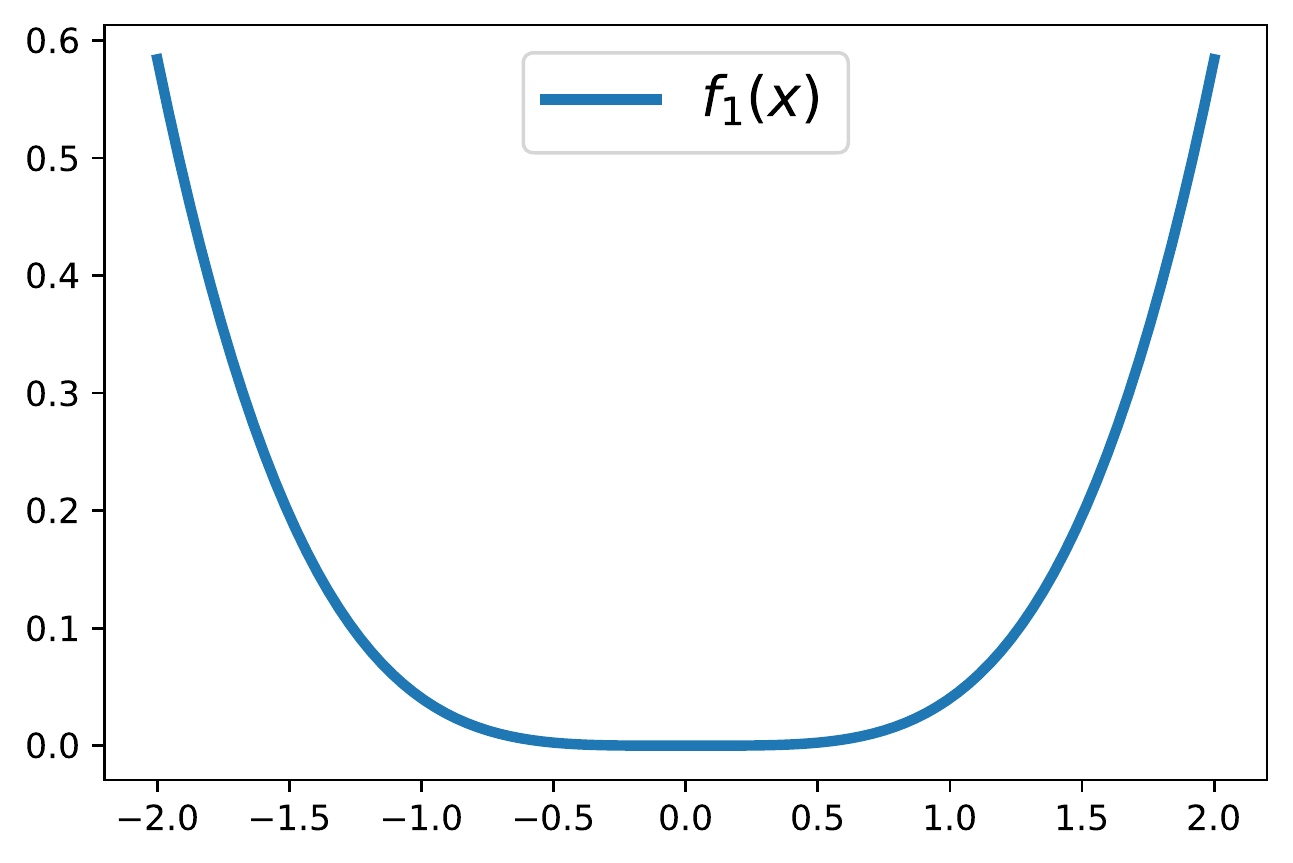}
\caption{Ground truth (1D)}
\label{fig:flat_function}
\end{figure}
\begin{equation}\label{eq:regression_samples}
    Y_i = f_a(\|X_i\|) + \eta\cdot\varepsilon_i,~ i \in [n], ~\text{with}~\eta > 0 ~\text{and}~ \varepsilon_i \underset{iid}{\sim} \Ncal(0, 1),~ i \in [n].
\end{equation}
\begin{figure}
    \centering
    \begin{subfigure}[b]{0.45\textwidth}
         \centering
         \includegraphics[width=\textwidth]{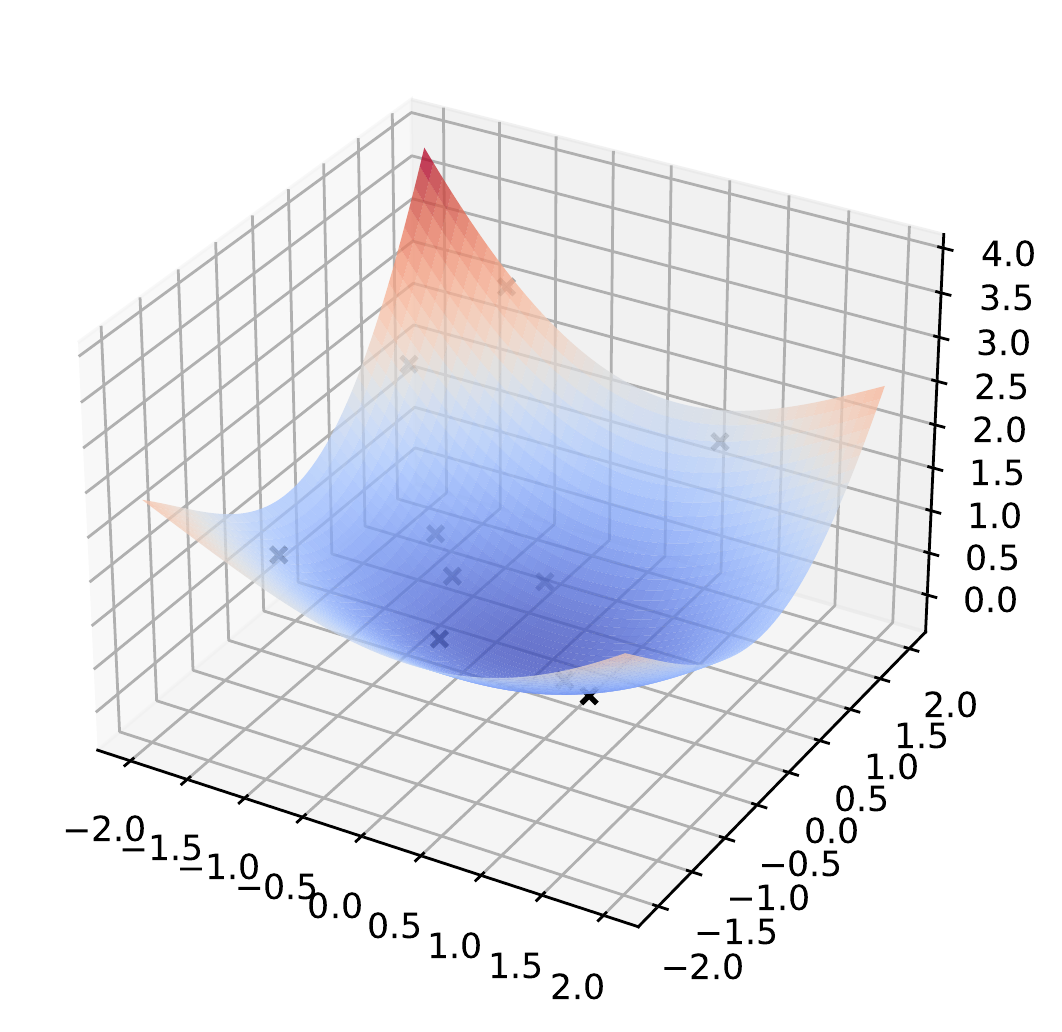}
         \subcaption{Kernel SoS}
     \end{subfigure}
     \hfill
         \begin{subfigure}[b]{0.45\textwidth}
         \centering
         \includegraphics[width=\textwidth]{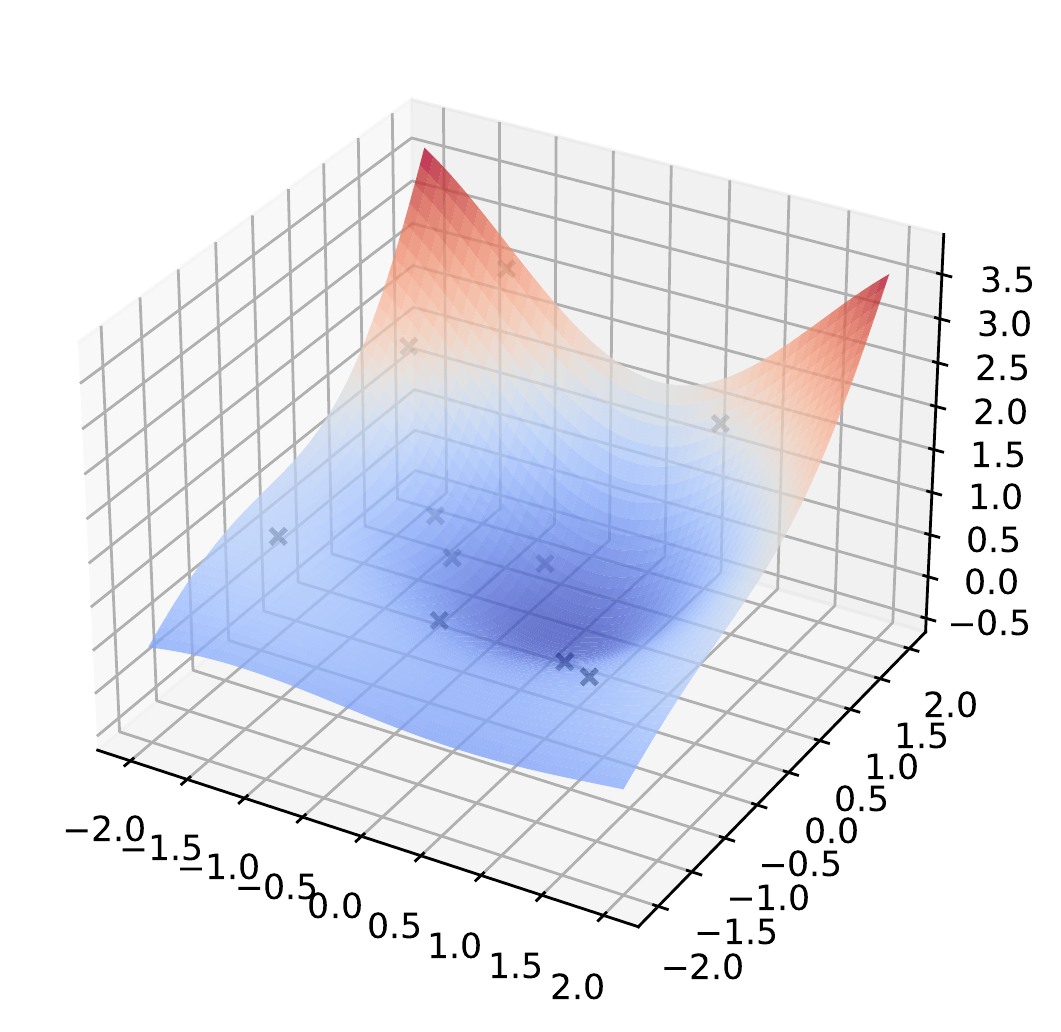}
         \subcaption{Kernel ridge regression}
     \end{subfigure}
     \begin{subfigure}[b]{0.45\textwidth}
         \centering
         \includegraphics[width=\textwidth]{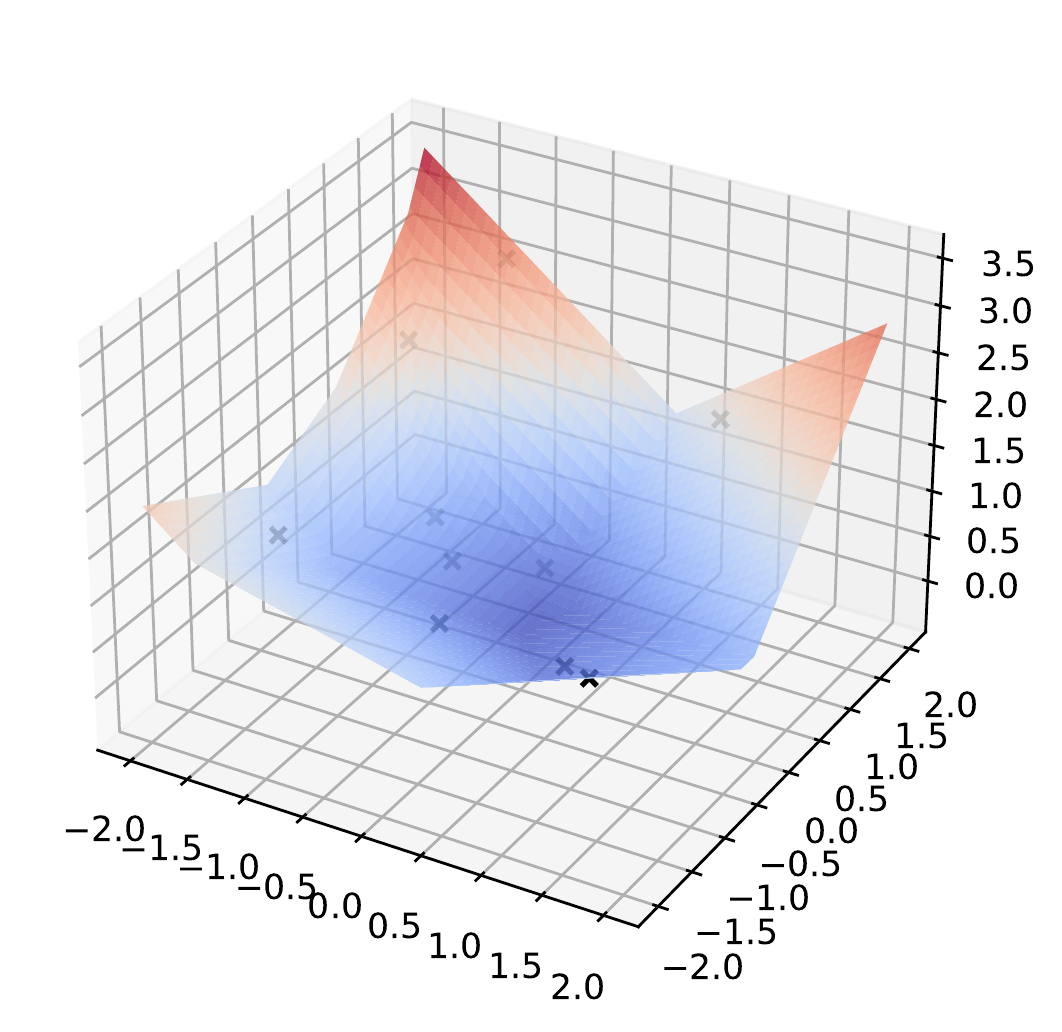}
         \subcaption{Piecewise linear convex regression}
     \end{subfigure}
     \hfill
         \begin{subfigure}[b]{0.45\textwidth}
         \centering
         \includegraphics[width=\textwidth]{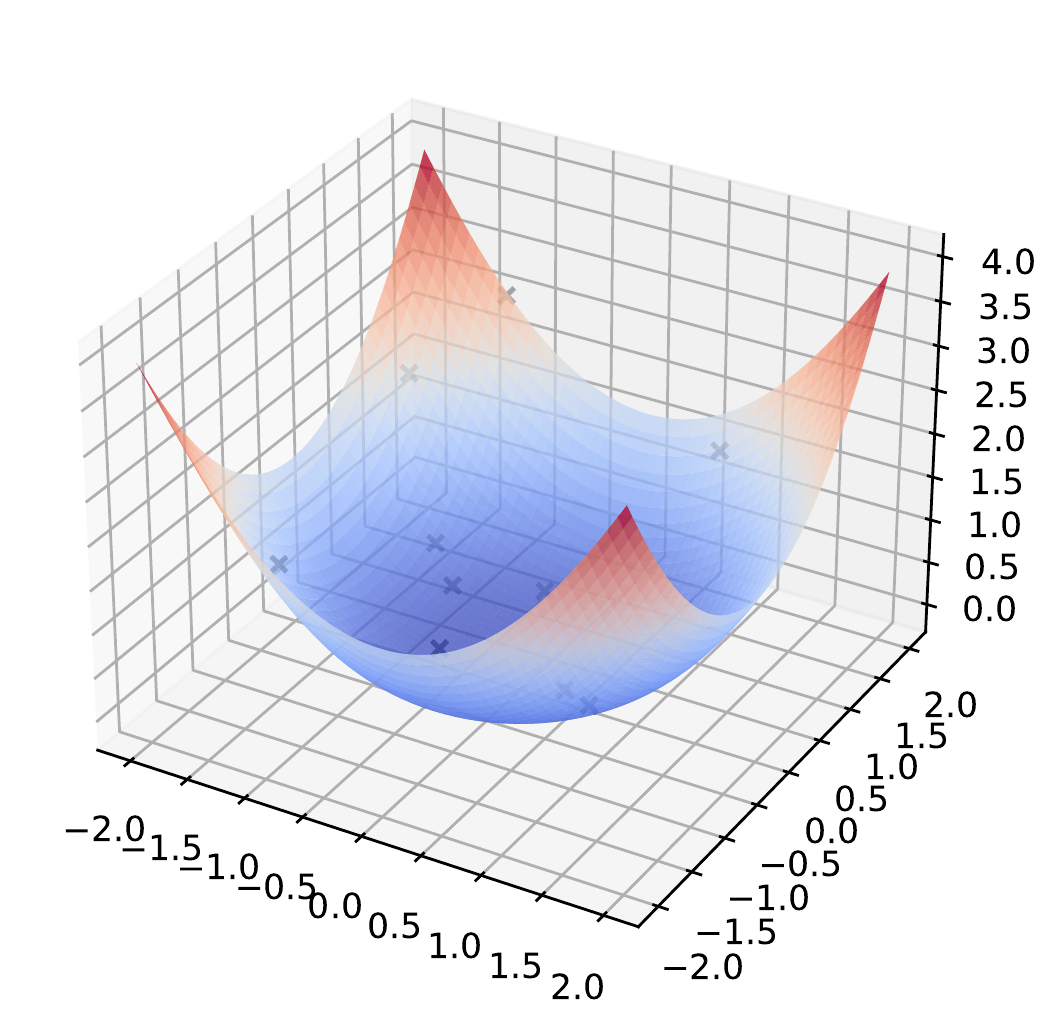}
         \subcaption{Ground truth}
     \end{subfigure}
    \caption{Outcome of kernel convex SoS regression, kernel ridge regression and piecewise linear convex regression fitted on $10$ points compared to ground truth $f_1$. The hyperparameters of kernel SoS and ridge regression are selected using $5$-fold cross-validation.}
    \label{fig:convex_regression}
\end{figure}
 We then fit a regression function by solving \eqref{eq:cvx_reg_approx_dual} with $\ell = n$ and $\{v_1, ..., v_\ell\} = \{x_1, ..., x_n\}$, using the Gaussian kernel and selecting the hyperparameters with $5$-fold cross-validation. We compare kernel SoS models to the piecewise linear convex regression as in \cref{eq:cvx_reg_lcqp}, and to unconstrained kernel ridge regression fitted using cross-validatoin (which may output a non-convex function, as in \Cref{fig:convex_regression}). A sample result is illustrated in \Cref{fig:convex_regression}. In \Cref{fig:MSE_convex_regression}, we report the mean square error of all 3 methods (evaluated by sampling $10000$ additional points according to \cref{eq:regression_samples}), as a function of the number of training samples and of the noise level $\eta$.
 \begin{figure}[ht]
     \centering
    \begin{subfigure}[b]{0.45\textwidth}
         \centering
         \includegraphics[width=\textwidth]{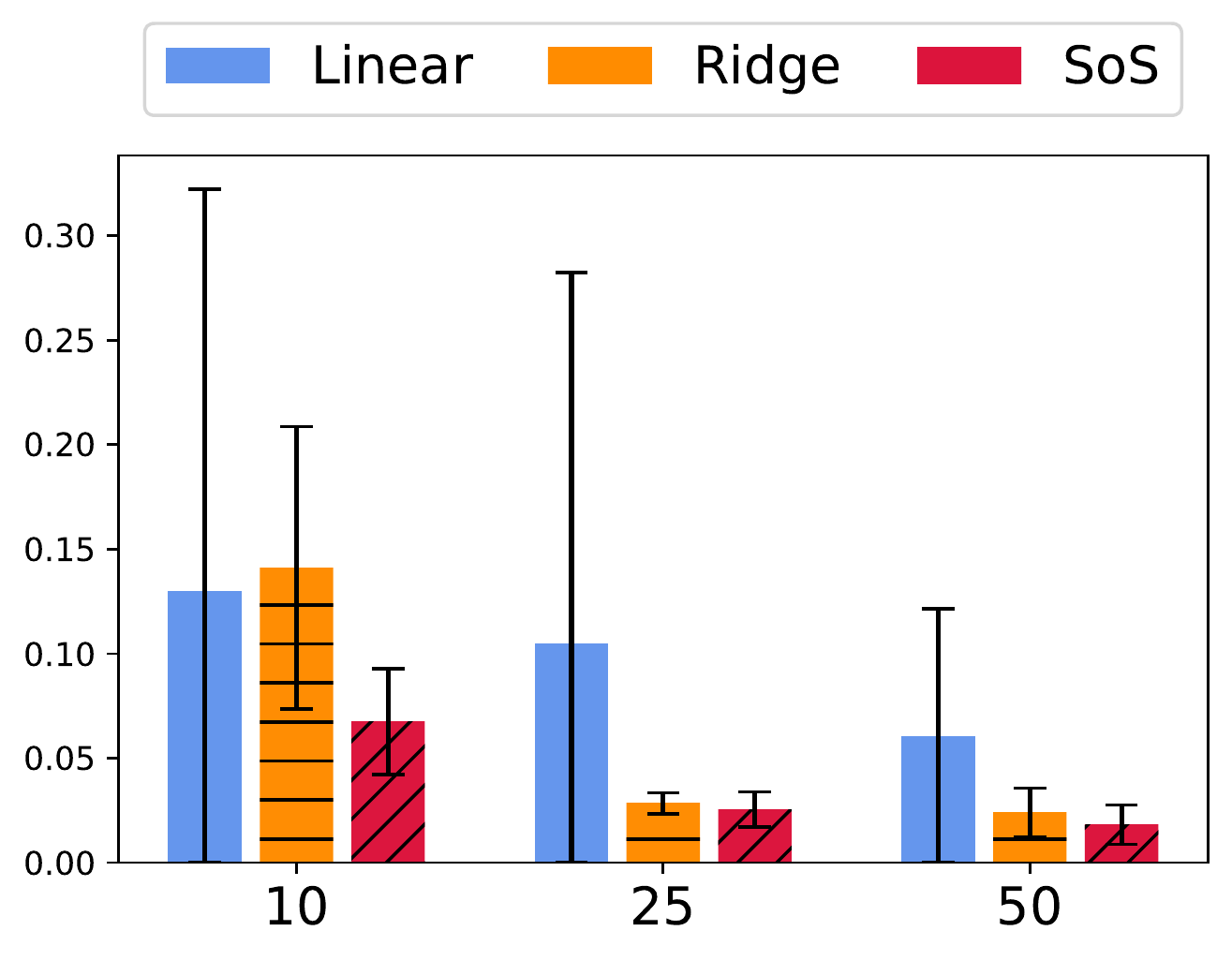}
         \subcaption{$\eta = 0.1$}
     \end{subfigure}
     \hfill
         \begin{subfigure}[b]{0.45\textwidth}
         \centering
         \includegraphics[width=\textwidth]{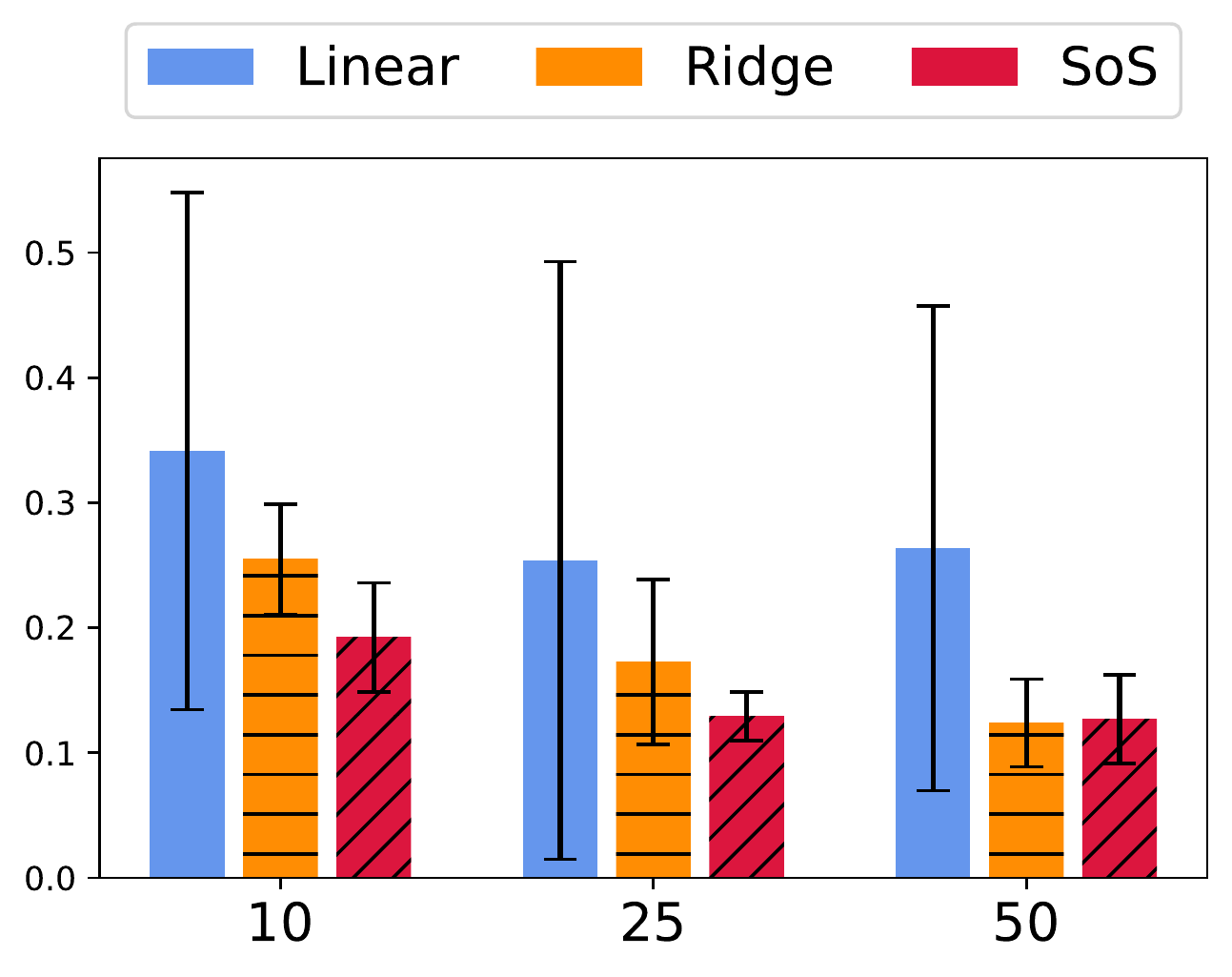}
         \subcaption{$\eta = 0.3$}
     \end{subfigure}
     \caption{Mean square error (MSE) of functions fitted using convex piecewise linear regression, kernel ridge regression and kernel convex sum-of-squares regression on 2D data generated according to \cref{eq:regression_samples} with $a = 1$, as a function of noise ($\eta$) and number of samples (x-axis). Full bars represent averaged scores over 10 runs, error bars correspond to plus/minus standard deviation.}
     \label{fig:MSE_convex_regression}
 \end{figure}
In \Cref{fig:MSE_convex_regression}, the fact that kernel ridge regression performs overall better that piecewise linear convex regression shows that taking smoothness into account allows an improvement of the mean square error. The kernel SoS formulation \eqref{eq:approx_sos_cvx_reg} allows enforcing both smoothness and convexity priors, which yields an additional improvement over kernel ridge regression, as shown in \Cref{fig:MSE_convex_regression}. This is also illustrated in \Cref{fig:convex_regression}: in this example, piecewise linear convex regression yields a convex but non-smooth function, kernel ridge regression yields a smooth but non-convex function, while kernel SoS regression yields a smooth and convex function that generalizes better w.r.t. the ``true'' function.

\section*{Conclusion and Future Work}
In this paper, we have introduced an extension of the kernel SoS models of \cite{marteau2020non} to PSD matrix-valued functions. While we chose to present our model using a particular feature map that relies on a scalar-valued kernel, it straightforwardly extends to more general matrix-valued kernels, as shown in \Cref{sec:vector_rkhs}, which may be of use to leverage the properties of given kernels, such as rotational-free or divergence-free kernels~\citep{macedo2008learning}. We then showed how PSD-valued sums-of-squares could be used in variational problems with convexity constraints, and illustrated our method on a convex regression task. Possible further applications of our method include economics and monopolistic games in particular~\citep{chone2001non,mirebeau2016adaptive}, shape optimization problems such as Newton's least resistance problem~\citep{lachand2005minimizing,merigot2014handling} or optimal transport with a quadratic cost~\citep{brenier1991polar,makkuva2020optimal}.

\acks{BM wishes to thank Pierre-Cyril Aubin-Frankowski for conversations related to this work and comments on the manuscript. This work was funded in part by the French government under management of Agence Nationale de la Recherche as part of the “Investissements d’avenir” program, reference ANR-19-P3IA-0001(PRAIRIE 3IA Institute). We also acknowledge support from the European Research Council (grants SEQUOIA 724063 and REAL 947908), and support by grants from Région Ile-de-France.}



\appendix

\section{Vector-Valued Functions and Matrix-Valued Kernels}\label{sec:vector_rkhs}

\subsection{Vector-Valued RKHSs}

Let $\Xcal\subset \RR^d$ and $\Ycal$ a Hilbert space. A map $K: \Xcal \times \Xcal \mapsto \Ycal \times \Ycal$ is called a $\Ycal$-reproducing kernel \citep{carmeli2010vector} if $\sum_{i, j=1}^N \dotp{K(x_i, x_j)y_i}{y_j} \geq 0$ for any $x_1, ..., x_N \in \Xcal$ and $y_1, ..., y_N \in \Ycal$. Given a $\Ycal$-reproducing kernel $K$, there is a unique Hilbert space $\hh_K \subset \Ycal^\Xcal$ satisfying 
\begin{enumerate}[(i)]
    \item $\forall x \in \Xcal, K_x \in \Lcal(\Ycal, \hh_K)$ where for $y\in \Ycal$, $K_x y$ is defined by $(K_x y)(t) = K(t, x)y, t \in \Xcal$,
    \item $\forall x \in \Xcal, \forall f \in \hh_K, f(x) = K_x^*f$ where $K_x^* \in \Lcal(\hh_K, \Ycal)$ denotes the adjoint of $K_x$.
\end{enumerate}
In particular, we have $K(x, y) = K_x^*K_y, x, y \in \Xcal$. When $\Ycal \subset \RR^d$, $K(x, y)$ can be identified with a $d\times d$ matrix. Further, under the normality assumption \citep{micchelli2006universal}, $K(x, x)$ is positive definite for all $x \in \Xcal$.
\begin{example}[Matrix-valued kernels]\label{ex:matrix_kernels}
\begin{enumerate}
    \item[]
    \item Separable kernels: $K(x, x') = k(x, x')\bM$ where $\bM \in \pdm{\RR^d}$ encodes the dependence between the outputs. The setting presented in \Cref{sec:pos_op} corresponds to choosing $\bM = \eye_d$ and $k(\cdot, \cdot)$ the kernel associated to the sobolev space $H^s(\Xcal, \RR)$, which already allows encoding functions in $C^m(\Xcal, \RR^d)$ for $s$ large enough.
    \item Non-separable kernels: e.g.\ divergence-free and curl-free kernels~\citep{macedo2008learning}.
\end{enumerate}
\end{example}
Finally, in the proof of \Cref{thm:scattered_inequalities_vector}, which extends \Cref{thm:scattered_inequalities} to more general matrix-valued kernels, we require the following assumption on $\hh_K$, which relates the norms of a vector-valued function $f \in \hh_K $ to its component functions.
\begin{assumption}[Properties of $\hh$ and $\hh_K$]\label{assum:vec_rkhs}
$\hh$ and $\hh_K$ are respectively a scalar-valued and vector-valued RKHS satisfying:
\begin{enumerate}[a)]
    \item For $f = (f^{(1)}, ..., f^{(d)})\in \hh_K$, the $f^{(i)}$'s are in $\hh$;
    \item $\hh$ satisfies \Cref{ass:rkhs} for some $m \in \NN_+$;
    \item\label{item:norm_equiv} There exists $B_K >0$ such that $\sum_{i=1} \|f^{(i)}\|_\hh^2 \leq B_K\|f\|^2_{\hh_K}$.
\end{enumerate}
\end{assumption}
In particular, \Cref{assum:vec_rkhs}\ref{item:norm_equiv} is satisfied when $K(x, x') = k(x, x') \eye_d$, in which case $B_K=1$. More generally, for a separable kernel $K(x, x') = k(x, x') \bM$ (as in \Cref{ex:matrix_kernels}), we may derive bounds from the identity $\|f\|^2_{\hh_K} = \sum_{i,j = 1}^d \bM^{\dagger}_{i,j}\dotp{f^{(i)}}{f^{(j)}}_\hh$, where $\bM^{\dagger}$ is the pseudo-inverse of $\bM$ \citep[see][Section 4.1]{alvarez2012kernels}.

\subsection{Vector-Valued Sums-of-Squares}\label{subsec:vector_sos}

We model positive-operator-valued functions using the following representation:
\begin{equation}\label{eq:gen_sos_model}
F_A(x) = K_x^\star A K_x, ~~\text{with}~~ A \in \pdm{\hh_K}.
\end{equation}
From the definitions above we have that $\forall x, F_A(x) \in \Lcal(\Ycal)$, $F_A$ is self-adjoint and $\forall y \in \Ycal$, $\dotp{y}{F_A(x)y} = \dotp{y}{K_x^\star A K_x y} = \dotp{K_x y}{A K_x y} \geq 0$ by semi positive-definiteness of $A$, hence $\forall x \in \Xcal, F_A(x) \in \pdm{\Ycal}$. \Cref{eq:gen_sos_model} generalizes \cref{eq:model_def_formal}: indeed, the latter can be recovered by taking $\Ycal \subset \RR^d$, $K(x, x') = k(x, x') \eye_d$ and $K_x = \Phi(x)$. 
When $A$ admits a finite-dimensional representation (as obtained from \Cref{thm:rep_thm}) of the form
\begin{equation}
A = \sum_{i,j = 1}^n K_{x_i}\bC_{ij}K_{x_j}^*,
\end{equation}
with $\bC_{ij} \in \RR^{d\times d}, i, j \in [n]$, and $\bC = \begin{pmatrix} \bC_{11} & \dots & \bC_{1n}\\ \vdots & & \vdots \\  \bC_{n 1} & \dots & \bC_{nn} \end{pmatrix} \in \pdm{\RR^{nd}}$,
then, $A$ corresponds to a function of the form
\begin{equation}\label{eq:discrete_form_matrix}
    F_C(x) = \sum_{i,j = 1}^n K(x_i, x)\bC_{ij}K(x_j, x).
\end{equation} 
Note that compared to \cref{eq:discrete_form}, the kernel terms $K(x_i, x)$ and $K(x_j, x)$ are $d\times d$ matrices that do not necessarily commute with $\bC_{ij}$, but satisfy $K(x_i, x)^T = K(x, x_i)$.

Further, as in \Cref{subsec:rep_results}, if $\bK = \left(\begin{smallmatrix} 
K(x_1, x_1) & \cdots & K(x_1, x_n) \\
\vdots & & \vdots\\
K(x_n, x_1) &  & K(x_n, x_n)
\end{smallmatrix} \right) \in \RR^{nd \times nd}$ and $\bR$ denotes the upper-triangular Cholesky factor of $\bK$ (verifying $\bR^T\bR = \bK$), then $\tilde{\bR} = \bR \otimes \eye_d$ is the upper-triangular factor of $\tilde{\bK}$. Letting $\bB = \tilde{\bR} \bC \tilde{\bR}^T$, we then have
$$\tr A = \tr \bB$$
and
$$F_{A}(x_i) = \Psi_i^T \bB \Psi_i, ~~ i \in [n],$$
where $\Psi_i $ denotes the $i$-th $nd \times d$ block column of $\tilde{\bK}$. Further, the model can be queried at new points using the expression
\begin{equation}
F_\bB(x) = \Psi(x)^T\bB\Psi(x),
\end{equation}
with $\Psi(x) = \tilde{\bR}^{-T} v(x)$, and $v(x) = (K(x, x_i))_{i=1}^n \in \RR^{nd\times d}$.

\section{Proofs}\label{sec:PSD_proofs}

For the sake of generality, we write the proofs of the results introduced in \Cref{sec:pos_op} within the matrix-valued kernel framework introduced in \Cref{sec:vector_rkhs}. Taking $\hh_K = \hh^d$, $K_x = \Phi(x)$ (where $\Phi(x)$ is defined in \cref{eq:def_phi}) and $K(x, x') = k(x, x') \eye_d$, we recover the setting of \Cref{sec:pos_op}.

\subsection{Proof of \Cref{prop:lin_and_pos}}\label{proof:lin_and_pos}

\begin{proof}
Let us prove linearity: let $A, B \in \Lcal(\hh_K)$ and $\alpha, \beta \in \RR$. Since $\Lcal(\hh_K)$ is a vector space, it holds $\alpha A + \beta B \in \Lcal(\hh_K)$. Further, we have
\begin{align}
F_{\alpha A + \beta B}(x) &= K_x^\star (\alpha A + \beta B) K_x\\
&= \alpha K_x^\star A K_x + \beta K_x^\star B K_x\\
&= \alpha F_A(x) + \beta F_B(x),
\end{align}
which shows linearity w.r.t. the parameter $A$. Let us now assume that $A \in \pdm{\hh_K}$, let us show that for all $x \in \Xcal$, $F_A(x) \in \pdm{\RR^d}$.
First, let us observe that since $A \in \pdm{\hh_K}$ and $K_x \in \Lcal(\RR^d, \hh_K)$ (as defined in \Cref{sec:vector_rkhs}), we have $F_A(x) \in \Lcal(\hh_K)$. Further, $F_A(x)$ is self-adjoint: 
$$
F_A(x)^\star = K_x^\star A^\star K_x = K_x^\star A K_x = F_A(x).
$$
Let us finally show that $F_A(x)$ is positive semi-definite: let $v \in \RR^d$, we have 
\begin{align}
\dotp{v}{F_A(x)} = \dotp{K_x v }{A K_x v}_{\hh_K} \geq 0,
\end{align}
by positive semi-definiteness of $A$ on $\hh_K$.
\end{proof}

\subsection{Proof of \Cref{thm:rep_thm}}\label{proof:rep_thm}


Following \cite{marteau2020non}, we divide the proof of \Cref{thm:rep_thm} in two main results: \Cref{prop:finite_solution} and \Cref{lemma:matrix_rep}. Let us start with a few definitions: 
\begin{itemize}
    \item Let $\hh_K^n = \Span\{K_{x_i}\}_{i=1}^n = \{\sum_{i=1}^n K_{x_i}y_i:  y_i \in \RR^d, i \in [n]\}$;
    \item $\Pi_n$ denotes the orthogonal projection on $\hh_K^n$;
    \item Finally, define $S_n(\hh_K)_+ = \{\Pi_n A \Pi_n : A \in \pdm{\hh_k}\} \subset \pdm{\hh_K}.$ 
\end{itemize}
Let $\Scal_n: \hh_K \rightarrow \RR^{n\times d}$ defined as 
\begin{equation}
	\Scal_n(h)  = \left(K_{x_i}^\star h\right)_{1 \leq i \leq n}, \quad h \in \hh_K.
\end{equation}
Its adjoint is 
\begin{equation}
	\Scal_n^*(\alpha)  = \sum_{i=1}^n K_{x_i}\alpha_i, \alpha \in \RR^{n\times d}.
\end{equation}
Further, define $\bK = \left(\begin{smallmatrix} 
K(x_1, x_1) & \cdots & K(x_1, x_n) \\
\vdots & & \vdots\\
K(x_n, x_1) &  & K(x_n, x_n)
\end{smallmatrix} \right) \in \RR^{nd \times nd}$
, with rank $r$ ($r = nd$ in the case of a universal kernel \citep{micchelli2006universal}).
Let $\bV \in \RR^{nd \times nd}$ be a matrix such that $\bV^T\bV = \bK$, and define $O_n: \RR^{n} \rightarrow \hh_K$  as $O_n= S_n^* \bV (\bV\bV^T)\inv.$ 
We start with a technical result that will be useful to prove \Cref{lemma:matrix_rep}.
\begin{lemma}[\citet{marteau2020non}, Lemma 4]\label{lemma:id_proj_decomp}
It holds
(i) $O_nO_n^* = \Pi_n$ and (ii) $O_n^*O_n = \Id_r$.
\end{lemma}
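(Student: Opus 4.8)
The plan is to unwind the definitions and use the fact that $\mathbf{K}$ is exactly the Gram-type operator $\Scal_n\Scal_n^*$, so that the claimed identities become linear-algebra statements about the factorization $\mathbf{V}^T\mathbf{V}=\mathbf{K}$ together with the Moore--Penrose-type inverse $(\mathbf{V}\mathbf{V}^T)^{-1}$. First I would record the key computation $\Scal_n\Scal_n^* = \mathbf{K}$: indeed $(\Scal_n\Scal_n^*\alpha)_i = K_{x_i}^\star \sum_j K_{x_j}\alpha_j = \sum_j K(x_i,x_j)\alpha_j$, which is precisely multiplication by $\mathbf{K}$ on $\RR^{nd}$ (after the obvious identification $\RR^{n\times d}\simeq\RR^{nd}$). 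It is also worth noting $\ran \Scal_n^* = \hh_K^n$ and that $\Scal_n^*$ is injective on $\ran \Scal_n = \ran \mathbf{K} = \ran \mathbf{V}^T$ (since $\mathbf{V}^T\mathbf{V}=\mathbf{K}$ forces $\ran\mathbf{V}^T=\ran\mathbf{K}$ and $\ker\mathbf{V}=\ker\mathbf{K}$).

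For (ii), I would compute directly
\[
O_n^* O_n = (\mathbf{V}\mathbf{V}^T)^{-1}\mathbf{V} \,\Scal_n\Scal_n^*\, \mathbf{V}^T (\mathbf{V}\mathbf{V}^T)^{-1} = (\mathbf{V}\mathbf{V}^T)^{-1}\mathbf{V}\mathbf{K}\mathbf{V}^T(\mathbf{V}\mathbf{V}^T)^{-1},
\]
and then substitute $\mathbf{K}=\mathbf{V}^T\mathbf{V}$ to get $(\mathbf{V}\mathbf{V}^T)^{-1}(\mathbf{V}\mathbf{V}^T)(\mathbf{V}\mathbf{V}^T)(\mathbf{V}\mathbf{V}^T)^{-1} = \Id$, where the inverse is taken on $\RR^r$ after restricting $\mathbf{V}$ to a surjection onto its $r$-dimensional range (so $\mathbf{V}\mathbf{V}^T$ is genuinely invertible there). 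For (i), $O_nO_n^* = \Scal_n^*\mathbf{V}^T(\mathbf{V}\mathbf{V}^T)^{-2}\mathbf{V}\Scal_n$; I would argue this is the orthogonal projection onto $\hh_K^n=\ran\Scal_n^*$ by checking it is self-adjoint, idempotent (using (ii), since $O_nO_n^*O_nO_n^* = O_n(O_n^*O_n)O_n^* = O_nO_n^*$), and has range contained in $\ran\Scal_n^*=\hh_K^n$ while acting as the identity on that subspace (again via $O_n^*O_n=\Id$, so $O_nO_n^*$ fixes $\ran O_n$, and one checks $\ran O_n = \hh_K^n$ using surjectivity of the matrix factors onto $\ran\mathbf{V}^T=\ran\mathbf{K}$).

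The main obstacle is bookkeeping around rank deficiency: when $\mathbf{K}$ (equivalently $\mathbf{V}$) is not full rank, $\mathbf{V}\mathbf{V}^T$ is only invertible on an $r$-dimensional subspace, so every occurrence of $(\mathbf{V}\mathbf{V}^T)^{-1}$ must be read as acting on $\RR^r$ with $\mathbf{V}$ viewed as a surjection onto that $\RR^r$; one must verify that the compositions land in the right subspaces for the cancellations above to be legitimate, and that $\Id_r$ in (ii) is the identity on exactly this $r$-dimensional space. Once the ranges are tracked carefully, both identities reduce to the one-line substitution $\mathbf{K}=\mathbf{V}^T\mathbf{V}$; in the universal-kernel case $r=nd$ and all the inverses are honest inverses, so there is nothing subtle. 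I would also note this is Lemma 4 of \citet{marteau2020non}, so the argument is essentially a transcription adapted to the matrix-valued kernel notation.
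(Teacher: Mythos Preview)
Your proposal is correct and follows essentially the same approach as the paper: both establish (ii) first via the direct substitution $\Scal_n\Scal_n^*=\mathbf{K}=\mathbf{V}^T\mathbf{V}$, then deduce (i) by checking that $O_nO_n^*$ is a self-adjoint idempotent with range contained in $\hh_K^n$. The only cosmetic difference is that the paper concludes equality of the range with $\hh_K^n$ by a rank count ($\Rk O_nO_n^*=\Rk O_n^*O_n=r=\dim\hh_K^n$), whereas you plan to verify $\ran O_n=\hh_K^n$ directly via surjectivity of the matrix factors; both arguments are equivalent.
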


\begin{proof}
The proof is identical to that of \citet{marteau2020non}, Lemma 4. We restate it, for the sake of self-inclusiveness.
Since $\bV^T\bV = \bK = \Scal_n\Scal_n^*$, we have 
$$
O_nO_n^* = (\bV\bV^T)\inv\bV S_nS_n^* \bV^T(\bV\bV^T)\inv = (\bV\bV^T)\inv \bV \bV^T \bV \bV^T (\bV\bV^T)\inv = \eye_r,
$$
which proves (ii). Let us now prove (i). Let $\tilde{\Pi}_n = O_nO_n^*$. $\tilde{\Pi}_n $ is a projection operator, since it is self-adjoint and satisfies $\tilde{\Pi}_n^2 = O_n(O_n^*O_n)O_n^* = O_nO_n^* = \tilde{\Pi}_n$, from (ii). Moreover, it follows from the definition of $O_n$ that the range of $\Pi_n$ is included in the range of $\Scal_n^*$, which is $\hh_K^n$. Finally, we observe that $\Rk  \Scal_n^* = \Rk \Scal_n\Scal_n^* =r$, which implies that the dimension of $\hh_K^n$ is $r$. Since $\Rk O_nO_n^* = \Rk O_n^*O_n = r$ from (ii), this implies in turn that $\tilde{\Pi}_n = \Pi_n$.
\end{proof}

We are now ready to prove the main part of \Cref{thm:rep_thm}.

\begin{proposition}[\cite{marteau2020non}, Proposition 7]\label{prop:finite_solution}
Let L be a l.s.c.\ function which is bounded below, and $\Omega$ as in \cref{eq:reg_term}. Then, \cref{eq:gen_sampled_pb} has a solution $A^\star$ that is in $\Scal_n(\hh_K)_+$.
\end{proposition}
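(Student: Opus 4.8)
The plan is the standard representer‑theorem argument: show that replacing an arbitrary feasible $A\in\pdm{\hh_K}$ by its compression $\Pi_n A\Pi_n$ changes neither the loss nor increases the regularizer, so the infimum is unchanged when restricted to $S_n(\hh_K)_+$, and then obtain a minimizer by a compactness argument, using that $S_n(\hh_K)_+$ is the (closed) PSD cone of the finite‑dimensional space of self‑adjoint operators supported on $\hh_K^n$.

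First, I would note that the loss term only depends on $\Pi_n A\Pi_n$. Since $K_{x_i}\in\Lcal(\RR^d,\hh_K)$ has range contained in $\hh_K^n=\Span\{K_{x_j}\}_{j=1}^n$, we have $\Pi_n K_{x_i}=K_{x_i}$ and, taking adjoints, $K_{x_i}^\star\Pi_n=K_{x_i}^\star$. Hence for every $A$ and every $i\in[n]$,
$$
F_A(x_i)=K_{x_i}^\star A K_{x_i}=K_{x_i}^\star \Pi_n A \Pi_n K_{x_i}=F_{\Pi_n A\Pi_n}(x_i),
$$
so $L(F_A(x_1),\dots,F_A(x_n))=L(F_{\Pi_n A\Pi_n}(x_1),\dots,F_{\Pi_n A\Pi_n}(x_n))$.

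Second, compression does not increase $\Omega$ and preserves positivity. If $A\succeq 0$ then $\Pi_n A\Pi_n\succeq 0$, so $\Pi_n A\Pi_n\in S_n(\hh_K)_+$ is again feasible. For the trace term, by cyclicity and $0\preceq\Pi_n\preceq \Id$, $\tr(\Pi_n A\Pi_n)=\tr(\Pi_n A)=\tr(A^{1/2}\Pi_n A^{1/2})\le\tr(A^{1/2}A^{1/2})=\tr A$. For the Hilbert–Schmidt term, since $\Pi_n$ is an orthogonal projection (so $\|\Pi_n\|_{\mathrm{op}}\le 1$) and $\|BC\|_{HS}\le\|B\|_{\mathrm{op}}\|C\|_{HS}$, we get $\|\Pi_n A\Pi_n\|_{HS}\le\|A\Pi_n\|_{HS}\le\|A\|_{HS}$. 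Therefore $\Omega(\Pi_n A\Pi_n)\le\Omega(A)$, and $A\mapsto\Pi_n A\Pi_n$ sends feasible points to feasible points of no larger objective value. It thus suffices to minimize $J(A):=L(F_A(x_1),\dots,F_A(x_n))+\Omega(A)$ over $A\in S_n(\hh_K)_+$.

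Finally, I would establish existence on $S_n(\hh_K)_+$. This set is the PSD cone of the finite‑dimensional space of self‑adjoint operators on $\hh_K^n$ (a space of dimension determined by $r=\Rk\bK$), hence closed; on it, each $A\mapsto F_A(x_i)$ is linear and bounded, so $J$ is l.s.c.\ (composition of the l.s.c.\ $L$ with a continuous map, plus the continuous $\Omega$), and $J\ge \inf L>-\infty$ since $\Omega\ge 0$. For a minimizing sequence $(A_k)\subset S_n(\hh_K)_+$, boundedness of $J(A_k)$ forces boundedness of $\Omega(A_k)$, hence of $\|A_k\|$: via the nuclear‑norm term if $\la_1>0$, or via the Hilbert–Schmidt term if $\la_2>0$. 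By finite‑dimensionality extract a subsequence $A_{k_j}\to A^\star\in S_n(\hh_K)_+$; lower semicontinuity gives $J(A^\star)\le\liminf_j J(A_{k_j})=\inf_{S_n(\hh_K)_+}J=\inf_{\pdm{\hh_K}}J$, so $A^\star\in S_n(\hh_K)_+$ is a solution of \cref{eq:gen_sampled_pb}. The only mildly delicate points are the two contraction estimates in the second step (both handled by $0\preceq\Pi_n\preceq\Id$ together with submultiplicativity of the HS norm) and checking that coercivity of $\Omega$ under either $\la_1>0$ or $\la_2>0$ yields bounded minimizing sequences; the rest is the routine Weierstrass argument in finite dimension.
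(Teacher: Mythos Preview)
Your proof is correct and follows essentially the same two-step approach as the paper: first show that compression $A\mapsto\Pi_nA\Pi_n$ leaves the loss unchanged and does not increase $\Omega$, then obtain existence by a finite-dimensional compactness argument. The only stylistic difference is that you phrase the existence step via a bounded minimizing sequence, whereas the paper restricts to a sublevel ball of $\Omega$ before applying Weierstrass; these are equivalent.
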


\begin{proof}

\textbf{Step 1.} For $A \in \pdm{\hh_K}$, let us denote $J(A) = L(F_A(x_1), ..., F_A(x_n)) + \Omega(A)$, the objective of \eqref{eq:gen_sampled_pb}. We start by showing that  
\begin{equation}
\underset{A \in \Scal_n(\hh_K)_+}{\inf}J(A) = \underset{A \in \pdm{\hh_K}}{\inf}J(A).
\end{equation}
Fix $A \in \pdm{\hh_K}$. We will show that $J(\Pi_n A \Pi_n) \leq J(A )$. First, observe that since $\Pi_n$ is the orthogonal projection on $\Span\{ K_{x_i}, i \in [n]\}$, it holds $K_{x_i} = \Pi_n K_{x_i}$, and therefore 
$$
F_A(x_i) = K_{x_i}^\star A K_{x_i} = K_{x_i}^\star\Pi_n A K_{x_i} = F_{\Pi_n A \Pi_n}(x_i), \quad i \in [n].
$$
In particular, this implies that $ L(F_A(x_1), ..., F_A(x_n)) =  L(F_{\Pi_n A \Pi_n}(x_1), ..., F_{\Pi_n A \Pi_n}(x_n))$.
Next, let us observe that since $\tr A =  \sum_{i \in \NN} \sigma_i$ and $\| A\|^2_\HS =  \sum_{i \in \NN} \sigma_i^2$ where $\sigma_i, i \in [n]$ are the eigenvalues of $A$, we have $\Omega(\Pi_n A \Pi_n) \leq \Omega(A)$.
Putting everything together, this implies that $\forall A \in \pdm{\hh_K}, J(\Pi_n A \Pi_n) \leq J(A).$ Therefore, we have
$$
\underset{A \in \Scal_n(\hh_K)_+}{\inf}J(A) \leq \underset{A \in \pdm{\hh_K}}{\inf}J(A).
$$
Since $\Scal_n(\hh_K)_+ \subset \pdm{\hh_K}$, this finally implies
\begin{equation}
\underset{A \in \Scal_n(\hh_K)_+}{\inf}J(A) = \underset{A \in \pdm{\hh_K}}{\inf}J(A).
\end{equation}

\textbf{Step 2.} Let us now show that $\underset{A \in \Scal_n(\hh_K)_+}{\inf}J(A)$ has a solution. Again, we follow the steps of \cite{marteau2020non}. Let $V_n$ be the injection $\hh_K^n \xhookrightarrow{} \hh_K$. It holds $V_nV_n^* = \Pi_n$ and $V_n^*V_n = \eye_{\hh_K^n}$. Hence, we have 
$$
\Scal_n(\hh_K)_+ = V_n\pdm{\hh_K^n}V_n^* = \{V_n A V_n^* : A \in \pdm{\hh_K^n} \}.
$$
Let us now show that $\underset{A \in \pdm{\hh_K^n}}{\inf} J(V_nAV_n^*)$ has a solution. Let us first observe that $\forall A \in \pdm{\hh_K^n}$, we have $\Omega(V_n A V_n^*)$ \citep[see][Lemma 2]{marteau2020non}, and thus $J(V_n A V_n^*) = L(F_{V_n A V_n^*}(x_1), ..., F_{V_n A V_n^*}(x_n)) + \Omega(A)$. Let $A_0\in \pdm{\hh_K^n}$ be such that $J_0 := J(V_n A_0 V_n^*) < \infty$, and let $c_0$ be a lower bound of $L$. From the \cref{eq:reg_term}, there exists $R_0 >0 $ such that $\|A\|_{\HS} > 0 \implies \Omega(A) > J_0 - c_0$. This implies
$$
\underset{A \in \pdm{\hh_K^n}}{\inf} J(V_nAV_n^*) = \underset{A \in \pdm{\hh_K^n}}{\inf}  J(V_nAV_n^*) ~~\st~~ \|A\|_\HS \leq R_0.
$$
Since $L$ and $\Omega$ are l.s.c., so is $A \mapsto J(V_nAV_n^*)$. Hence, it reaches its minimum on the compact set $\{A \in \pdm{\hh_K^n} : \|A\|_\HS \leq R_0\}$, which is non-empty as it contains $A_0$. Hence, there exists $\tilde{A}_\star \in \pdm{\hh_K^n}$ such that 
$$J(V_n\tilde{A}_\star) = \underset{A \in \pdm{\hh_K^n}}{\inf}  J(V_nAV_n^*) ~~\st~~ \|A\|_\HS \leq R_0,$$
which shows that $\inf_{A \in \pdm{\hh}} J(A) = J(A_\star)$ with $A_\star = V_n\tilde{A}_\star \in \Scal_n(\hh_J)_+.$ 
\end{proof}

To complete the proof of the Theorem, there remains to show the following lemma.

\begin{lemma}\label{lemma:matrix_rep}
\begin{equation}
    \Scal_n(\hh_K)_+ = \left\{\sum_{i,j = 1}^n K_{x_i}\bB_{ij}K_{x_j}^*: \bB = \left(\begin{smallmatrix} \bB_{11} & \dots & \bB_{1n}\\ \vdots & & \vdots \\  \bB_{n 1} & \dots & \bB_{nn} \end{smallmatrix}\right) \in \pdm{\RR^{nd}}\right\}.
\end{equation}

\end{lemma}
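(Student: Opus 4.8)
The plan is to rewrite both sides of the identity as images of the PSD cone under conjugation and then transport between them using the operator $O_n$ of \Cref{lemma:id_proj_decomp}. The first observation is that the right-hand side is exactly $\{\Scal_n^*\bB\Scal_n : \bB \in \pdm{\RR^{nd}}\}$: writing $\bB = (\bB_{ij})_{i,j\in[n]}$ and unrolling the definitions $\Scal_n h = (K_{x_i}^*h)_i$ and $\Scal_n^*\alpha = \sum_i K_{x_i}\alpha_i$, one gets $\Scal_n^*\bB\Scal_n h = \sum_{i,j=1}^n K_{x_i}\bB_{ij}K_{x_j}^*h$ for all $h\in\hh_K$. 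So it suffices to prove $\Scal_n(\hh_K)_+ = \{\Scal_n^*\bB\Scal_n : \bB\in\pdm{\RR^{nd}}\}$.

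For the inclusion $\supseteq$, fix $\bB\in\pdm{\RR^{nd}}$ and put $A := \Scal_n^*\bB\Scal_n$. Factoring $\bB = \bB^{1/2}\bB^{1/2}$ gives $A = (\bB^{1/2}\Scal_n)^*(\bB^{1/2}\Scal_n)$, which is PSD on $\hh_K$. Since the range of $\Scal_n^*$ lies in $\hh_K^n$, we have $\Pi_n\Scal_n^* = \Scal_n^*$, hence $\Scal_n\Pi_n = \Scal_n$ after taking adjoints (recall $\Pi_n$ is self-adjoint), and therefore $\Pi_n A\Pi_n = A$; thus $A\in\Scal_n(\hh_K)_+$.

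For the inclusion $\subseteq$, fix $A\in\pdm{\hh_K}$. Using $\Pi_n = O_nO_n^*$ (\Cref{lemma:id_proj_decomp}(i)) and associativity, $\Pi_n A\Pi_n = O_n\bM O_n^*$ with $\bM := O_n^*AO_n$, and $\bM$ is a PSD matrix because $A\succeq 0$. Substituting $O_n = \Scal_n^*\bV(\bV\bV^T)^{-1}$ and using that $\bV\bV^T$ is symmetric,
\[
\Pi_n A\Pi_n = \Scal_n^*\,\bB\,\Scal_n, \qquad \bB := \bV(\bV\bV^T)^{-1}\bM(\bV\bV^T)^{-1}\bV^T,
\]
and $\bB\succeq 0$ since $\bM\succeq 0$ (write $\bB = W\bM W^T$ with $W := \bV(\bV\bV^T)^{-1}$). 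This is the desired form, which completes the proof.

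The argument is essentially bookkeeping with adjoints and $d\times d$ block structure; the one substantive ingredient is \Cref{lemma:id_proj_decomp}, which realizes $\Pi_n$ as $O_nO_n^*$ with $O_n$ expressed through $\Scal_n^*$, so that compressions by $\Pi_n$ can be moved to conjugations by PSD matrices and back. I do not expect a genuine obstacle beyond keeping the block indices straight and, in the rank-deficient case, reading $(\bV\bV^T)^{-1}$ as the appropriate (pseudo-)inverse.
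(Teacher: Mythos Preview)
Your proof is correct and follows essentially the same route as the paper's: both reduce the right-hand side to $\{\Scal_n^*\bB\Scal_n:\bB\succeq 0\}$, handle $\supseteq$ via $\Pi_n\Scal_n^*=\Scal_n^*$, and handle $\subseteq$ by factoring $\Pi_n$ through $\Scal_n^*$ using \Cref{lemma:id_proj_decomp}. Your version is slightly more explicit in writing out $\bB = \bV(\bV\bV^T)^{-1}(O_n^*AO_n)(\bV\bV^T)^{-1}\bV^T$, whereas the paper simply invokes the existence of $T_n$ with $\Pi_n=\Scal_n^*T_n$ and sets $\bB=T_nAT_n^*$; these are the same object.
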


\begin{proof}
We proceed by double inclusion. Recall the definitions of $\Scal_n$ and $\Scal_n^*$:

\begin{equation}
	\Scal_n(h)  = \left(K_{x_i}^\star h\right)_{1 \leq i \leq n}, \quad h \in \hh_K,
\end{equation}
and
\begin{equation}
	\Scal_n^*(\alpha)  = \sum_{i=1}^n K_{x_i}\alpha_i, \alpha \in \RR^{n\times d}.
\end{equation}
In particular, for $\bB =  \left(\begin{smallmatrix} \bB_{11} & \dots & \bB_{1n}\\ \vdots & & \vdots \\  \bB_{n 1} & \dots & \bB_{nn} \end{smallmatrix}\right) \in \RR^{nd \times nd}$, it holds
\begin{equation}\label{eq:psd_finite_decomp}
\Scal_n^\star \bB \Scal_n = \sum_{ij = 1}^n K_{x_i} \bB_{ij} K_{x_j}^\star.
\end{equation}
\begin{enumerate}[a)]
\item $S_n(\hh_K)_+ \subset \left\{\sum_{i,j = 1}^n K_{x_i}\bB_{ij}K_{x_j}^*: \bB = \left(\begin{smallmatrix} \bB_{11} & \dots & \bB_{1n}\\ \vdots & & \vdots \\  \bB_{n 1} & \dots & \bB_{nn} \end{smallmatrix}\right) \in \pdm{\RR^{nd}}\right\}$.\\
Let $\Pi_n A\Pi_n \in \Scal_n(\hh_K)_+$, our goal is to show that there exists $\bB \in \pdm{\RR^d}$ such that $\Pi_n A\Pi_n  = \Scal_n^\star \bB \Scal_n.$ Using \Cref{lemma:id_proj_decomp}, we have that $\Pi_n$ can be written $\Scal_n^\star T_n$ with $T_n \in \Lcal(\hh_K, \RR^{n\times d})$. Hence, defining $\bB = T_n A T_n^\star$, we have $\Scal_n^\star \bB \Scal = \Pi_n A \Pi_n$. Further, $A \succeq 0 \implies \bB \succeq 0$. Given \cref{eq:psd_finite_decomp}, this shows the inclusion.
\item $ \left\{\sum_{i,j = 1}^n K_{x_i}\bB_{ij}K_{x_j}^*: \bB = \left(\begin{smallmatrix} \bB_{11} & \dots & \bB_{1n}\\ \vdots & & \vdots \\  \bB_{n 1} & \dots & \bB_{nn} \end{smallmatrix}\right) \in \pdm{\RR^{nd}}\right\} \subset S_n(\hh_K)_+$.\\
Let $\bB \in \pdm{\RR^{nd}}$. It holds $A := \Scal_n^\star \bB \in \pdm{\hh_K}$. Further, since the range of $\Scal_n^\star$ is included in $\hh_K^n$, it holds $\Pi_n S_n^* = \Pi_n$, and therefore $A \in \Scal_n(\hh_K)_+.$
\end{enumerate}
This concludes the proof.
\end{proof}

\paragraph{Proof of the theorem.} The proof of \Cref{thm:rep_thm} follows by combining the two results above: from \Cref{prop:finite_solution}, we have that \cref{eq:gen_sampled_pb} has a solution in $A^* \in S_n(\hh_K)+$, which is unique if $L$ is convex and $\la_2 > 0$. By \Cref{lemma:matrix_rep}, this solution may be written
\begin{equation}\label{eq:finite_form}
A^* = \sum_{i,j = 1}^n K_{x_i}\bB_{ij}K_{x_j}^*,
\end{equation}
for some $\bB = \left(\begin{smallmatrix} \bB_{11} & \dots & \bB_{1n}\\ \vdots & & \vdots \\  \bB_{n 1} & \dots & \bB_{nn} \end{smallmatrix}\right) \in \pdm{\RR^{nd}}$. Further, if $L$ is convex and $\la_2 > 0$, then the objection function of \cref{eq:gen_sampled_pb} is strongly convex, and hence the minimizer is unique.
Finally, plugging \cref{eq:finite_form} in \cref{eq:gen_sos_model}, for $x \in \Xcal$ we have
\begin{align}
    F_{A^*}(x) &= K_x^* A^* K_x\\
    &= \sum_{i,j = 1}^n K_x^*K_{x_i}\bB_{ij}K_{x_j}^*K_x\\
    &= \sum_{i,j = 1}^n K(x_i, x)\bB_{ij}K(x_j, x) =: F_\bB(x).
\end{align}
\Cref{eq:discrete_form} is then a particularization of the expression above to the case $K(x, x') = k(x, x') \eye_d$, where $k(\cdot, \cdot)$ is a scalar-valued kernel.

\subsection{Proof of \Cref{thm:dual_formulation}}\label{proof:dual_formulation}

We start by restating the following useful lemma.
\begin{lemma}[\citet{marteau2020non}, Lemma 5]\label{lemma:omega_dual}
Let $\lambda_1, \lambda_2 \geq 0$. For $\bB \in \sym{\RR^d}$, define
\begin{align}
    \Omega(\bB) = \begin{cases}
    \lambda_1 \tr \bB + \frac{\lambda_2}{2} \|\bB\|_F^2 ~~&\text{if}~~\bB \succeq 0,\\
    +\infty~~&\text{otherwise}.
    \end{cases}
\end{align}
\begin{enumerate}[(i)]
    \item Assume $\la_2 > 0$. Then $\Omega$ is a closed convex function, whose Fenchel conjugate is given for $\bB \in \sym{\RR^d}$ by
\begin{align}
    \Omega^\star(\bB) = 
    \frac{1}{2\lambda_2} \|[\bB - \lambda_1 \eye_d]_+|_F^2,
\end{align}
where $[\bB]_+$ denotes the positive part of $\bB$. $\Omega^\star$ is differentiable and $\frac{1}{\la_2}$-smooth. Its gradient is given by
\begin{equation}
    \nabla \Omega^\star(\bB) = \frac{1}{\lambda_2} [\bB - \lambda_1 \eye_d]_+.
\end{equation}
    \item Assume $\la_2 = 0$ and $\la_1 > 0$. Then $\Omega$ is a closed convex function, whose Fenchel conjugate is given for $\bB \in \sym{\RR^d}$ by
\begin{align}
    \Omega^\star(\bB) = \begin{cases} 0 ~~&\text{if}~~\la_1 \eye_d \succeq \bB,\\
    +\infty~~&\text{otherwise}.
    \end{cases}
\end{align}
\end{enumerate}

\end{lemma}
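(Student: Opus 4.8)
The plan is to compute $\Omega^\star$ directly from the definition of the Fenchel conjugate, and then to read off the differentiability and smoothness claims from the structure that appears. Fix $\bY \in \sym{\RR^d}$ and set $\bM = \bY - \lambda_1 \eye_d$; writing $C \defeq \pdm{\RR^d}$ for the PSD cone and using $\dotp{\bY}{\bB} - \lambda_1\tr\bB = \dotp{\bM}{\bB}$, the definition gives
\[
\Omega^\star(\bY) \;=\; \sup_{\bB \in C}\; \dotp{\bM}{\bB} - \tfrac{\lambda_2}{2}\|\bB\|_F^2 .
\]
I would first record that $\Omega$ is closed, convex and proper: it is the sum of the indicator of the closed convex cone $C$ and of the continuous convex quadratic $\bB \mapsto \lambda_1 \tr\bB + \tfrac{\lambda_2}{2}\|\bB\|_F^2$, and it is finite at $\bB = 0$.

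For claim (ii) ($\lambda_2 = 0$, $\lambda_1 > 0$) the supremum above is the support function of $C$ at $\bM$; by self-duality of the PSD cone it equals $0$ if $-\bM \in C$, i.e.\ if $\lambda_1 \eye_d \succeq \bY$, and $+\infty$ otherwise, which is the claimed formula. For claim (i) ($\lambda_2 > 0$) I would complete the square in $\bB$,
\[
\dotp{\bM}{\bB} - \tfrac{\lambda_2}{2}\|\bB\|_F^2 \;=\; \tfrac{1}{2\lambda_2}\|\bM\|_F^2 - \tfrac{\lambda_2}{2}\Big\|\bB - \tfrac{1}{\lambda_2}\bM\Big\|_F^2 ,
\]
so that $\Omega^\star(\bY) = \tfrac{1}{2\lambda_2}\|\bM\|_F^2 - \tfrac{\lambda_2}{2}\operatorname{dist}_F\!\big(\tfrac1{\lambda_2}\bM, C\big)^2$, the minimal distance being attained at the Euclidean projection of $\tfrac1{\lambda_2}\bM$ onto $C$. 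The key classical fact is that this projection is the positive-part map, $P_C(\tfrac1{\lambda_2}\bM) = [\tfrac1{\lambda_2}\bM]_+ = \tfrac1{\lambda_2}[\bM]_+$; hence $\tfrac1{\lambda_2}\bM - P_C(\tfrac1{\lambda_2}\bM) = -\tfrac1{\lambda_2}[\bM]_-$, and since $[\bM]_+$ and $[\bM]_-$ live on orthogonal eigenspaces, $\|\bM\|_F^2 = \|[\bM]_+\|_F^2 + \|[\bM]_-\|_F^2$. Combining these, $\operatorname{dist}_F(\tfrac1{\lambda_2}\bM, C)^2 = \tfrac1{\lambda_2^2}\|[\bM]_-\|_F^2$, so
\[
\Omega^\star(\bY) \;=\; \tfrac{1}{2\lambda_2}\big(\|\bM\|_F^2 - \|[\bM]_-\|_F^2\big) \;=\; \tfrac{1}{2\lambda_2}\big\|[\bM]_+\big\|_F^2 \;=\; \tfrac{1}{2\lambda_2}\big\|[\bY-\lambda_1\eye_d]_+\big\|_F^2 .
\]

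For the gradient I would exploit that the strictly concave problem defining $\Omega^\star(\bY)$ has the \emph{unique} maximizer $\bB^\star(\bY) = P_C(\tfrac1{\lambda_2}\bM) = \tfrac1{\lambda_2}[\bY-\lambda_1\eye_d]_+$, so by Danskin's theorem $\Omega^\star$ is differentiable with $\nabla\Omega^\star(\bY) = \bB^\star(\bY)$; being convex and everywhere differentiable on $\sym{\RR^d}$, it is then $C^1$. The $\tfrac1{\lambda_2}$-smoothness is immediate, since $P_C = [\cdot]_+$ is nonexpansive (projection onto a convex set), so $\bY \mapsto \tfrac1{\lambda_2}[\bY-\lambda_1\eye_d]_+$ is $\tfrac1{\lambda_2}$-Lipschitz in Frobenius norm. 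The only genuinely non-elementary points — the steps I would spell out most carefully — are (a) the identification of the Frobenius projection onto $\pdm{\RR^d}$ with the positive-part map together with the orthogonality giving $\|\bM\|_F^2 = \|[\bM]_+\|_F^2 + \|[\bM]_-\|_F^2$, and (b) using uniqueness of the maximizer to upgrade Danskin's one-sided formula to actual differentiability; both are standard (Moreau's decomposition theorem and the envelope theorem for strictly concave parametric maxima), and everything else is bookkeeping of constants under the substitution $\bM = \bY - \lambda_1\eye_d$ and the rescaling by $\lambda_2$.
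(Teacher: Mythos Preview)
Your proposal is correct. For part~(ii) your argument coincides with the paper's: both identify $\Omega^\star(\bY)$ as the support function of the PSD cone evaluated at $\bY-\lambda_1\eye_d$ and invoke self-duality of $\pdm{\RR^d}$ to conclude. For part~(i) the paper does not give a proof at all --- it simply cites \cite{marteau2020non}, Lemma~5 --- whereas you supply a self-contained argument via completing the square, identifying the maximizer as the Frobenius projection onto $\pdm{\RR^d}$, and using Moreau's decomposition $\bM = [\bM]_+ - [\bM]_-$ with $\langle[\bM]_+,[\bM]_-\rangle_F = 0$. Your derivation of the gradient through Danskin's theorem (uniqueness of the maximizer from strict concavity when $\lambda_2>0$) and of $\tfrac{1}{\lambda_2}$-smoothness through nonexpansiveness of the metric projection are both standard and correct. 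In short: nothing to fix; your write-up is strictly more informative than the paper's on part~(i).
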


\begin{proof}
\textbf{(i):} this is exactly \citet{marteau2020non}'s Lemma 5.\\
\textbf{(ii):} Let us adapt the proof \citet{marteau2020non}'s Lemma 5. We may write 
$$\Omega(\bB) = \iota_{\pdm{\RR^d}} + \lambda_1 \tr \bB,$$
where $\iota_{\pdm{\RR^d}}(\bB) = 0$ if $\bB \in \pdm{\RR^d}$ and $+\infty$ otherwise. Since the trace is linear, it is in particular convex, and continuous. Further, $\iota_{\pdm{\RR^d}}$ is closed since $\pdm{\RR^d}$ is a closed non-empty convex subset of $\sym{\RR^d}$. This shows that $\Omega$ is closed and convex, and linear on its domain $\pdm{\RR^d}$. Fix $\bB \in \sym{\RR^d}$, we have
\begin{align}
    \begin{split}
     \Omega^*(\bB) &\defeq \underset{\bA \in \sym{\RR^d}}{\sup} \tr(\bA\bB) - \Omega(\bA) \\
     &= \underset{\bA \in \pdm{\RR^d}}{\sup}\tr(\bA(\bB - \lambda_1 \eye_d))  \\
     &=  \begin{cases} 0 ~~&\text{if}~~\bB - \la_1 \eye_d \preceq 0,\\
    +\infty~~&\text{otherwise}.
    \end{cases}
    \end{split}
\end{align}
\end{proof}

\begin{proof}[\Cref{thm:dual_formulation}]
Like \cite{marteau2020non}, we apply Theorem 3.3.5 of \cite{borwein2006convex} with corresponding arguments represented in \Cref{tab:thm_args}.
\begin{table}[ht]
    \centering
    \begin{tabular}{c|c}
       $\bE$ & $\sym{\RR^{nd}}$  \\
        $\bY$ & $\sym{\RR^d}^n$ \\
        $A: \bE \rightarrow \bY$ & $R : \bB \in \sym{\RR^{nd}} \mapsto (F_\bB(x_1), ..., F_\bB(x_n)) \in \sym{\RR^d}^n$ \\
        $f : \bE \rightarrow (-\infty, +\infty]$ &  $\Omega : \sym{\RR^{nd}} \rightarrow (-\infty, +\infty]$\\
        $g: \bY \rightarrow (-\infty, +\infty]$ & $L: \sym{\RR^d}^n \rightarrow (-\infty, +\infty]$\\
        $p = \inf_{x \in \bE} g(Ax) + f(x)$ & $p = \inf_{\bB \in \sym{\RR^{nd}}} L(F_\bB(x_1), ..., F_\bB(x_n)) + \Omega(\bB)$\\
        $d = \sup_{\phi \in \bY} - g^*(\phi) - f^*(-A^*\phi)$ &  $d = \sup_{\Gamma \in \sym{\RR^d}^{n}} - L^*(\Gamma) - \Omega^*(-R^*(\Gamma))$
    \end{tabular}
    \caption{Corresponding arguments in Theorem 3.3.5 of \citet{borwein2006convex} {\em(left)} and \Cref{thm:dual_formulation} {\em(right)}}
    \label{tab:thm_args}
\end{table}

Indeed, the following properties are satisfied:
\begin{itemize}
    \item L is l.s.c., convex, and bounded below, which implies that it is closed~\citep[see]{borwein2006convex};
    \item $\Omega$ is a non-negative closed convex function, with dual $\Omega^\star$ given in \Cref{lemma:omega_dual}, which is differentiable and smooth when $\la_2 > 0$ (case (i));
    \item The domain of $\Omega$ is $\pdm{\RR^{nd}}$;
    \item $R$ is linear, and for $\Gamma \in \sym{\RR^d}^n$, it holds $R^*\Gamma = \sum_{i=1}^n\Psi_i\Gamma^{(i)}\Psi_i^T$;
    \item Using the expressions of $\Omega^*$ and $R^*$, we may reformulate the dual $d$ in \Cref{tab:thm_args}.
    \begin{enumerate}[(i)]
        \item When $\la_2 > 0$ and $\la_1 \geq 0$:
            $$
                d= \sup_{\Gamma\in \sym{\RR^d}^n} -L^\star(\Gamma) - \frac{1}{2\la_2} \| [\sum_{i=1}^n \Psi_i \Gamma^{(i)} \Psi_i^T + \lambda_1 \eye_{nd}]_{-} \|_F^2,
             $$
        \item When $\la_2 = 0$ and $\la_1 > 0$:
        $$
        d = \sup_{\Gamma\in \sym{\RR^d}^n} -L^\star(\Gamma) ~~\st~~ \sum_{i=1}^n \Psi_i \Gamma^{(i)} \Psi_i^T + \lambda \eye_{nd} \succeq 0;
        $$
    \item Assume there exists $\bB \in \pdm{\RR^{nd}}$ such that $L$ is continuous in $(F_B(x_i))_{1 \leq i\leq n}$. Then there exists a point of continuity of $g$ which is also in $R (\mathrm{dom}(f))$.
    \end{enumerate}
\end{itemize}

We may thus apply theorem 3.3.5 of \cite{borwein2006convex} to get:
\begin{itemize}
    \item $d = p$;
    \item $d$ is attained for a certain $\Gamma_* \in \sym{\RR^d}^n$. Indeed, there exists $\bB \in \mathrm{dom} (\Omega)$ such that $R(\bB) \in \mathrm{dom}(L)$. Thus, $L(R(\bB)) + \Omega(\bB) < + \infty$ and hence $d < + \infty$. Moreover, since $L$ and $\Omega$ are lower-bounded, this shows that $d$ is lower-bounded and hence $d > -\infty$. Hence $d$ is finite and thus is attained by theorem 3.3.5.
\end{itemize}

Finally, if $\la_2 > 0$ and $\la_1 \geq 0$ (case (i)), then $\Omega^*$ is differentiable and since $L$ and $\Omega$ are closed and convex, we may use Exercise 4.2.17 of \cite{borwein2006convex} to show that the primal solution $\bB_\star$ is given by
\begin{equation}
\bB^\star = \nabla \Omega^*(-R^*\Gamma_*) = \frac{1}{\la_2}\Psi\inv[\sum_{i=1}^n \Psi_i \Gamma^{(i)} \Psi_i^T + \lambda_1 \eye_{nd}]_{-} \Psi^{-T}.
\end{equation}

\end{proof}

\subsection{Proof of \Cref{thm:univ_approx}}\label{proof:univ_approx}

We start by restating \Cref{thm:univ_approx} under a more general form that is compatible with the vector-valued RKHSs introduced in \Cref{sec:vector_rkhs}.
\begin{theorem}[Universality]\label{thm:univ_approx_general}
Let $\hh_K$ be a separable Hilbert space of functions taking values in $\RR^d$ and $K: x\in \Xcal \rightarrow K_x \in \hh_K$ a universal map~\citep[see]{micchelli2006universal}. Then 
\begin{equation}
    \{F_A : x \mapsto K_x^\star A K_x  ~~\st~~ A \in \pdm{\hh^d}, \tr A < \infty\}
\end{equation} 
is a universal approximator of $d\times d$ PSD-valued functions over $\Xcal$.
\end{theorem}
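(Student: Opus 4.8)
The plan is to reduce the matrix-valued statement to the vector-valued density of $\hh_K$ in $\Ccal(\Zcal,\RR^d)$, after first extracting a \emph{continuous} square-root-type factorization of the target. Fix a compact set $\Zcal \subset \Xcal$, a tolerance $\varepsilon > 0$, and a function $g \in \Ccal(\Zcal, \pdm{\RR^d})$. The first step is to set $G(x) \defeq g(x)^{1/2}$, the unique PSD square root of $g(x)$. Since the map $\bM \mapsto \bM^{1/2}$ is continuous on $\pdm{\RR^d}$ (a standard fact: continuous functional calculus for Hermitian matrices, or uniqueness of the PSD square root combined with a compactness argument), we have $G \in \Ccal(\Zcal, \pdm{\RR^d})$ and $G(x)G(x)^T = g(x)$ for all $x \in \Zcal$. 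Writing $c_j(x)$ for the $j$-th column of $G(x)$, each $c_j \in \Ccal(\Zcal, \RR^d)$ and $g(x) = \sum_{j=1}^d c_j(x)c_j(x)^T$ for all $x\in\Zcal$.

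The second step is to approximate the columns inside $\hh_K$. By compactness, $R \defeq \max_{j \in [d]}\sup_{x \in \Zcal}\|c_j(x)\| < \infty$. By universality of $K$, $\hh_K$ is dense in $\Ccal(\Zcal, \RR^d)$ for the sup norm (in the setting of \Cref{sec:pos_op}, where $\hh_K = \hh^d$ and $K_x = \Phi(x)$, this follows from scalar universality of $\phi$ applied componentwise). Hence, for any $\delta > 0$ there exist $h_1, \dots, h_d \in \hh_K$ with $\sup_{x \in \Zcal}\|h_j(x) - c_j(x)\| \le \delta$ for every $j \in [d]$. Define
$$A \defeq \sum_{j=1}^d h_j \otimes h_j \in \pdm{\hh_K}, \qquad \tr A = \sum_{j=1}^d \|h_j\|_{\hh_K}^2 < \infty,$$
so that $F_A \in \Fcal$ and, by the reproducing property $K_x^\star h_j = h_j(x)$ (property (ii) of vector-valued RKHSs), $F_A(x) = \sum_{j=1}^d h_j(x)h_j(x)^T$ for all $x \in \Xcal$; in particular $F_A \in \Ccal(\Zcal,\pdm{\RR^d})$.

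The third step is the error estimate. Using the identity $ab^T - cc^T = (a-c)b^T + c(a-c)^T$ with $a=b=h_j(x)$, $c=c_j(x)$, together with $\|h_j(x)\| \le R + \delta$, we get $\|h_j(x)h_j(x)^T - c_j(x)c_j(x)^T\| \le \delta(2R+\delta)$ for all $x\in\Zcal$ and all $j$ (all norms on $\RR^{d\times d}$ being equivalent, the choice of matrix norm affects only constants). Summing over $j$ and using $g(x) = \sum_j c_j(x)c_j(x)^T$ yields
$$\sup_{x \in \Zcal}\|F_A(x) - g(x)\| \le d\,\delta(2R+\delta),$$
which is $\le \varepsilon$ once $\delta$ is small enough. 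This establishes that $\Fcal$ is a universal approximator of $d\times d$ PSD-valued functions.

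The main obstacle is Step 1: a continuous PSD-valued target must be factored into columns that are themselves continuous (and hence $\hh_K$-approximable), and a naive Cholesky factorization is discontinuous at rank-deficient matrices — so continuity of the matrix square root on $\pdm{\RR^d}$ is the essential ingredient. The remaining work (the outer-product perturbation bound and the invocation of vector-valued density) is routine. A minor point to watch is matching the norm $\|\cdot\|_\infty$ in the definition of universality with the spectral/Frobenius norms used in the outer-product estimate; since $d$ is fixed this costs only a dimension-dependent constant absorbed into $\delta$.
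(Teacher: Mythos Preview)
Your proof is correct and follows essentially the same approach as the paper: take the continuous PSD square root, approximate its columns in $\hh_K$ by universality, set $A$ to be the sum of the resulting rank-one outer products, and control the error via a rank-one perturbation bound. The only cosmetic differences are notation and the specific chain of norm inequalities used in the final estimate.
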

\begin{proof}
Let $\Zcal \subset \Xcal$ be a compact set, and $G \in \Ccal(\Zcal, \pdm{\RR^d})$ be a continuous PSD-valued function. 
For $x \in \Zcal$, let $R(x)$ denote the PSD square-root of $G(x)$. Since the matrix square root is continuous on $\pdm{\RR^d}$~\citep{horn2012matrix}, $R \in \Ccal(\Zcal, \pdm{\RR^d})$. For $i, j \in [d]$, define $r_{ij}(x) = e_i^T R(x) e_j$ (where $\{e_i, i \in [d]\}$ is the canonical ONB of $\RR^d$), and $w_i(x) = \sum_{j = 1}^d r_{ij}(x) e_j$. Then, $\forall i \in [d], w_i \in \Ccal(\Zcal, \RR^d)$. Let $\varepsilon >0 $ and $Q = d^2 \sum_{i=1}^d (\varepsilon + 2\|w_i\|_\Zcal)$. By universality of $x \in \Xcal \rightarrow K_x \in \hh_K$, there exist $f_i \in \hh_K, i \in [d]$ such that $\|f_i - g\|_\Zcal \leq \frac{\varepsilon}{Q}, i \in [d]$ (as $f_i(x) = K_x^\star f_i$). Let $A = \sum_{i = 1}^d f_i \otimes f_i \in \pdm{\hh_K}$, and fix $x\in \Zcal$. It holds
\begin{align}
    \begin{split}
    \|F_A(x) - G(x)\|_\infty &\leq d\|F_A(x) - G(x)\|_F\\
     &= d\|\sum_{i=1}^d f_i(x)f_i(x)^T - w_i(x)w_i(x)^T \|_F\\
    & \leq d\sum_{i=1}^d\| f_i(x)f_i(x)^T - w_i(x)w_i(x)^T \|_F\\
    & \leq d\sum_{i=1}^d\|f_i(x) + w_i(x)\|_2 \|f_i(x) - w_i(x)\|_2 \\
    & \leq d^2\sum_{i=1}^d\|f_i(x) + w_i(x)\|_\infty \|f_i(x) - w_i(x)\|_\infty \\
    & \leq d^2\sum_{i=1}^d (\varepsilon + 2\|w_i\|_\Zcal) \|f_i - w_i\|_\Zcal \\
    & \leq \frac{\varepsilon}{Q} d^2\sum_{i=1}^d (\varepsilon + 2\|w_i\|_\Zcal) \leq \varepsilon.
    \end{split}
\end{align}
Since this bound is independent from $x \in \Zcal$, this shows universality.
\end{proof}

\subsection{Proof of \Cref{thm:scattered_inequalities}}\label{proof:scattered_inequalities}

We prove a slightly more general version of \Cref{thm:scattered_inequalities} that is adapted to general vector-valued RKHSs. As for the results above, the original statement can be recovered by taking $K(x, x') = k(x, x')\eye_d$, for which \Cref{assum:vec_rkhs} is satisfied with $B_K = 1$.

\begin{theorem}\label{thm:scattered_inequalities_vector}
Let $\Xcal$ satisfy \Cref{ass:geom_domain} for some $r>0$. Let $K$ be a matrix-valued kernel satisfying \Cref{assum:vec_rkhs}. Let $\hat{X} = \{x_1, ..., x_n\} \subset \Xcal$ such that $h_{\hat{X}, \Xcal} \defeq \underset{x \in \Xcal}{\sup}~\underset{i \in [n]}{\min}~\|x-x_i\| \leq r \min\left(1, \tfrac{1}{18(m-1)^2}\right).$
Let $F \in C^m(\Xcal, \sym{\RR^\dimtwo})$ and assume there exists $\bB \in \pdm{\RR^n}$ such that 
$$F(x_i) = \Psi_i^T \bB \Psi_i, ~~i \in [n],$$
where the $\Psi_i$'s are defined in \Cref{sec:pos_op}. Then, it holds
\begin{equation}
    \forall x \in  \Xcal,~~ \la_{\min}(F(x)) \geq - \varepsilon, ~~\text{where}~~ \varepsilon = C h_{\hat{X},  \Xcal}^m,
\end{equation}
and $C = C_0(\la_{\max}(\bN_F)  + M D_m B_K \tr\bB)$ with $C_0 = 3\frac{\dimone^m}{m!}\max(1, 18(m-1)^2)^m$, and where $\bN_F \in \RR^{\dimtwo\times \dimtwo}$ denotes the matrix whose entries are the pseudo norms of the entries of $F$, $[\bN_F]_{ij} = |F_{ij}|_{\Xcal, m}, {1 \leq i, j \leq \dimtwo} $.
\end{theorem}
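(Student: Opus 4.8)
The plan is to reduce the matrix inequality to the scalar scattered-data (sampling) inequality of \citet{rudi2020global} (their Theorem~13, adapted from \citet{wendland2004scattered}), by testing against unit directions. Let $F_\bB$ denote the kernel SoS model attached to $\bB$, that is $F_\bB(x) = \Psi(x)^T\bB\Psi(x)$ with $\Psi(x) = \tilde{\bR}^{-T}v(x)$, $v(x) = (K(x,x_i))_{i=1}^n$, as in \Cref{subsec:vector_sos}. Two observations drive the argument: since $\bB\succeq 0$, $F_\bB(x)\succeq 0$ for \emph{every} $x\in\Xcal$; and since $F_\bB(x_i) = \Psi_i^T\bB\Psi_i = F(x_i)$, the difference $G \defeq F - F_\bB \in C^m(\Xcal,\sym{\RR^\dimtwo})$ has all $\dimtwo^2$ scalar entries vanishing on $\hat{X}$. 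By Weyl's inequality, $\la_{\min}(F(x)) \geq \la_{\min}(F_\bB(x)) - \max_{u\in S^{\dimtwo-1}}|u^T G(x)u| \geq -\max_{u\in S^{\dimtwo-1}}|u^T G(x)u|$, so it suffices to bound $|u^T G(x)u|$ uniformly in $u\in S^{\dimtwo-1}$ and $x\in\Xcal$.

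Fix $u\in S^{\dimtwo-1}$ and set $g_u \defeq u^T G(\cdot)u = \sum_{p,q=1}^\dimtwo u_p u_q\, G_{pq} \in C^m(\Xcal)$, which vanishes on $\hat{X}$. Under \Cref{ass:geom_domain} and the hypothesis $h_{\hat{X},\Xcal}\leq r\min(1,\tfrac{1}{18(m-1)^2})$, the scalar sampling inequality yields $\sup_{x\in\Xcal}|g_u(x)| \leq C_0\, h_{\hat{X},\Xcal}^m\, |g_u|_{\Xcal,m}$ with $C_0 = 3\tfrac{\dimone^m}{m!}\max(1,18(m-1)^2)^m$ ($\dimone$ being the input dimension). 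Differentiating termwise and using the triangle inequality for $|\cdot|_{\Xcal,m}$, we get $|g_u|_{\Xcal,m} \leq \sum_{p,q}|u_p||u_q|\big(|F_{pq}|_{\Xcal,m} + |[F_\bB]_{pq}|_{\Xcal,m}\big) = |u|^T(\bN_F + \bN_{F_\bB})|u|$, where $|u| \defeq (|u_1|,\dots,|u_\dimtwo|)$ and $\bN_{F_\bB}$ is defined like $\bN_F$. Since $\||u|\|_2 = 1$ and both matrices are symmetric with non-negative entries, $|u|^T\bN_F|u| \leq \la_{\max}(\bN_F)$ uniformly in $u$, so the only remaining work is to bound $\la_{\max}(\bN_{F_\bB})$ by $MD_mB_K\tr\bB$.

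For that I would exploit the sum-of-squares structure of $F_\bB$: via the finite-dimensional representation of \Cref{subsec:vector_sos} (equivalently, by diagonalizing $\bB$ and using that the dual Cholesky basis is orthonormal in $\hh$), $F_\bB = F_A$ for some $A = \sum_k\sigma_k\,w_k\otimes w_k \in \pdm{\hh_K}$ with $\sigma_k\geq 0$, $\|w_k\|_{\hh_K}=1$, and $\tr A = \tr\bB$; by the reproducing property $[F_\bB]_{pq}(x) = \sum_k\sigma_k\,w_k^{(p)}(x)w_k^{(q)}(x)$, with $w_k^{(p)}\in\hh$ the $p$-th component of $w_k$ (\Cref{assum:vec_rkhs}). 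Then \Cref{ass:rkhs}\ref{item:product_bounded} gives $\|w_k^{(p)}w_k^{(q)}\|_\hh\leq M\|w_k^{(p)}\|_\hh\|w_k^{(q)}\|_\hh$ and \Cref{ass:rkhs}\ref{item:kernel_derivatives_bounded} gives $|v|_{\Xcal,m}\leq D_m\|v\|_\hh$, hence $[\bN_{F_\bB}]_{pq}\leq MD_m\sum_k\sigma_k\|w_k^{(p)}\|_\hh\|w_k^{(q)}\|_\hh$; by Cauchy--Schwarz (with $\|u\|_2=1$) and \Cref{assum:vec_rkhs}\ref{item:norm_equiv},
\[
|u|^T\bN_{F_\bB}|u| \;\leq\; MD_m\sum_k\sigma_k\Big(\sum_p|u_p|\,\|w_k^{(p)}\|_\hh\Big)^2 \;\leq\; MD_m\sum_k\sigma_k\sum_p\|w_k^{(p)}\|_\hh^2 \;\leq\; MD_mB_K\sum_k\sigma_k \;=\; MD_mB_K\tr\bB.
\]
Combining the three pieces, $|u^T G(x)u|\leq C_0\, h_{\hat{X},\Xcal}^m\big(\la_{\max}(\bN_F) + MD_mB_K\tr\bB\big)$ for all $u\in S^{\dimtwo-1}$ and $x\in\Xcal$, which gives the claim with $C = C_0(\la_{\max}(\bN_F)+MD_mB_K\tr\bB)$. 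The main technical nuisances are making precise the $\bB\mapsto A$ correspondence (the Cholesky/pseudo-inverse bookkeeping of \Cref{subsubsec:finite_rep}) and justifying the termwise differentiation and the interchange of the spectral sum of $A$ with $\partial^\alpha$ and $\sup$, which rest on $\tr A<\infty$ and the uniform constants $M, D_m$; the scalar Wendland inequality itself is invoked as a black box.
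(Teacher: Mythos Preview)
Your proposal is correct and follows essentially the same route as the paper: fix $u\in S^{\dimtwo-1}$, apply the scalar scattered-data inequality of \citet{rudi2020global} to $g_u = u^T(F-F_A)u$, and bound $|g_u|_{\Xcal,m}$ uniformly in $u$ via the bilinear expansion for the $F$-part and the eigendecomposition of $A$ together with \Cref{ass:rkhs}\ref{item:product_bounded}, \ref{item:kernel_derivatives_bounded} and \Cref{assum:vec_rkhs}\ref{item:norm_equiv} for the $F_A$-part. The only cosmetic difference is that you control the SoS term entry-wise ($[\bN_{F_\bB}]_{pq}$) before forming the quadratic form, whereas the paper bounds $\|r_{A,u}\|_\hh$ directly for the scalar function $r_{A,u}=u^T F_A(\cdot)u$; both routes land on $MD_mB_K\tr\bB$ through the same Cauchy--Schwarz step.
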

\begin{proof}

\textbf{Step 1.} Let $u \in \Scal^{\dimtwo-1}$, and let $g_u(x) = \dotp{u}{\left(F(x) - K_x^*AK_x\right)u}$. From the assumptions, $g_u \in C^m(\Xcal)$. Applying Theorem 13 from \cite{rudi2020global}, we thus have
\begin{equation}
    \sup_{x \in \Xcal} |g_u(x)| \leq \varepsilon_u, ~~\varepsilon_u = c R_m(g_u) h^m_{\hat{X}, \Xcal},
\end{equation}
where $c = 3 \max(1, 18(m-1)^2)^m$ and 
$R_m(g_u) = \sum_{|\alpha| = m} \frac{1}{\alpha!}\sup_{x \in \Xcal}|\partial^\alpha g_u(x)| \leq \frac{\dimone^m}{m!} |g_u|_{\Xcal, m},$
with $$|g_u|_{\Xcal, m} \defeq \underset{|\alpha| = m}{\max}~\underset{x \in \Xcal}{\sup}~|\partial^\alpha g_u(x)|.$$

\textbf{Step 2: uniform bound on $|g_u|_{\hat{X}, \Xcal}, u \in \Scal^{\dimtwo-1}$.}

\begin{enumerate}[1)]
    \item  Let $F_u(x) = \dotp{u}{F(x) u}$. We have $F_u \in C^m(\Xcal)$. Since $F_u(x) = \sum_{1 \leq i , j \leq \dimtwo} u_i u_j F_{ij}(x)$, we have 
\begin{align}
    |F_u|_{\Xcal, m} &= \underset{|\alpha| = m}{\max}~\underset{x \in \Xcal}{\sup}~|\sum_{1 \leq i , j \leq \dimtwo} u_i u_j \partial^\alpha F_{ij}(x)|\\
    & \leq \underset{|\alpha| = m}{\max}~\underset{x \in \Xcal}{\sup}~ \sum_{1 \leq i , j \leq \dimtwo} |u_i| |u_j| |\partial^\alpha F_{ij}(x)|\\
    & \leq \sum_{1 \leq i , j \leq \dimtwo} |u_i| |u_j| |F_{ij}|_{\Xcal, m}\\
    & \leq \lambda_{\max}(\bN_F)
\end{align}
where $\bN_F \in \RR^{d\times d}$, $[\bN_F]_{ij} = |F_{ij}|_{\Xcal, m}, {1 \leq i, j \leq d} $ denotes the matrix whose entries are the pseudo norms of the entries of $F$.
Hence $\sup_{u \in S^{\dimtwo-1}}|F_u|_{\Xcal, m} \leq  \la_{\max} (\bN_F)$.
\item Let $r_{A, u}(x) = \dotp{u}{K_x^*AK_x u}$. We have $r _{A, u}\in C^m(\Xcal)$. Hence, from assumptions on $k$, we have $|r_{A, u}|_{\Xcal, m} \leq D_m \|r_{A, u}\|_\hh$  \citep[see][Remark 2]{rudi2020global}. Let us now bound $\|r_{A, u}\|_\hh$.
$A$ admits an eigen-decomposition $A = \sum_{p \in \NN} \sigma_p f_p \otimes f_p$, with $\{f_p, p \in \NN\}$ an orthonormal family of $\hh_K$, and $\sigma_p, p \in \NN$ is a non-increasing sequence of non-negative scalars converging to zero. Hence, we have
$$ r_{A, u}(x) = \sum_{p \in \NN} \sigma_p u^T f_p(x) u^T f_p(x).$$
Hence, by \Cref{assum:vec_rkhs}\ref{item:norm_equiv},
\begin{align}
    \|r_{A, u}\|_\hh &\leq \sum_{p \in \NN} \sigma_p \|(u^Tf_p(\cdot))^2\|_\hh\\
    &\leq M \sum_{p \in \NN} \sigma_p \|u^Tf_p(\cdot)\|_\hh^2 \\
    &\leq M B_K \tr A,
\end{align}
Indeed, for $f(x) = (f^{(1)}(x), ... ,f^{(\dimtwo)}(x))) \in \hh_K$, we have
\begin{align}
    \|u^Tf\|_\hh &\leq \sum_{s = 1}^d \|u_s f^{(s)}\|_\hh\\
    &\leq \sum_{s = 1}^\dimtwo |u_s| \|f^{(s)}\|_\hh.
\end{align}
Since $u\in\Scal^{d-1}$, the last line is maximized when $|u_s| = \frac{\|f^{(s)}\|_\hh}{\left(\sum_{i=1}^d\|f^{(i)}\|^2_\hh\right)\srt}.$
If $f$ satisfies $\|f\|_{\hh_K}^2 = 1$, this yields
\begin{align}
    \|u^Tf\|_\hh &\leq \left(\sum_{i=1}^d\|f^{(i)}\|^2_\hh\right)\srt \leq B_K \|f\|_{\hh_K} = B_K.
\end{align}
Hence we have $\sup_{x \in \Xcal} |g_u(x)| \leq \tau$ for $\tau =  \la_{\max} (\bN_F) + M D_m B_K \tr A $ that does not depend on $u$. 

\end{enumerate}

\paragraph{Step 3: Conclusion.}
Putting everything together, we have that 
$$
\forall x \in \Xcal, \lambda_{\min} F(x) = \underset{u \in \Scal{d-1}}{\min} F_u(x) \geq - C_0(\la_{\max}(\bN_F) + M D_m B_K \tr A) h^m_{\hat{X}, \Xcal},
$$
with $C_0 = 3\frac{\dimone^m}{m!}\max(1, 18(m-1)^2)^m$. In the statement of \Cref{thm:scattered_inequalities_vector}, we have further bounded $\la_{\max}(\bN_F)$ from above by $\tr \bN_F = \sum_{i=1}^\dimtwo |F_{ii}|_{\Xcal, m}$ for simplicity.
\end{proof}

\section{Modeling Convex Functions: Proofs}\label{sec:cvx_proofs}

\subsection{Proof of \Cref{thm:cvx_hess_function}}\label{proof:cvx_hess_function}
We start by restating \Cref{thm:cvx_hess_function} under a slightly more general form, that allows matrix-valued kernels.

\begin{theorem}[PSD sum-of-squares representation for convex functions]\label{thm:cvx_hess_function_matrix}
Let $\Xcal \subset \RR^d$ and $\hh, \hh_K$ be RKHSs of $\RR$-valued and $\RR^d$-valued functions on $\Xcal$ satisfying \Cref{assum:vec_rkhs}, with $\Ccal^s(\Xcal) \subset \hh$ for some $s \in \NN, s > d/2$. Let $f \in \Ccal^{s+2}(\Xcal)$ be strongly convex.  Then, there exists $A \in \pdm{\hh^d}$ such that $H_f(x) = F_A(x), \forall x \in \Xcal$.  
\end{theorem}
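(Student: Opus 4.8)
The proof rests on the observation that strong convexity forces $H_f$ to take values in the \emph{open} cone of positive-definite matrices, where the matrix square root is smooth; its columns then supply the ``square-root functions'' that we feed into the sum-of-squares model. Concretely, I would first use strong convexity of $f$ to produce a constant $\mu>0$ with $\la_{\min}(H_f(x))\ge\mu$ for all $x\in\Xcal$, and note that $f\in\Ccal^{s+2}(\Xcal)$ gives $H_f\in\Ccal^{s}(\Xcal,\sym{\RR^d})$. Thus $x\mapsto H_f(x)$ is a $\Ccal^s$ map into $\pd{\RR^d}$ with spectrum bounded away from $0$.

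\textbf{Smoothness of the square root (the crux).} The next step is to show that the pointwise symmetric positive-definite square root $R(x):=H_f(x)^{1/2}$ defines a map $R\in\Ccal^{s}(\Xcal,\sym{\RR^d})$. The map $\mathrm{sqrt}:\pd{\RR^d}\to\pd{\RR^d}$ is $\Ccal^\infty$ (indeed real-analytic) on the open cone $\pd{\RR^d}$: this follows from the inverse function theorem applied to $X\mapsto X^{2}$, whose differential at $X\succ 0$ is the Lyapunov operator $H\mapsto XH+HX$, which is invertible since $X$ has strictly positive spectrum (equivalently, one may use the Cauchy-integral representation $X^{1/2}=\tfrac{1}{2\pi i}\oint_{\gamma}z^{1/2}(z\eye_d-X)^{-1}\,dz$ for a contour $\gamma$ enclosing the spectrum). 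Composing the $\Ccal^\infty$ map $\mathrm{sqrt}$ with the $\Ccal^{s}$ map $H_f$, whose range lies in the region $\{M\succeq\mu\eye_d\}\subset\pd{\RR^d}$, yields $R=\mathrm{sqrt}\circ H_f\in\Ccal^{s}(\Xcal,\sym{\RR^d})$, with $R(x)^2=H_f(x)$ and $R(x)=R(x)^\top$.

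\textbf{Assembling the operator.} For $c\in[d]$ set $w_c(x):=R(x)e_c\in\RR^d$, the $c$-th column of $R(x)$. Each scalar component $w_c^{(a)}:x\mapsto R(x)_{ac}$ lies in $\Ccal^{s}(\Xcal)\subset\hh$ by hypothesis, hence $w_c=(w_c^{(1)},\dots,w_c^{(d)})\in\hh^{d}$ with $\|w_c\|_{\hh^d}^2<\infty$. Define $A:=\sum_{c=1}^{d}w_c\otimes w_c\in\pdm{\hh^d}$, so that $\tr A=\sum_{c=1}^{d}\|w_c\|_{\hh^d}^2<\infty$. By the reproducing property, $\Phi(x)^{*}w_c=\big(w_c^{(1)}(x),\dots,w_c^{(d)}(x)\big)=R(x)e_c$, and therefore
\[
F_A(x)=\Phi(x)^{*}A\,\Phi(x)=\sum_{c=1}^{d}\big(\Phi(x)^{*}w_c\big)\big(\Phi(x)^{*}w_c\big)^{\top}=R(x)\Big(\sum_{c=1}^{d}e_ce_c^{\top}\Big)R(x)^{\top}=R(x)^2=H_f(x)
\]
for every $x\in\Xcal$, which is the claim. (This is precisely the outer-product decomposition used in the proof of \Cref{thm:univ_approx}, except that here the square-root functions belong to $\hh$ \emph{exactly}, thanks to $\Ccal^s(\Xcal)\subset\hh$, so no approximation step is needed; in the matrix-valued-kernel formulation of \Cref{thm:cvx_hess_function_matrix} the conclusion still concerns $A\in\pdm{\hh^d}$, and $\hh_K$ enters only through \Cref{assum:vec_rkhs} to pin down the scalar space $\hh$.)

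\textbf{Main obstacle.} The only non-routine ingredient is the second step: establishing the $\Ccal^{s}$-regularity of the matrix square root and that pre-composition with the $\Ccal^{s}$ map $H_f$ preserves this regularity; after that the argument is bookkeeping with the reproducing property. It is worth recording that strong convexity is used solely to guarantee $\inf_{x\in\Xcal}\la_{\min}(H_f(x))>0$, i.e.\ that $H_f$ never reaches the boundary of $\pd{\RR^d}$ where $\mathrm{sqrt}$ loses smoothness; as the remark following the theorem suggests, this is sufficient but likely not necessary, and weaker (non-strong) conditions ensuring a smooth square root of $H_f$ would suffice for the same conclusion.
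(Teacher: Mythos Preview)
Your proposal is correct and follows essentially the same route as the paper's proof: both use strong convexity to keep $H_f$ uniformly positive definite, invoke the $\Ccal^\infty$ regularity of the matrix square root on the open PD cone to obtain $R=H_f^{1/2}\in\Ccal^{s}$, take its columns $w_c=Re_c$ as elements of $\hh^d$ via $\Ccal^{s}(\Xcal)\subset\hh$, and set $A=\sum_{c}w_c\otimes w_c$. Your justification of the square-root smoothness (inverse function theorem / Cauchy integral) is a bit more explicit than the paper, which simply cites \cite{rudi2020global}, but the argument is the same.
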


\begin{proof}
Since $f$ is assumed strongly convex, there exists $\rho > 0$ such that $\forall x, \bH_f(x) \succeq \frac{\rho}{2} \Id$. In particular, for all $x\in X$, the matrix $\bH_f(x)$ admits a positive square root $\sqrt{\bH_f(x) }$. Further, since $\sqrt{\cdot}$ is $C^\infty$ on the closed set $\{\bA \in \mathbb{S}_+(\Rd) : \bA \succeq \rho \Id\}$ and $\frac{\partial^2 f}{\partial x_i\partial x_j} \in \Ccal^s(\Xcal)$ for all $i, j \in [d]$, the functions $r_{i,j}: x \mapsto e_i^\top\sqrt{\bH_f(x)}e_j$ are in $C^s(\Xcal)$ for all $i, j \in [d]$ \citep[see Proposition 1 and Assumption 2b of ][]{rudi2020global},  where $(e_1, ..., e_d)$ is the canonical ONB of $\RR^d$. 
Define now the functions $$w_i(x) \defeq \sum_{j=1}^d r_{i,j}(x) e_j, \quad  \forall (x,u) \in X \times \Scal^{d-1}, i \in [d].$$ From the above arguments, it holds that $w_i\in \hh, i\in [d]$, and 
$$\bH_f(x)= \sum_{i=1}^d w_i(x)w_i(x)^T, \quad \forall x \in \Xcal.$$
Following \Cref{ass:rkhs}, we have that the $r_{i,j}$'s belong to $\hh$. Hence, by \Cref{assum:vec_rkhs}, it holds $w_i(\cdot) \in \hh_K, i \in [d]$. Finally, defining $A = \sum_{i=1}^d w_i \otimes w_i \in \pdm{\RR^d}$, we have 
\begin{align}
    \begin{split}
        F_A(x) &= K_x^\star A K_x\\
        &= \sum_{i=1}^d K_x^\star (w_i \otimes w_i) K_x\\
        &= \sum_{i=1}^d (K_x^\star w_i) \otimes (K_x^\star w_i) \\
        &= \sum_{i=1}^d w_i(x) \otimes w_i(x) = \bH_f(x),
    \end{split}
\end{align}
which concludes the proof.
\end{proof}

\subsection{Proof of \Cref{cor:scattered_convexity}}\label{proof:scattered_convexity}

\begin{proof}
This is a direct consequence of \Cref{thm:scattered_inequalities}. Indeed, it holds $H_{f}(x_i) = F_{\bB}(x_i), i \in [n]$, and $H_{f}$ satisfies the hypothesis of \Cref{thm:scattered_inequalities}. Hence, it holds
\begin{align}
    \forall x \in \Xcal, \lambda_{\min}(H_{f}(x)) \geq - \eta~\text{with}~\eta = C h_{\hat{X},  \Xcal}^m > 0,
\end{align}
and $C = C_0(\sum_{i=1}^\dimtwo |(H_{f})_{ii}|_{\Xcal, m} + M D_m B_K \tr\bB)$ with $C_0 = 3\frac{\dimtwo^m}{m!}\max(1, 18(m-1)^2)^m$. In particular, this implies that $f$ is $-\eta$ strongly convex.
\end{proof}

\subsection{Proof of \Cref{thm:subsampling_error}}\label{proof:subsampling_error}

\begin{proof}
Let us consider the following problems:
\begin{align}\label{eq:subsampling_error_proof_pb_1}
    \underset{\substack{f \in \hh \\ A \in \pdm{\hh^d}}}{\min} J(f) ~~\st~~~ H_f(x) = F_A(x), ~~x \in \Xcal,
\end{align}
and its regularized and subsampled version
\begin{align}\label{eq:subsampling_error_proof_pb_2}
    \begin{split}
        \underset{\substack{f \in \hh \\ \bB \in \pdm{\RR^{nd}}}}{\min} J(f) + \rho \|f\|^2_\Fcal + \la\tr\bB ~~ \st~~ H_f(x_j) = F_\bB(x_j) ~~j \in [n],
    \end{split}
\end{align}
with respective solutions $(f_\star, A_\star)$ and $(\hat{f}, \hat{\bB})$. 
$(f_\star, A_\star)$ is an admissible point of \eqref{eq:subsampling_error_proof_pb_2}. This implies that
\begin{align}\label{eq:plugin_ineq}
    J(\hat{f}) + \rho \|\hat{f}\|^2_\Fcal + \la \tr \hat{\bB}  \leq J(f_\star) + \rho \|f_\star\|^2_\Fcal + \la \tr A_\star 
\end{align}
and thus
\begin{align}
    J(\hat{f}) - J(f_\star) \leq \la \tr A_\star +  \rho \|f_\star\|^2_\Fcal - \la \tr \hat{\bB} - \rho \|\hat{f_\star}\|^2_\Fcal\leq \la \tr A_\star +  \rho \|f_\star\|^2_\Fcal.
\end{align}
From \Cref{cor:scattered_convexity}, we have that $\tilde{f} \defeq x \mapsto \hat{f}(x) + \tfrac{\eta}{2}\|x\|^2$ is convex. Further, by Lipschitzness of $J$, it holds $|J(\hat{f}) - J(\tilde{f})| \leq L \tfrac{\eta}{2}N(\|\cdot\|^2)$. Plugging this above, we get 
\begin{align}
    J(\tilde{f}) - J(f_\star) &\leq \la \tr A_\star +  \rho \|f_\star\|^2_\Fcal + |J(\hat{f}) - J(\tilde{f})|\\
    &\leq \la \tr A_\star +  \rho \|f_\star\|^2_\Fcal+  L \tfrac{\eta}{2}C_N.
\end{align}
From \Cref{cor:scattered_convexity}, we have $\eta \geq C_0(\sum_{i=1}^d |(H_{\hat{f}})_{ii}|_{\Xcal, m} + MD_m B_K\tr \hat{\bB})h_{\hat{X}, \Xcal}^m.$
From the definition of $|\cdot|_{\Xcal, m}$ in \cref{eq:def_semi_norm} and \Cref{ass:rkhs}\ref{item:kernel_derivatives_bounded}, we have 
\begin{align}
    \sum_{i=1}^d |(H_{\hat{f}})_{ii}|_{\Xcal, m} \leq \sum_{i=1}^d |\hat{f}|_{\Xcal, m+2} \leq d D_{m+2} \|\hat{f}\|_\Fcal.
\end{align}
Further, since $J(f_\star) \leq J(\hat{f})$, we have from \cref{eq:plugin_ineq} that $\rho \|\hat{f}\|^2_\Fcal + \la \tr \hat{\bB}  \leq \rho \|f_\star\|^2_\Fcal + \la \tr A_\star$. From this we get
\begin{align}
    \tr \hat{\bB} &\leq  \tr A_\star + \frac{\rho}{\la} \|f_\star\|^2_\Fcal\\
    \|\hat{f}\|^2_\Fcal  &\leq \sqrt{\frac{\la}{\rho} +  \|f_\star\|^2_\Fcal}.
\end{align}
Let $R^2 = \frac{\la}{\rho} \tr A_\star +  \|f_\star\|^2_\Fcal$ and $C_1 = \tfrac{1}{2} L C_N C_0 \max(D_m, D_{m+2})$. Using the inequalities above, we get
\begin{align}
    J(\tilde{f}) - J(f_\star) &\leq \rho R^2 +  L \tfrac{\eta}{2}C_N \\
    &\leq  \rho R^2 + \frac{C_1 M B_K h_{\hat{X}, \Xcal}^m}{\la}\rho R^2 + C_1 dh_{\hat{X}, \Xcal}^mR,
\end{align}
which yields \cref{eq:subsampling_error}. When $\rho = \la = C_1 M B_K h_{\hat{X}, \Xcal}^m$, we obtain from the above that 
\begin{align}
    J(\tilde{f}) - J(f_\star)  &\leq  2 \rho R^2 + d \frac{\rho}{MB_K} R\\
    &\leq 2 \rho (R + \frac{d}{4MB_K})^2\\
    &\leq 4\rho (R^2 + \frac{d^2}{16 M^2B_K^2})\\
    &\leq 4MC_1 (\tr A_\star +  \|f_\star\|_\Fcal^2 + \frac{d^2}{M^2 B_K^2}),
\end{align}
which completes the proof.
\end{proof}

\subsection{Proof of \Cref{prop:cvx_reg_approx_dual}}\label{proof:cvx_reg_approx_dual}

\begin{proof}
We divide the proof in two parts: we start by showing that strong duality holds, and then derive the dual formulation.
\paragraph{Strong duality.}
Problem \eqref{eq:approx_sos_cvx_reg} is finite-dimensional and convex, hence it suffices to show that a strictly feasible pair $(\alpha, \bB)$ exists, i.e.\ a pair $(\alpha, \bB) \in \RR^n \times \pdm{\RR^{\ell d}}$ with $\bB \succ 0$ satisfying \cref{eq:approx_rep}. Let us show that this is implied by $(H_1)$. Indeed, $(H_1)$ implies that there exists $\alpha \in \RR^n$ such that the Hessian of $f(\cdot) = \sum_{j=1}^n k(\cdot, x_j)$ is positive-definite in $v_i, i \in [\ell]$. For $i \in [\ell]$, let us denote $\bH_i \in \RR^{d \times d}$ the Hessian of $f$ in $x_i$. By hypothesis, it holds $\bH_i \succ 0, i \in [\ell]$. We must show that there exists $\bB \in \pdm{\RR^{\ell d}}, \bB \succ 0$ such that $\Psi_i^T \bB \Psi_i = \bH_i, i \in [\ell]$. For a universal kernel $k$, we have that $\{\Psi_i, i \in [\ell]\}$ is a linearly independent family of $\RR^{\ell d\times d}$.
For $i \in [n]$, let $\bL_i \in \RR^{\ell d \times n}$ be such that $\bL_i^T\bL_i = \bH_i$. Since $\bH_i \succ 0, i \in [\ell]$, the $\bL_i$'s have rank $d$. Let us further choose the $\bL_i$'s such that $\bL \defeq \begin{psmallmatrix} \bL_1 & | & \dots & | & \bL_n \end{psmallmatrix} \in \RR^{\ell d \times \ell d}$ has full rank. Then, let $\bR \in \RR^{\ell d \times \ell d}$ be such that $\bR \Psi_i = \bL_i, i \in [\ell]$. Such a $\bR$ exists since $\begin{psmallmatrix} \Psi_1 & | & \dots & | & \Psi_n \end{psmallmatrix}\in \RR^{\ell d\times \ell d}$ has full rank if $k$ is universal. Let finally $\bB = \bR^T\bR$. It holds $\Psi_i^T\bB\Psi_i = \Psi_i^T\bR^T \bR\Psi_i = \bL_i^T \bL_i, i \in [\ell]$, and $\bB \in \pdm{\RR^{\ell d}}$. Further, since $\bL$ has full rank by assumption, $\bB$ has full rank, which implies $\bB \succ 0$. Therefore, $(\alpha, \bB)$ is a strictly feasible pair, and strong duality holds.
%
\paragraph{Dual formulation.}
Since strong duality holds, we may apply Lagragian duality. The Lagrangian of \cref{eq:approx_sos_cvx_reg} is
\begin{align}\label{eq:lagrangian_approx_reg}
\begin{split}
    \Lcal(\alpha, \bB, \Gamma) &= \frac{1}{n} \sum_{i=1}^n(y_i - \sum_{j=1}^n \alpha_i k(x_i, x_j))^2  + \rho \alpha^T \bK \alpha + \lambda_1 \tr \bB + \frac{\la_2}{2} \|\bB\|_F^2\\
    &-\sum_{j=1}^\ell\sum_{i=1}^n\sum_{p, q=1}^d \Gamma^{(j)}_{pq} \alpha_i \frac{\partial^2 k(x_i, v_j)}{\partial v_j^p\partial v_j^q} + \sum_{j=1}^\ell\dotp{\Gamma^{(j)}}{\Psi_j^T \bB \Psi_j}_F.
\end{split}
\end{align}
\end{proof}
Let us derive the optimality conditions w.r.t.\ $\alpha$. It holds
\begin{equation}
    \nabla_{\alpha}\Lcal(\alpha, \bB, \Gamma) = \frac{2}{n}\bK (\bK \alpha - y ) + 2 \rho \bK \alpha - \sum_{j=1}^\ell\sum_{p, q=1}^d\Gamma^{(j)}_{pq} \frac{\partial^2K(X, v_j)}{\partial v_j^p\partial v_j^q}.
\end{equation}
Setting the gradient to zero, we get 
\begin{equation}
    \alpha = (\frac{1}{n}\bK^2 + \rho \bK)\inv \left(\frac{1}{n}\bK y + \frac{1}{2}\sum_{j=1}^\ell\sum_{p, q=1}^d \Gamma^{(j)}_{pq} \frac{\partial^2 K(X, v_j)}{\partial v_j^p\partial v_j^q}\right)
\end{equation}
Further, from \Cref{lemma:omega_dual} we have that
\begin{equation}
\underset{\bB\in \pdm{\RR^{\ell d }}}{\inf} \la_1 \tr \bB + \frac{\la_2}{2}\|\bB\|_F^2 + \sum_{i=1}^\ell\dotp{\Gamma^{(i)}}{\Psi_j^T \bB \Psi_j}_F  = -\frac{1}{2\la_2}\|[\sum_{i=1}^\ell \Psi_i\Gamma^{(i)}\Psi_i^T + \lambda_1\eye_{\ell d}]_-\|_F^2.
\end{equation}
Plugging everything in \cref{eq:lagrangian_approx_reg}, we get \cref{eq:approx_sos_cvx_reg}.

\subsection{Convex Regression with Reproducing Property of Derivatives}\label{sec:cvx_reg_exact}

In this section, we consider convex regression with the Hessian representation \eqref{eq:exact_rep}. That is, given samples $(x_i, y_i) \in \Xcal \times \RR, i \in [n]$ and grid points $v_j \in \Xcal, j \in [\ell]$ (that may but need not coincide with the $x_i$'s) we solve the following problem 
\begin{align}\label{eq:pos_hessian_cvx_reg}
\begin{split}
&\min_{\substack{f \in \hh \\ \bB \in \pdm{\RR^{n d}}}}~ \frac{1}{n} \sum_{i=1}^n(y_i - f(x_i))^2  +  \lambda_1 \tr \bB + \frac{\lambda_2}{2} \|\bB\|_F^2 + \rho \|f\|_\hh^2
\\&~~\st~~\sum_{p, q=1}^d \dotp{f}{\partial_{pq}k_{v_j}} e_p e_q^T = \Psi_j^T \bB \Psi_j,~~ j \in [\ell].
\end{split}
\end{align}
The following proposition establishes the dual formulation of \eqref{eq:pos_hessian_cvx_reg}, which is more amenable to optimization.
\begin{figure}[ht]
    \centering
    \includegraphics{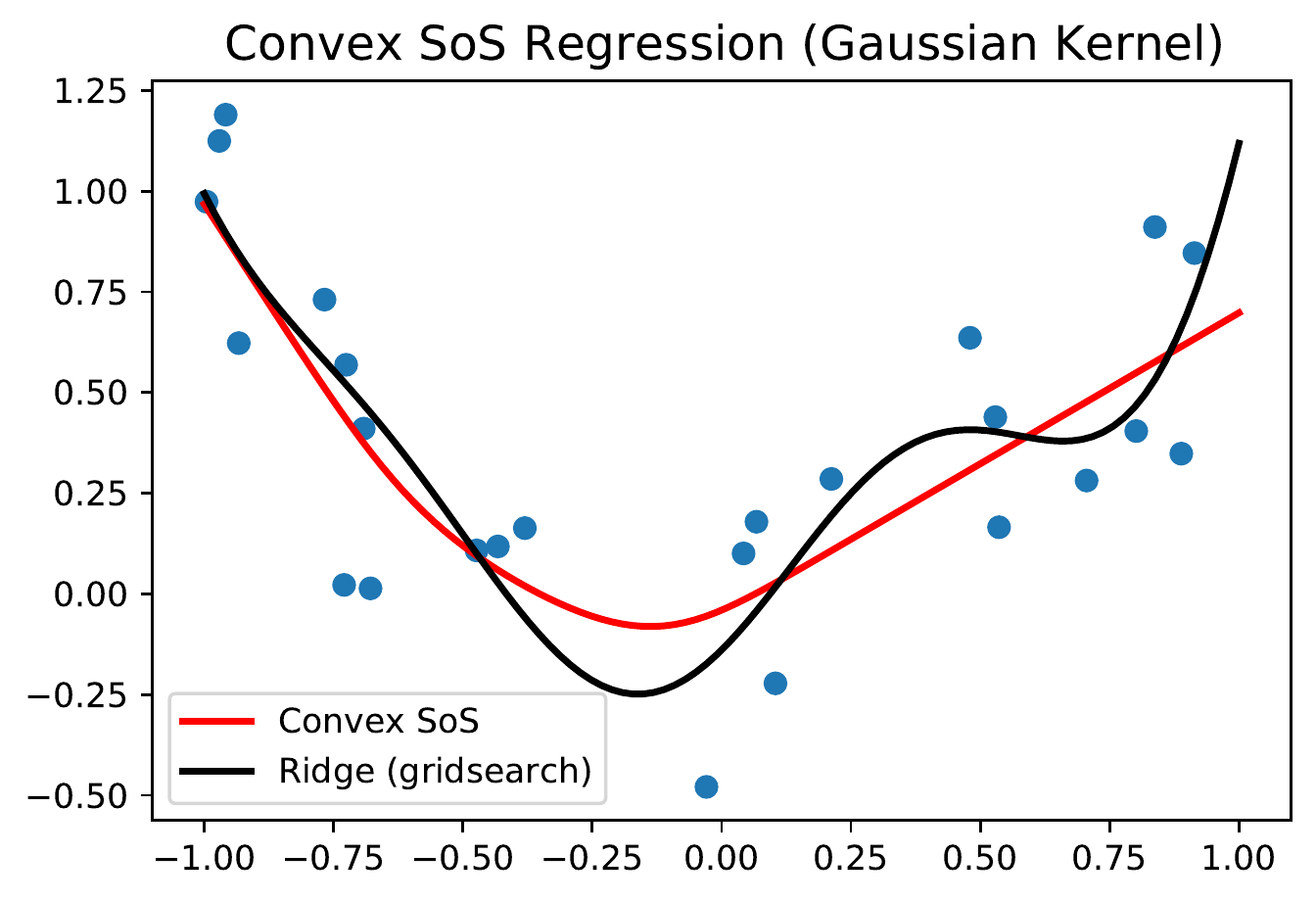}
    \caption{Convex regression with reproducing property: 1D example.}
    \label{fig:cvx_reg}
\end{figure}
\begin{proposition}[Convex regression]\label{prop:cvx_reg}
Let $k$ be a kernel satisfying \Cref{ass:rkhs} and such that $\Ccal^s(\Xcal) \subset \hh$ for some $s\in \NN_+$. Strong duality applies, and \cref{eq:pos_hessian_cvx_reg} admits the following dual formulation:
\begin{align}\label{eq:pos_hess_cvx_reg_dual}
\begin{split}
    &\min_{\Gamma \in \sym{\RR^\dimtwo}^{\ell}}~\Big\{\frac{1}{4\rho}\sum_{p,q,r,s=1}^d\left(\Gamma_{pq}^{(\cdot)}\right)^T\partial_{pqrs}^4\bK(V, V)\Gamma_{rs}^{(\cdot)} 
     \\
     & ~~~~~~~~~~~- \frac{1}{4n\rho}\sum_{p,q,r,s=1}^d\left(\Gamma_{pq}^{(\cdot)}\right)^T\partial_{pq}^2\bK(X, V)^T\bW\partial_{rs}^2\bK(X, V)\Gamma_{rs}^{(\cdot)} \\
    & ~~~~~~~~~~~+ \frac{1}{2n\rho}y^T(\eye_n + \rho \bW - \frac{1}{n}\bK\bW)(\sum_{p,q=1}^d\partial^2_{pq} \bK(X, V)\Gamma_{pq}^{(\cdot)}) \\  & ~~~~~~~~~~~+  \frac{1}{n}y^T (\frac{1}{n}\bK\bW - \eye_n) y + \Omega^*(\sum_{j=1}^\ell \Psi_j \Gamma^{(j)}\Psi_j^T)\Big\},
\end{split}
\end{align}
where for $p, q \in [d],~ \Gamma^{(\cdot)}_{pq} \defeq [\Gamma^{(j)}_{pq}]_{j \in [\ell]}  \in \RR^{\ell},$ and with
\begin{align}
    &\bW \defeq (\frac{1}{n}\bK + \rho\eye_n)\inv, ~~ K_{ij} = k(x_i, x_j),  i, j  \in [n],\\
    &[\partial_{pq}^2\bK(X, V)]_{ij} \defeq \frac{\partial^2 k(x_i, v_j)}{\partial v_j^p \partial v_j^q}, ~~ p, q  \in [d], i \in [n], j \in [\ell],\\
    &[\partial_{pqrs}^4\bK(V, V)]_{ij} \defeq \frac{\partial^4 k(v_i, v_j)}{\partial v_i^p \partial v_i^q\partial v_j^r \partial v_j^s}~~p, q, r, s  \in [d], i \in [n], j \in [\ell].
\end{align}
and 
\begin{align}
\Omega^*(\bB) =
\begin{cases}
     \frac{1}{2\lambda_2} \|[\bB + \lambda_1 \eye_d]_-\|_F^2 & \text{if}~~\la_2 > 0 ~\text{and}~ \la_1 \geq 0,\\ 
    \iota_{\{\bB+ \la_1 \eye_{nd} \succeq 0\}} &\text{if}~~\la_2 = 0 ~\text{and}~ \la_1 > 0.
\end{cases} 
\end{align}
Further, the corresponding primal solution is 
\begin{equation}\label{eq:primal_dual_cvx_reg_exact}
    f^\star = \sum_{i=1}^n(\alpha_i + \frac{1}{2}\sum_{p,q =1}^d \delta_{p, q}^{(i)})k_{x_i} + \frac{1}{2\rho} \sum_{j=1}^\ell \sum_{p,q=1}^d\Gamma^{(j)}_{pq} \partial_{pq} k_{v_j},
\end{equation}
where 
\begin{align}
    & \alpha = \frac{1}{n}\bW y\\
    & \delta_{pq}^{(\cdot)} = -\frac{1}{n\rho} \bW\partial^2_{pq} \bK(X, V) \Gamma_{pq}^{(\cdot)} \in \RR^{n},~~ p,q \in [d].
\end{align}

\end{proposition}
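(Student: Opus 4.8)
The proof has the same two ingredients as that of \Cref{prop:cvx_reg_approx_dual}: first check that Slater's condition holds so that strong Lagrangian duality applies, and then compute $\inf_{f,\bB}\Lcal(f,\bB,\Gamma)$ in closed form, where $\Lcal$ is the Lagrangian of \eqref{eq:pos_hessian_cvx_reg} with matrix multipliers $\Gamma^{(j)}\in\sym{\RR^d}$, $j\in[\ell]$.

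\textbf{Strong duality.} Since \eqref{eq:pos_hessian_cvx_reg} is convex with affine constraints, it suffices to exhibit a pair $(f,\bB)$ with $\bB\succ 0$ satisfying the equalities. As $\Xcal$ is bounded and $\Ccal^s(\Xcal)\subset\hh$ with $s\geq 2$, any strongly convex $g\in\Ccal^{s+2}(\Xcal)$ (e.g.\ $g:x\mapsto\|x\|^2$) lies in $\hh$; by the reproducing property for derivatives \citep{zhou2008derivative} (which requires $k\in\Ccal^4(\Xcal\times\Xcal)$, so that $\partial_{pq}k_v\in\hh$ and second derivatives are reproduced), the left-hand side of the constraint evaluated at $g$ equals $H_g(v_j)\succ 0$. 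It then remains to find $\bB\succ 0$ with $\Psi_j^\top\bB\Psi_j=H_g(v_j)$ for all $j$, which is done exactly as in the proof of \Cref{prop:cvx_reg_approx_dual}: choose rank-$d$ factors $\bL_j$ of $H_g(v_j)$ assembled into a full-rank block matrix, use that $[\Psi_1\,|\,\cdots\,|\,\Psi_\ell]$ has full rank (its columns are those of the Cholesky factor $\bR$ of the kernel matrix tensored with $\eye_d$), pick $\bR_0$ with $\bR_0\Psi_j=\bL_j$, and set $\bB=\bR_0^\top\bR_0$. Unlike in \Cref{prop:cvx_reg_approx_dual}, no analogue of $(H_1)$ is needed here, since $f$ ranges over all of $\hh$ and a strictly convex interpolant always exists.

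\textbf{Minimizing the Lagrangian.} Using $\langle\Gamma^{(j)},\sum_{pq}\langle f,\partial_{pq}k_{v_j}\rangle_\hh\,e_pe_q^\top\rangle_F=\langle f,\sum_{pq}\Gamma^{(j)}_{pq}\partial_{pq}k_{v_j}\rangle_\hh$ and $\langle\Gamma^{(j)},\Psi_j^\top\bB\Psi_j\rangle_F=\langle\Psi_j\Gamma^{(j)}\Psi_j^\top,\bB\rangle_F$, the Lagrangian splits into a $\bB$-part $\Omega(\bB)+\langle\sum_j\Psi_j\Gamma^{(j)}\Psi_j^\top,\bB\rangle_F$ and an $f$-part $\tfrac1n\sum_i(y_i-\langle f,k_{x_i}\rangle_\hh)^2+\rho\|f\|_\hh^2-\langle f,h_\Gamma\rangle_\hh$ with $h_\Gamma:=\sum_{j,p,q}\Gamma^{(j)}_{pq}\partial_{pq}k_{v_j}$. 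The $\bB$-minimization is handled by \Cref{lemma:omega_dual}, giving $-\Omega^\star(-\sum_j\Psi_j\Gamma^{(j)}\Psi_j^\top)$, which is minus the $\Omega^\star$-term in \eqref{eq:pos_hess_cvx_reg_dual} after using $[-\bM]_+=[\bM]_-$. The $f$-part is a strictly convex, coercive quadratic on $\hh$ (as $\rho>0$), so its minimizer exists; setting the Fréchet derivative to zero shows it lies in $\Span\{k_{x_i}:i\in[n]\}+\Span\{\partial_{pq}k_{v_j}:p,q\in[d],j\in[\ell]\}$ and has the form $f^\star=\sum_i\gamma_ik_{x_i}+\tfrac1{2\rho}h_\Gamma$ with $\gamma=\tfrac1n\bW\big(y-\tfrac1{2\rho}\sum_{pq}\partial^2_{pq}\bK(X,V)\Gamma^{(\cdot)}_{pq}\big)$ and $\bW=(\tfrac1n\bK+\rho\eye_n)\inv$. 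Splitting $\gamma=\alpha+\tfrac12\sum_{pq}\delta^{(\cdot)}_{pq}$ with $\alpha=\tfrac1n\bW y$ and $\delta^{(\cdot)}_{pq}=-\tfrac1{n\rho}\bW\partial^2_{pq}\bK(X,V)\Gamma^{(\cdot)}_{pq}$ then recovers the primal formula \eqref{eq:primal_dual_cvx_reg_exact}.

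\textbf{Assembling the dual.} Substituting $f^\star$ back, every inner product reduces to kernel or kernel-derivative evaluations via the reproducing property: $\langle k_{x_i},k_{x_j}\rangle_\hh=K_{ij}$, $\langle k_{x_i},\partial_{pq}k_{v_j}\rangle_\hh=\partial^2_{pq}k(x_i,v_j)$, and $\langle\partial_{pq}k_{v_i},\partial_{rs}k_{v_j}\rangle_\hh=\partial^4_{pqrs}k(v_i,v_j)$ — this is where the matrices $\partial^2_{pq}\bK(X,V)$ and $\partial^4_{pqrs}\bK(V,V)$ in \eqref{eq:pos_hess_cvx_reg_dual} come from. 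Expanding $\tfrac1n\sum_i(y_i-f^\star(x_i))^2+\rho\|f^\star\|_\hh^2-\langle f^\star,h_\Gamma\rangle_\hh$ and simplifying with the resolvent identity $\eye_n-\tfrac1n\bK\bW=\rho\bW$ (so that e.g.\ $\eye_n+\rho\bW-\tfrac1n\bK\bW=2\rho\bW$) collects the terms into the stated quadratic, cross and constant terms in $\Gamma$; together with the $\Omega^\star$-term this yields \eqref{eq:pos_hess_cvx_reg_dual}, up to the sign convention relating the primal optimal value to the displayed minimum. The main obstacle is purely computational bookkeeping: tracking the numerous cross-terms produced by expanding the quadratic in $f^\star$ and regrouping them into compact matrix form. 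I would manage this by first writing everything in terms of $\gamma$ and $\Gamma$, using the normal equation for $\gamma$ to cancel a block of terms, and only then inserting the explicit formula for $\gamma$.
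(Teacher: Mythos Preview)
Your proposal is correct and follows essentially the same route as the paper: Slater's condition via a strictly convex $f\in\hh$ and the $\bB\succ 0$ construction from \Cref{prop:cvx_reg_approx_dual}, then separate minimization of the Lagrangian in $\bB$ (via \Cref{lemma:omega_dual}) and in $f$ (via the first-order condition, yielding the same expansion on $k_{x_i}$ and $\partial_{pq}k_{v_j}$), followed by substitution and kernel-derivative reproducing identities. The only cosmetic difference is that the paper writes the $f$-minimizer as $\Scal^{-1}$ applied to two pieces with $\Scal=\tfrac1n\sum_i k_{x_i}\otimes k_{x_i}+\rho\,\mathrm{Id}$, whereas you solve the normal equation directly for the coefficient vector $\gamma$; the resulting $\alpha$, $\delta_{pq}^{(\cdot)}$ and the final dual coincide.
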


\begin{proof}
We start by showing that strong duality holds, and then derive the dual formulation.\\

\textit{Strong duality.}
Since \eqref{eq:pos_hessian_cvx_reg} is a convex problem, it suffices to show that a strictly feasible point exists.
By assumption, $\hh$ contains $\Ccal^s$ functions. Let $f \in \Ccal^s(\Xcal)$ be such that $f$ is strictly convex in $v_j, j\in [\ell]$ (e.g.\, pick $f$ strongly convex and $\Ccal^s$). Let $\bH_j \defeq \bH_f(v_j) \succ 0, j\in [\ell]$. Following the construction in \Cref{proof:cvx_reg_approx_dual}, there exists $\bB \in \pdm{\RR^{\ell d}}, \bB \succ 0$ such that $\Psi_j\bB\Psi_j = \bH_j, j \in [\ell]$. Hence, $(f, \bB)$ is a strictly feasible point. \\

\textit{Dual formulation.}
Since strong duality holds, we may apply Lagrangian duality. We have
\begin{align}\label{eq:exact_rep_lagrangian}
\begin{split}
    \Lcal(f, \bB, \Gamma) &=  \frac{1}{n} \sum_{i=1}^n(y_i - f(x_i))^2  +  \lambda_1 \tr \bB + \frac{\lambda_2}{2} \|\bB\|_F^2 + \rho \|f\|_\hh^2\\
   & - \dotp{f}{\sum_{j = 1}^\ell\sum_{p, q=1}^d \Gamma^{(j)}_{pq} \partial_{pq}k_{v_j}} + \sum_{j = 1}^\ell \dotp{\Gamma^{(j)}}{\Psi_j^T \bB \Psi_j}_F.
\end{split}
\end{align}
Let us derive the optimality conditions w.r.t. $f$. We have
\begin{align}
    \nabla_f \Lcal(f, \bB, \Gamma) = \frac{2}{n}\sum_{i=1}^n(\dotp{f}{k_{x_i}}_\hh - y_i) k_{x_i} + 2 \rho f - \sum_{j = 1}^\ell\sum_{p, q=1}^d \Gamma^{(j)}_{pq} \partial_{pq}k_{v_j}.
\end{align}
Setting this gradient to zero, we obtain
\begin{align}
   f = \Scal\inv\left(\frac{1}{n}\sum_{i=1}^n y_i k_{x_i} + \frac{1}{2}\sum_{j = 1}^\ell\sum_{p, q=1}^d \Gamma^{(j)}_{pq}, \partial_{pq}k_{v_j})\right)
\end{align}
with $\Scal = \frac{1}{n} \sum_{i=1}^n k_{x_i} \otimes k_{x_i} + \rho \Id$. We can show that
\begin{equation}
    \Scal\inv(\frac{1}{n} \sum_{i=1}^n y_i k_{x_i}) = \sum_{i=1}^n \alpha_i k_{x_i} ~~\text{with}~~ 
\alpha = \frac{1}{n}\bW y,
\end{equation} 
and 
\begin{align}
\Scal\inv(\sum_{j=1}^\ell \sum_{p, q=1}^d \Gamma^{(j)}_{pq} \partial_{pq} k_{v_j}) = \sum_{i=1}^n \delta_{pq}^{(i)} k_{x_i} + \frac{1}{\rho} \sum_{j=i}^\ell \sum_{p, q=1}^d \Gamma^{(j)}_{pq} \partial_{pq} k_{v_i},
\end{align}
with $$\delta_{pq}^{(\cdot)} = -\frac{1}{n\rho} \bW\partial^2_{pq} \bK(X, V) \Gamma_{pq}^{(\cdot)} \in \RR^{n}, p,q \in [d], $$
where $\bW \defeq (\rho \eye_n + \frac{1}{n}\bK)\inv$ and $[\partial_{pq}^2\bK(X, V)]_{ij} \defeq \frac{\partial^2 k(x_i, v_j)}{\partial v_j^p \partial v_j^q}, ~~ i \in [n], j \in [\ell], p, q  \in [d].$
Let us now derive the optimality conditions in $\bB$. We have
\begin{align}
    &\inf_{\bB \in \pdm{\RR^{nd}}}\lambda_1 \tr \bB + \frac{\lambda_2}{2} \|\bB\|_F^2 + \sum_{j = 1}^\ell \dotp{\Gamma^{(j)}}{\Psi_j^T \bB \Psi_j}_F \\
    &= \inf_{\bB \in \pdm{\RR^{nd}}}\lambda_1 \tr \bB + \frac{\lambda_2}{2} \|\bB\|_F^2 + \dotp{\bB}{\sum_{j = 1}^\ell \Psi_j \Gamma^{(j)} \Psi_j^T}_F \\
    &= \Omega^*(\sum_{j=1}^\ell \Psi_j \Gamma^{(j)}\Psi_j^T),
\end{align}
where, from \Cref{lemma:omega_dual}, 
\begin{align}
\Omega^*(\bB) =
\begin{cases}
     \frac{1}{2\lambda_2} \|[\bB + \lambda_1 \eye_d]_-\|_F^2 & \text{if}~~\la_2 > 0 ~\text{and}~ \la_1 \geq 0,\\ 
    \iota_{\{\bB+ \la_1 \eye_{nd} \succeq 0\}} &\text{if}~~\la_2 = 0 ~\text{and}~ \la_1 > 0.
\end{cases} 
\end{align}
  Finally, plugging everything in \cref{eq:exact_rep_lagrangian}, we get the following problem:
\begin{align}
\begin{split}
  \underset{\Gamma \in \sym{\RR^d}^n}{\inf}&\Big\{ \frac{1}{4\rho}\sum_{p,q,r,s=1}^d(\Gamma_{pq}^{(\cdot)})^T\partial_{pqrs}^4\bK(V, V)\Gamma_{rs}^{(\cdot)} \\
    &- \frac{1}{4n\rho}\sum_{p,q,r,s=1}^d\left(\Gamma_{pq}^{(\cdot)}\right)^T\partial_{pq}^2\bK(V, X)\bW\partial_{rs}^2\bK(X, V)\Gamma_{rs}^{(\cdot)} \\
    & + \frac{1}{2n\rho} y^T(\eye_n + \rho\bW - \frac{1}{ n}\bK\bW)(\sum_{p,q=1}^d\partial^2_{pq} \bK(X, V)\Gamma_{pq}^{(\cdot)}) \\
    & + \frac{1}{n}y^T (\frac{1}{n}\bK\bW - \eye_n) y\Big\},  
\end{split}
\end{align}
with $[\partial_{pqrs}^2\bK(V, V)]_{ij} \defeq \frac{\partial^4 k(v_i, v_j)}{\partial v_i^p \partial v_i^q\partial v_j^r \partial v_j^s},~~i, j \in [\ell], p, q, r, s  \in [d]. $
\end{proof}

\section{Additional Experiments and Numerical Details}\label{sec:add_exp}

\begin{figure}[ht]
    \centering
    \includegraphics[width=\textwidth]{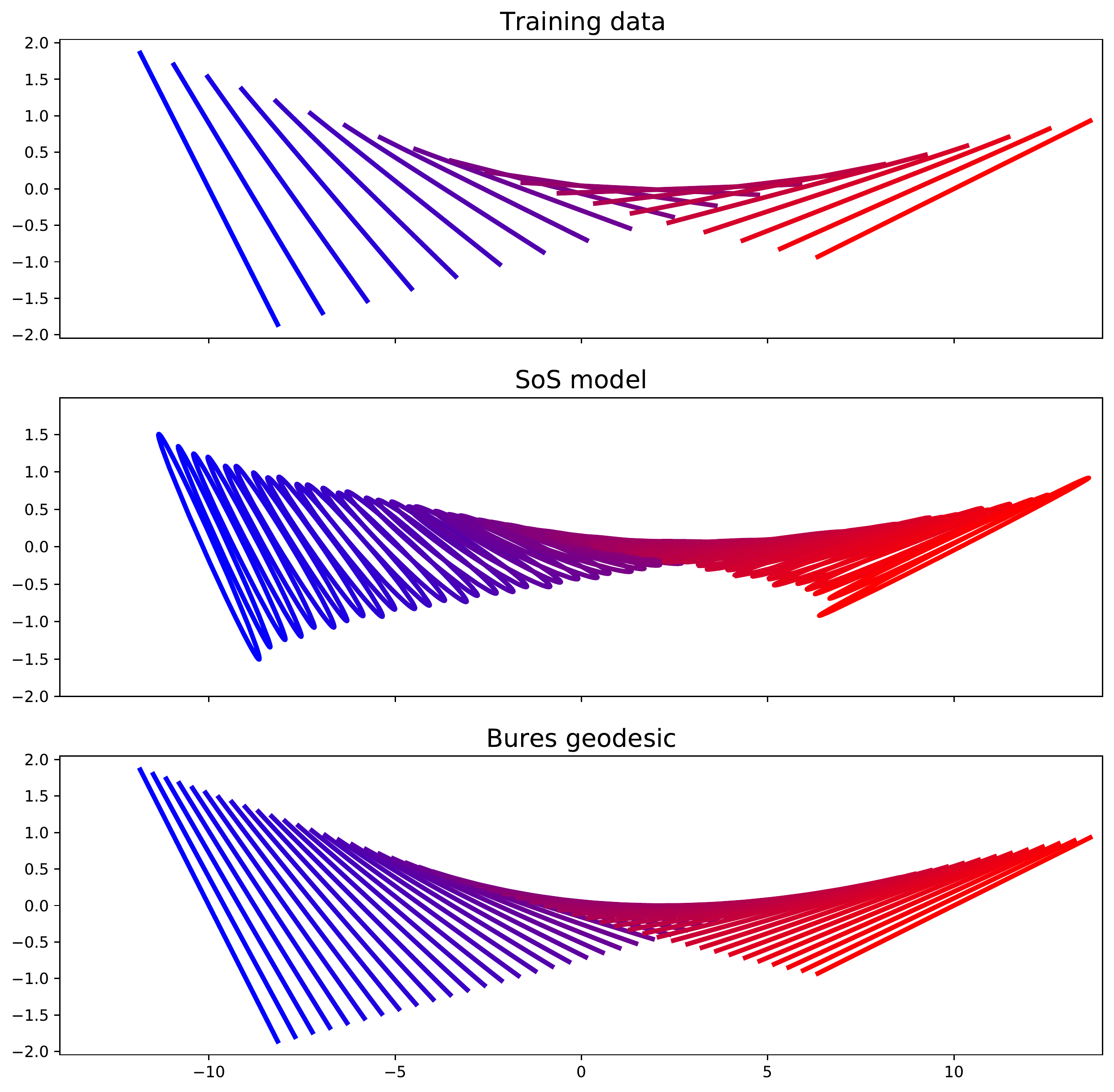}
    \caption{Interpolation of a rank $1$ Bures geodesic from $12$ data points (top). The matrices are represented as the level sets of (potentially degenerate) Gaussian distributions: $\{x : \Ncal(x; 0, \Sigma) \leq r\}$. We use the exponential kernel and select all hyperparameters using cross-validation. The learned model is represented in the middle figure, and the full geodesic is plotted in the bottom figure for comparison.}
    \label{fig:bures_geodesic_rank_1}
\end{figure}

\subsection{PSD-Valued Least Squares Regression}
We select hyperparameters $\lambda_1, \lambda_2$ and exponential kernel bandwidth $\sigma$ using leave-one-out cross-validation, over the following grid: $\la_1 \in \{10^{-n}, n = 0, ..., 8\}\cup \{0\}$, $\la_2 \in \{10^{-n}, n = 0, ..., 8\}$, $\sigma \in \{1, 0.1, 0.01\}$. In \Cref{fig:bures_geodesic}, the parameters selected by CV are $\la_1 = 0, \la_2 = 10^{-5}, \sigma = 0.1$ and in \Cref{fig:bures_geodesic_rank_1}, $\la_1 = 0, \la_2 = 10^{-7}, \sigma = 0.1$.

\subsection{Convex Regression}

We select hyperparameters $\rho, \lambda_2$ and Gaussian kernel bandwidth $\sigma^2$ using $5$-fold cross-validation, over the following grid: $\la_2 \in \{10^{-n}, n = 3, ..., 7\}$, $\sigma^2 \in \{1, 5, 10\}$. $\la_1$ is fixed to $0$. When $n > 25$, we perform Nyström approximation with rank $r = 25$. In \Cref{fig:MSE_convex_regression}, we display the average scores and their standard deviations on 10 independent sets of samples. In \Cref{fig:convex_regression}, the hyperparameters selected by CV are $\la_2 = 10^{-3}, \rho = 10^{-5}$ and $\sigma^2 = 10$.



\vskip 0.2in
\bibliography{references}



\end{document}